\theoremstyle{plain}
\newtheorem{theorem}{Theorem}[section]
\newtheorem{lemma}[theorem]{Lemma}
\theoremstyle{definition}
\theoremstyle{remark}
\title{Dynamic Model Predictive Shielding for \\ Provably Safe Reinforcement Learning}
\author{%
  Arko Banerjee \\
  The University of Texas at Austin \\
  \texttt{arko.banerjee@utexas.edu} 
  % examples of more authors
   \And
  Kia Rahmani \\
  The University of Texas at Austin \\
  % Address \\
  \texttt{kia@durable.ai} 
   \AND
  Joydeep Biswas \\
   The University of Texas at Austin \\
  % Address \\
   \texttt{joydeepb@utexas.edu} 
   \And
   Isil Dillig \\
  The University of Texas at Austin \\
  % Address \\
   \texttt{isil@utexas.edu} \\
  % \And
  % Coauthor \\
  % Affiliation \\
  % Address \\
  % \texttt{email} \\
}
\newcommand{\blue}[1]{\textcolor{blue}{{#1}}}
\newcommand{\red}[1]{\textcolor{red}{{#1}}}
\newcommand{\mypar}[1]{\vspace{5pt}
\noindent
{\bf \emph{#1.}}
 } 
\DeclareMathOperator*{\argmax}{arg\,max}
\newcommand{\tool}[0]{\texttt{DMPS}} %%% PLEASE REPLACE THIS WITH A PROPER NAME 
\newcommand{\learnedp}[0]{\hat \pi_\text{mps}}   
\newcommand{\learneddmps}[0]{\hat \pi_\text{dmps}}   
\newcommand{\backupp}[0]{\pi_\text{backup}}   
\newcommand{\recf}[0]{\mathtt{isRec}}
\newcommand{\mpsp}[0]{\pi_\text{mps}}  
\newcommand{\dmpsp}[0]{\pi_\text{dmps}}  
\newcommand{\planrec}{\ensuremath{\mathtt{planRec}}}
\newcommand{\reall}[0]{\mathbb{R}}   % reals
\newcommand{\booll}[0]{\mathbb{B}}   % Boolean
\newcommand{\ie}{\textit{i.e.,}}
\definecolor{rule_label_color}{rgb}{0, 0.3, 0.0}
\definecolor{demo_color}{rgb}{0.6, 0, 0.0}
\definecolor{stx_color}{rgb}{0, 0, 0.8}
\definecolor{cmt_color}{rgb}{0.1, 0.45, 0.1}
\definecolor{grey12}{rgb}{0.98,0.98,0.98}
\definecolor{grey11}{rgb}{0.96,0.96,0.96}
\definecolor{grey10}{rgb}{0.93,0.93,0.93}
\definecolor{grey9}{rgb}{0.9,0.9,0.9}
\definecolor{grey8}{rgb}{0.8,0.8,0.8}
\definecolor{grey7}{rgb}{0.7,0.7,0.7}
\definecolor{grey6}{rgb}{0.6,0.6,0.6}
\definecolor{grey5}{rgb}{0.5,0.5,0.5}
\definecolor{grey4}{rgb}{0.4,0.4,0.4}
\definecolor{grey3}{rgb}{0.3,0.3,0.3}
\definecolor{grey2}{rgb}{0.2,0.2,0.2}
\newcommand{\algcmt}[1]{\color{cmt_color}{\;\;\# \text{{#1}}} \color{black}}
\newcommand{\jb}[1]{\textcolor{red}{Joydeep: #1}}
\begin{document}

%\addtolength{\abovecaptionskip}{-1pt}
%\setlength{\textfloatsep}{20pt}
%\setlength{\floatsep}{7pt plus 1.0pt minus 2.0pt}
%\setlength{\dbltextfloatsep}{7pt plus 1.0pt minus 2.0pt}
%\setlength{\dblfloatsep}{7pt plus 1.0pt minus 2.0pt}

\maketitle

\begin{abstract}
%Safe Reinforcement Learning (SRL) is essential for safety-critical applications like autonomous driving. 
Among approaches for provably safe reinforcement learning, Model Predictive Shielding (MPS) has proven effective at complex tasks in continuous, high-dimensional state spaces, by leveraging a \textit{backup policy} to ensure safety when the learned policy attempts to take unsafe actions. 
However, while MPS can ensure safety both during and after training, it often hinders task progress due to the conservative and task-oblivious nature of backup policies.
This paper introduces Dynamic Model Predictive Shielding (\tool), which optimizes reinforcement learning objectives while maintaining provable safety. \tool\ employs a local planner to dynamically select safe recovery actions that maximize both short-term progress as well as long-term rewards. Crucially,  the planner and the neural policy  play a synergistic role in \tool. When planning recovery actions for ensuring safety,  the planner utilizes the neural policy to estimate long-term rewards, allowing it to \textit{observe} beyond its short-term planning horizon. 
Conversely, the neural policy under training learns from the recovery plans proposed by the planner, converging to policies that are both \textit{high-performing} and \emph{safe} in practice.
This approach  guarantees safety during and after training, with bounded recovery regret that decreases exponentially with planning horizon depth. Experimental results demonstrate that \tool\ converges to policies that rarely require shield interventions after training and achieve higher rewards compared to several state-of-the-art baselines.

\end{abstract}

%This not only yields more task-optimal trajectories but also accelerates the learning process.  We provide a theoretical guarantee that our method yields task-optimal trajectories during shield execution maximizes te discounted returns during shield execution. 
%In this paper, we introduce a novel MPS algorithm utilizing a \textit{dynamic planner}—a continuous Monte Carlo Tree Search (MCTS)—to identify \textit{optimal override actions} at run-time, enhancing task-optimal trajectories and accelerating the learning process. We provide a theoretical guarantee that our method maximizes the discounted returns during shield execution.
%We provide a theoretical guarantee that our method  yields task-optimal trajectories during shield execution, and demonstrate  that the trained policy effectively learns to avoid unsafe actions,  asymptotically reducing the costs associated with shield triggering and planning to zero. 
%We prove that our method effectively maximizes the discounted returns during shield execution. Additionally, we show that the trained policy effectively learns to avoid unsafe actions, which asymptotically reduces the costs associated with shield triggering and planning to zero.
%

\section{Introduction}
%PTAL
%\blue{Advancements in Reinforcement Learning (RL) methodologies have led to remarkable achievements across a spectrum of complex simulated tasks~\cite{silver2016mastering, vinyals2019grandmaster, berner2019dota}. However, the application of RL in real-world scenarios remains limited, primarily due to concerns regarding the safety of these methods. 
% 
%Therefore, the domain of Safe Reinforcement Learning (SRL)~\cite{garcia2015comprehensive, gu2022review} has emerged as an active area of research, aiming specifically to bridge this gap.
%}

%Reinforcement Learning (RL) is a promising paradigm for acquiring decision-making policies without the need for explicit supervision. %\id{This sentence sounds too generic for an ML conference} 
%Despite advancements in {deep} RL methodologies~\cite{arulkumaran2017deep}, which have led to remarkable achievements across a spectrum of complex simulated tasks~\cite{silver2016mastering, vinyals2019grandmaster, berner2019dota}, the application of RL in real-world scenarios remains limited, primarily due to concerns regarding the safety of these methods.  Consequently, the domain of Safe Reinforcement Learning (SRL)~\cite{garcia2015comprehensive, gu2022review} has emerged as an active area of research, specifically aiming to bridge this gap.

Safe Reinforcement Learning (SRL)~\cite{garcia2015comprehensive, gu2022review} aims to learn policies that adhere to important safety requirements and is essential for applying reinforcement learning to safety-critical applications, such as autonomous driving.
%
%Existing SRL approaches can be classified into two groups, depending on whether they provide statistical or worst-case safety guarantees. 
%
Some SRL approaches give \emph{statistical} guarantees and reduce safety violations by directly incorporating safety constraints into the learning objective~\cite{achiam2017constrained, qin2021density, miryoosefi2022simple, wagener2021safe}. 
In contrast, \textit{Provably Safe} RL (PSRL) methods aim to learn policies that \emph{never} violate safety and are  essential in high-stakes domains where even a single safety violation can be catastrophic~\cite{krasowski2023provably}.

A common approach to PSRL is \textit{shielding}~\cite{alshiekh2018safe}, where actions proposed by the policy %under training 
are monitored for potential safety violations. If necessary, these actions are overridden by a safe action that is guaranteed to retain the system's safety.
Model Predictive Shielding (MPS) is an emerging PSRL method that has proven   effective in high-dimensional, continuous state spaces with complex dynamics, surpassing previous shielding approaches in applicability and performance~\cite{bastani2021safe-b, zhang2019mamps}.  
At a high level, MPS methods  leverage the concept of \emph{recoverable states} that lead  to a safe equilibrium in $N$ time-steps by following a so-called \emph{backup policy}. MPS approaches dynamically check (via simulation) if a proposed action results in a recoverable state and follow  the backup policy if it does not. %guaranteeing that the agent invariably remains in recoverable states. 

However, an important issue with existing MPS approaches is that the backup policies are not necessarily aligned with the primary task's objectives and often propose actions that, while safe, impede progress towards task completion---even when alternative safe actions are available that do not obstruct progress in the task. 
Intuitively, this occurs because the backup policy is designed with the sole objective of driving the agent to the closest equilibrium point rather than making  task-specific progress. 
For instance, in an autonomous navigation task,
if the trained policy suggests an action that could result in a collision, MPS methods revert to a backup policy that simply instructs halting, rather than trying to find a more optimal recovery approach, such as finding a route that {steers around the obstacle}.
As a result, the recovery phase of MPS frequently inhibits the learning process and incurs high \emph{recovery regret}, meaning that there is a large discrepancy between the return from executed recovery actions and the maximum possible returns from the same states.

Motivated by this problem, we propose a novel PSRL approach called \textit{Dynamic Model Predictive Shielding} (\tool), which aims to optimize RL objectives while ensuring provable safety under complex  dynamics.
The key idea behind \tool\ is to employ a local planner~\cite{moerland2020framework,den2022planning}  to dynamically identify a safe recovery action that optimizes {finite-horizon}
progress towards the goal. %\ie{} the expected discounted return over a predetermined number of future steps. %\red{[todo: mention inclusion of Q function at the edge of planning horizon]}
Although the computational overhead of planning grows exponentially with the depth of the horizon, in practice, a reasonably small horizon is sufficient for allowing the agent to recover from  unsafe regions.

In \tool, the planner and the neural policy under training play a synergistic role. First, when using the planner to recover from potentially unsafe regions,  our optimization objective not only uses the finite-horizon reward but also uses the Q-function learned by the neural policy. The integration of the  Q-function into the optimization objective allows the planner to take into account \emph{long-term reward} beyond the short planning horizon needed for recovering safety. As a result, the planner benefits from the neural policy in finding recovery actions that optimize for long-term reward. Conversely, the integration of the planner into the training loop allows the neural policy to learn from actions suggested by the planner: Because the planner dynamically figures out how to avoid unsafe regions while making task-specific progress, the policy under training also learns how to avoid unsafe regions in a \emph{smart} way, rather than taking overly conservative actions. 
From a theoretical perspective, 
 \tool\ can guarantee provable safety both during and after training. 
We also provide a theoretical guarantee that the regret from recovery trajectories identified by \tool\ is bounded and that it decays at an exponential rate with respect to the depth of the planning horizon. These properties allow \tool\ to learn policies that are both high-performing and safe.

We have implemented the \tool\ algorithm in an open-source library and evaluated it on a suite of 13 representative benchmarks. 
We experimentally compare our approach against Constrained Policy Optimization (CPO)~\cite{achiam2017constrained}, PPO-Lagrangian (PPO-Lag)~\cite{ppo,lagrangian}, Twin Delayed Deep Deterministic Policy Gradient (TD3)~\cite{fujimoto2018addressing} and state-of-the-art PSRL approaches, 
MPS~\cite{bastani2021safe-b} and REVEL~\cite{anderson2020neurosymbolic}. 
Our results indicate that policies learned by \tool\ outperform all baselines in terms of total episodic return and achieve safety with minimal shield invocations. Specifically, \tool\ invokes the shield $76\%$ less frequently compared to the next best baseline, MPS, and achieves $29\%$ higher returns after convergence.

%outperforms all baselines both in terms of safety  as well as performance. NEED TO SAY MORE

%. We also demonstrate that the trained policy effectively learns to avoid unsafe actions, asymptotically diminishing the costs associated with shield triggering and planning. 

%Additionally, we present a proof that optimal recovery enables the trained policy to learn how to avoid unsafe situations, thereby consistently reducing the costs associated with shield triggering and MCTS execution over time.

To summarize, the contributions of this paper are as follows:
First, we introduce the \tool\ algorithm, a novel integration of RL and planning, that aims to address the limitations of model predictive shielding. 
Second, we provide a theoretical analysis of our algorithm and prove that recovery regret decreases exponentially with planning horizon depth. 
Lastly, we present a detailed empirical evaluation of our approach on a suite of PSRL benchmarks, demonstrating superior performance compared to several state-of-the-art baselines. 

\vspace{-1mm}
\section{Related Work}
\vspace{-1mm}
\mypar{PSRL}
There is a  growing body of work addressing safety issues in RL~\cite{garcia2015comprehensive,gu2022review,zhao2023state,xiong2024provably,odriozola2023shielded, shperberg2022rule,thomas2021safe}.
Our approach falls into the category of {provably safe} RL (PSRL) techniques~\cite{krasowski2023provably,xiong2024provably}, treating safety as a hard constraint that must never be violated. This is in contrast to \emph{statistically safe} RL techniques, which provide only statistical bounds on the system's safety by constraining the training objectives~\cite{achiam2017constrained, wen2018constrained, alur2023policy, liu2020constrained,satija2020constrained}.
%
%For instance, Constrained Policy Optimization~\cite{achiam2017constrained} is one of the pioneering approaches that encodes safety constraints through cost functions to reduce safety violations during training.
%
These soft guarantees, however, are insufficient for domains like autonomous driving, where each failure can be catastrophic. %In contrast, our approach with \tool\ provides a hard guarantee of safety, completely eliminating violations.
%\
Existing PSRL methods can  be categorized based on whether they guarantee safety \textit{throughout the training phase}~\cite{bacci2021verifying, fulton2018safe, anderson2020neurosymbolic, anderson2023guiding} or only \textit{post-training} upon deployment~\cite{bastani2018verifiable, schmidt2021can, zhu2019inductive, verma2018programmatically, ivanov2019verisig}. Our approach falls into the former category and employs a  shielding mechanism that ensures safety both during  training and deployment.

\mypar{Safety Shielding}
Many  PSRL works use \textit{safety shields} ~\cite{alshiekh2018safe, carr2023safe, konighofer2023online, jansen2018shielded, 10.1007/978-3-030-61362-4_16, anderson2020neurosymbolic, yang2023safe, thumm2023reducing}. 
These methods typically synthesize a domain-specific policy ahead of time and use it to detect and substitute  unsafe actions at runtime. However, traditional shielding methods tend to be computationally intractable, leading to limited usability.
For instance, \cite{alshiekh2018safe} presents one of the early works in shielding, where safety constraints are specified in Linear Temporal Logic (LTL), and a verified reactive controller is synthesized to act as the safety shield at runtime. However, this approach is restricted to discrete state and action spaces, due to the complexity of shield synthesis.
{Another example is \cite{thumm2023reducing}, which enhances PSRL performance by substituting shield-triggering actions with  verified actions and reducing the frequency of shield invocations. 
%While our approach also minimizes shield invocations, it additionally incorporates a planning phase using a local planner to optimize task completion.
}
%A recent shielding approach is \cite{thumm2023reducing}  which achieves improved performance by reducing the frequency with which the shielding mechanism is invoked.  
%
% While our approach also trains a policy that almost however it incorporates a planning phase to also optimize the progress towards task completion using a local planner. In contrast \cite{thumm2023reducing} only focuses on substituting actions that trigger the shield with actions from a set of verified actions via random sampling or projection to the closest verified action in the Euclidean space. 
In Model Predictive Shielding (MPS)~\cite{bastani2021safe-b}, a backup policy dynamically assesses the states from which safety can be reinstated and, if necessary, proposes substitute actions.
Unlike pre-computed shields, MPS can handle high-dimensional, non-linear systems with continuous states and actions, without incurring an exponential runtime overhead~\cite{bastani2021safe-a}. MPS has also been successfully applied to stochastic dynamics systems~\cite{li2020robust} and multi-agent environments~\cite{zhang2019mamps}. 
However, a significant limitation of current MPS methods 
is the separation of safety considerations from the RL objectives, which hinders learning when employing the recovery policy.

\mypar{RL and Planning}
Planning has traditionally been considered a viable complement to reinforcement learning, combining real-time operations in both the actual world and the learned environment model~\cite{sutton1990dyna, lin1992self, moerland2023model, schrittwieser2021online, srouji2023safer}. 
With recent developments in deep model-based RL~\cite{moerland2023model, efroni2019tight, hamrick2020role, zhao2021consciousness, Wang2020Exploring} and the success of planning algorithms in discrete~\cite{silver2016mastering, schrittwieser2020mastering} and continuous spaces~\cite{hubert2021learning, kujanpaa2022continuous, kim2020monte, yee2016monte, rajamaki2018continuous}, the prospect of combining these methods holds great promise for solving challenging real-world problems.
%
%Specifically, sample-based methods like MCTS~\cite{coulom2006efficient, browne2012survey} offer a principled approach for balancing exploration and exploitation in complex, high-dimensional environments through statistical sampling. 
%\red{[<- maybe we don't need this.]}
%
Integrating planning within RL has also been applied to safety measures~\cite{leurent:tel-03035705, huang2023safe, thomas2021safe, 9233522, wang2024shielded, leung2020infusing, 9833266, rong2020safe}. For instance, Safe Model-Based Policy Optimization~\cite{thomas2021safe} minimizes safety violations by detecting non-recoverable states through forward planning using an accurate dynamics model of the system. However, it only employs planning to identify unsafe states, not to find optimal recovery paths from such situations.
%\red{[<- double check this.]} 
To the best of our knowledge, \tool\ is the first method that leverages dynamic planning for optimal recovery, within the framework of model predictive shielding.

{
%\item A0c: Alpha zero in continuous action space~\cite{moerland2018a0c}

%\item \cite{moerland2023model} Citation for model-based RL. No need to review. 

%\item Tight Regret Bounds for Model-Based Reinforcement Learning with Greedy Policies~\cite{efroni2019tight}:
%\subitem - In this work, we show that model-based algorithms that use 1-step planning can achieve the same performance as algorithms that perform full-planning
% \subitem - Real-Time Dynamic-Programming (RTDP)  is a well-known algorithm that solves an MDP when a model of the environment is given

%\item On the role of planning in model-based deep reinforcement learning~\cite{hamrick2020role} 
%\subitem - Model-based planning is often thought to be necessary for deep, careful reasoning and generalization in artificial agents.
%\subitem - Our results suggests that Using shallow trees with simple Monte-Carlo rollouts is as performant as more complex methods, except in the most difficult reasoning tasks. 
}

\mypar{Classical Control} There is a long line of classical control approaches to the problem of safe navigation ~\cite{cbf1, cbf2, cbfjac3, cbf4, cbf5, jac6}. One common approach, for instance, is the use of \textit{control barrier functions} (CBFs) ~\cite{cbfintro1, cbfintro2}. CBFs have been used across many domains in robotics to achieve safety with high confidence ~\cite{cbfapp1, cbfapp2, cbfapp3, cbfapp4}. While useful, these approaches tend to make assumptions about the environment (e.g. differentiability, closed-form access to dynamics) that cannot easily be reconciled with the highly general RL framing of this work.

\section{Preliminaries}

\mypar{MDP} 
We formulate our problem using a standard Markov Decision Process  (MDP) $\mathcal{M}=\langle\mathcal{S}, \mathcal{S}_U, \mathcal{S}_0, \mathcal{A},\mathcal{T}, \mathcal{R}, \gamma \rangle$, where $\mathcal{S}\subseteq\mathbb{R}^n$ is a set of states, $\mathcal{S}_U\subset \mathcal{S}$ is a set of unsafe states, $\mathcal{S}_0\subset\mathcal{S}$ are the initial states, $\mathcal{A}\subseteq \mathbb{R}^m$ is a continuous action space, $\mathcal{T}:\mathcal{S}\times\mathcal{A}\to\mathcal{S}$ is a deterministic transition function, $\mathcal{R}:\mathcal{S}\times\mathcal{A}\to\mathbb{R}$ is a deterministic reward function, and $\gamma$ is the discount factor.
We define $\Pi$ to be the set of all agent {policies}, where each policy $\pi\in\Pi $ is a function from environment states to actions, \ie{} $\pi:\mathcal{S} \to \mathcal{A}$. 
%
%We use the notation $\pi(s)$ as a shorthand for $\mathcal{T}(s, \pi(s))$.
For any set $S\subseteq\mathcal{S}$, the set of \textit{reachable states} from $S$ in $i$ steps under policy $\pi$, is denoted by $\mathtt{reach}_i(\pi,S)$, and is defined recursively as follows:
$
\mathtt{reach}_1(\pi,S) \doteq 
\{\mathcal{T}(s,\pi(s)) \;|\; s\in S \}
$ and 
$
\mathtt{reach}_{i+1}(\pi,S) \doteq \mathtt{reach}_1(\pi, \mathtt{reach}_i(\pi, S))
$. 
The set of \emph{all} reachable states under a policy $\pi$ is $\mathtt{reach}(\pi)\doteq \bigcup_{1\leq i} \mathtt{reach}_i(\pi,\mathcal{S}_0)$.

\vspace{-0.1in}
\mypar{PSRL}
The standard objective in RL is to find a policy $\pi$ that maximizes a performance measure, $J(\pi)$, typically defined as the expected infinite-horizon discounted total return. 
%, \ie{} 
%$J(\pi)\doteq \mathbb{E}_{s_0 \in \mathcal{S}_0,\;\forall_{i\geq 0}.\;s_{i+1}=\mathcal{T}(s_i,\pi(s_i))}[\sum_{t=0}^\infty \gamma^t \mathcal{R}(s_t,\pi(s_t))].$
In provably safe reinforcement learning (PSRL), the aim is to identify a \emph{safe} policy that maximizes the above measure. The set of {safe} policies is denoted by $\Pi_{\emph{safe}}\subseteq\Pi$ and consists of policies that never reach an unsafe state, \ie{}
$\pi\in \Pi_{\emph{safe}} \Leftrightarrow \mathtt{reach}(\pi)\cap \mathcal{S}_U = \emptyset$. 
Therefore, the objective of PSRL is to find a policy $\pi^*_\emph{safe}\doteq\argmax_{\pi\in\Pi_{\emph{safe}}} J(\pi)$.

\section{Model Predictive Shielding} 
\label{sec:mps}
Under the Model Predictive Shielding (MPS) framework~\cite{bastani2021safe-b}, a safe policy, $\pi_\text{mps}$, is constructed by integrating two sub-policies: a \textit{learned policy}, $\learnedp$, and a \textit{backup policy}, $\backupp$. Depending on the current state of the system, control of the agent's actions is delegated to one of these two policies.
The learned policy is typically implemented as a neural network that is trained using standard deep RL techniques to optimize \( J(\cdot) \). However, this policy may violate safety during training or deployment, \ie{} \( \learnedp \notin \Pi_{\text{safe}} \).
On the other hand, the backup policy, \( \backupp \), is specifically designed for safety and is invoked to substitute potentially unsafe actions proposed by the learned policy.

Due to domain-specific constraints on system transitions, the backup policy is effective only from a certain subset of states, called \textit{recoverable} states.
For instance, given the deceleration limits of an agent, a backup policy that instructs the agent to halt can only avoid collisions from states where there is sufficient distance between the agent and the obstacle.
At a high level, the recoverability of a given state \(s\) in MPS is determined by a function \(\recf: \mathcal{S} \times \Pi \rightarrow \booll\). This function
%, given a state \(s\) and a backup policy \(\backupp\), 
performs an \(N\)-step forward simulation of \(\backupp\) from \(s\) and checks whether a safety equilibrium can be established.
%\footnote{Further details on how recoverability of a state is established in MPS can be found in \cite{bastani2021safe-b}.}

%%%%%%%%% OVERVIEW FIGURE
\begin{figure}[t]
    \centering
        \includegraphics[height=24mm, keepaspectratio]{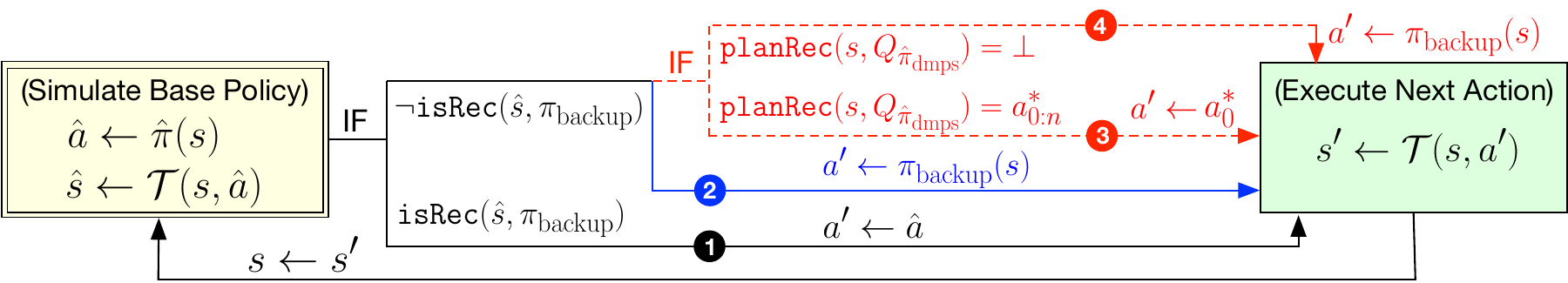}
        \caption{Overview of an execution cycle in MPS (\ding{202}, \blue{\ding{203}}) and \tool\ (\ding{202}, \blue{\ding{203}}, \red{\ding{204}}, \red{\ding{205}}).
        }
        \label{fig:outline}
\end{figure}

\autoref{fig:outline} gives an overview of the control delegation mechanism in $\mpsp$.\footnote{Arrows \red{\ding{204}} and \red{\ding{205}} are used for planner-guided recovery, and are explained more in ~\autoref{sec:drp}.}
Given state $s \in \mathcal{S}$, the agent first forecasts the next state ${\hat s}$ that would be reached by following the learned policy (double-bordered, yellow box). If 
$\recf(\hat s,\backupp)$ is true, 
then $\mpsp$ simply returns the same action as $\learnedp$, as  marked by \ding{202}. Otherwise, if $\recf(\hat s,\backupp)$ is false, $\pi_{\text{mps}}$ delegates control to the backup policy $\backupp$, as indicated by \blue{\ding{203}}.
The selected  action, $a'$, is then executed in the environment (single-bordered, green box), resulting in a new state $s'$. From this state, $\pi_{\text{mps}}$ is executed again, and the process repeats.

The safety of $\mpsp$ relies on the fact that all recoverable states are safe
%\ie{} 
%$\forall_{s\in\mathcal S}.\;\recf(s,\backupp)\Rightarrow s\not\in\mathcal S_{U}$,
%$\mathcal{S}_{{{rec}}}^{\pi_r} \cap \mathcal{S}_{U} = \emptyset$, 
and that the backup policy is \emph{closed} under the set of recoverable states. %\ie{} 
%$\mathtt{reach}_1(\pi_r, \mathcal{S}_{rec}^{\pi_r}) \subseteq \mathcal{S}_{rec}^{\pi_r}$
%$\forall_{s\in\mathcal S,\, s'=\mathcal T(s,\backupp(s))}\recf(s,\backupp)\Rightarrow \recf(s',\backupp)$.
%
Thus, we can inductively show that the agent  remains in safe and recoverable states; thus $\mpsp \in \Pi_{{\emph{safe}}}$.

%%%%%%%%%%%%%%%%%%%%%%%%%
\begin{wrapfigure}[16]{r}{0.52\textwidth}
\centering\;\;\;\includegraphics[width=0.5\textwidth, keepaspectratio]{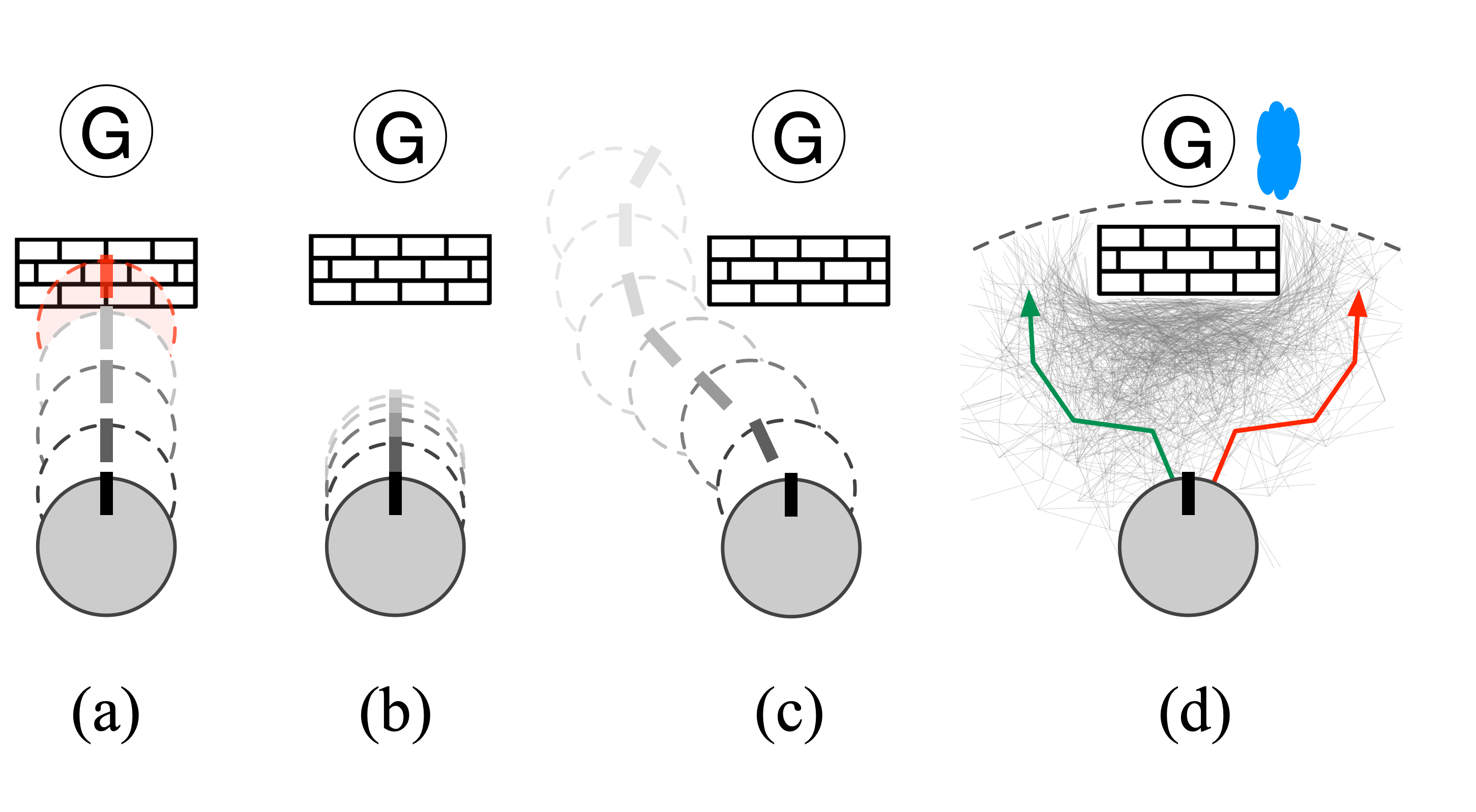}
    \caption{%An agent attempting to reach a goal position without colliding with an obstacle. 
    (a) Unsafe trajectory leading to a collision. (b) Safe but sub-optimal trajectory. (c) Optimal and safe trajectory. (d) An instance of the planning phase.
    %\red{add another example (with an angled wall) to justify why local greedy action selection is not enough and a deep planner is needed.}
    }
    \label{fig:exmaples_wrap}
\end{wrapfigure}

%%%%%%%%%%%%%%%%%%%%%%%%% WRAP
%%%%%%%EXAMPLE PARAGRAPH
\mypar{Example}

Consider an agent on a 2D plane aiming to reach a  goal  while avoiding static obstacles.
\autoref{fig:exmaples_wrap}~(a) presents an unsafe trajectory proposed by the learned policy. \autoref{fig:exmaples_wrap}~(b) presents the trajectory under MPS. As discussed above, $\mpsp$ delegates control to the backup policy in such potentially unsafe situations, which corresponds here to applying maximum deceleration away from the obstacle. This causes the agent to come to a halt, which is suboptimal. \autoref{fig:exmaples_wrap}~(c) depicts a DMPS trajectory. \autoref{fig:exmaples_wrap}~(d) depicts DMPS planning in an environment containing a static obstacle and a low-reward puddle region. The latter two are explained in \autoref{sec:drp}.

\subsection{Recovery Regret} 
While MPS guarantees worst-case safety, shield interventions can hinder the training of $\learnedp$ and compromise  overall efficacy. To formalize this limitation, we introduce the concept of \emph{Recovery Regret}, which measures the expected performance gap between $\backupp$ and the optimal policy.
%While MPS is a robust framework for achieving PSRL, it does not adequately account for the performance of the backup policy in terms of expected returns from states where control is delegated to $\backupp$. This oversight hinders the training of $\learnedp$ and significantly compromises the overall efficacy of MPS. To formalize this limitation, we introduce the concept of \emph{Recovery Regret}, which measures the expected performance gap between a backup policy $\backupp$ and an optimal policy.
To this end, we first introduce a helper function $\mathbb{I}_\text{backup}:\mathcal{S}\rightarrow\{0,1\}$, which, given a state $s$, indicates whether control in $s$ is delegated to the backup policy, \ie{}
$\mathbb{I}_\text{backup}(s)=1$
if $\neg\recf(\mathcal{T}(s,{\learnedp}(s),\backupp))$ and $\mathbb{I}_\text{backup}(s)=0$ otherwise.
In any state $s$ where control is delegated to the backup policy, the \emph{optimal value} $V^*(s)$ represents the maximum expected reward that can be achieved from $s$, and the \textit{optimal action-value} $Q^*(s,\backupp(s))$ represents the value of executing $\backupp$ in $s$ and thereafter following an optimal policy. Thus, we can define Recovery Regret ($RR$) as the expected discrepancy between these two values 
across states where $\backupp$ is invoked, \ie{}
\[RR (\backupp, \mpsp)\doteq \mathbb{E}_{s \sim \rho^{\mpsp}
%,\;r\neg\recf(\mathcal T(s,\learnedp(s)),\backupp)
}
 \Bigl[
\mathbb{I}_\text{backup}(s)
\cdot
 \Bigl[
V^*(s)  - Q^*(s,\backupp(s))
 \Bigr]
\Bigr]
\]
In the above formula, 
$\rho^{\pi_{\text{mps}}}$ is the \textit{discounted state visitation distribution} associated with $\pi_{\text{mps}}$, \ie{}
\(
\rho^{\pi_{\text{mps}}}(s) \doteq  (1-\gamma) \sum_{t=0}^\infty \gamma^t Pr(s_t\!=\!s \;|\; s_0 \in\mathcal{S}_0,\;\forall_{i\geq 0}.\;s_{i+1}\!=\!\mathcal{T}(s_i,\mpsp(s_i)))
\). This term assists in quantifying how frequently each state is visited when measuring the regret of a backup policy.

%The notion of recovery regret quantifies the loss in infinite-horizon returns due to suboptimal decisions during recovery.
%

In existing MPS approaches, the  misalignment between backup policies and the optimal policy can result in substantial recovery regrets.
% 
%This often implies that the agent cannot gather any experience reinforcing optimal actions \id{what does "reinforce optimal actions" mean?} in states where control is delegated to the backup policy, which can significantly hinder the learning process of  $\learnedp$. 
%
For example, %\autoref{subfig:example_around} 
\autoref{fig:exmaples_wrap}~(c) 
illustrates the optimal sequence of actions in the previously discussed scenario, where the agent avoids a collision by maneuvering around the obstacle.
However, in MPS, the training process lacks mechanisms to teach such optimal behavior to $\learnedp$ and only teaches overly cautious and poorly rewarded actions depicted in %\autoref{subfig:example_brake}.
\autoref{fig:exmaples_wrap}~(b).
%where the agent fails to accomplish the specified task.

%In existing MPS approaches, recovery policies are not necessarily aligned with the optimal policy, leading to recovery regret that does not generally diminish as learning progresses.
%
%This misalignment significantly hinders the learning process of the base policy, as the agent gathers no experience reinforcing optimal actions in states where control is delegated to the recovery policy.
%

%

 %We address this shortcoming in the next section by introducing an enhanced MPS algorithm designed to ensure optimal recovery and diminishing recovery regret over time.

% FIGURE WITH 4 EXAMPLE SUBFIGS
%\input{example_figs}

% 
%A planner is an algorithm that performs a finite-horizon lookahead search, evaluating possible future states over a predetermined number of steps ($n \in \mathbb{N}$), and identifies a sequence of actions that optimize a specified performance metric~\cite{schrittwieser2021online, walsh2010integrating}.

%%%%% DMPS SECTION
\section{Dynamic Model Predictive Shielding }
\label{sec:drp}

In this section, we introduce the Dynamic Model Predictive Shielding (\tool) framework that builds on top of MPS but aims to address its shortcomings.
Similar to the control delegation mechanism in MPS, the policy $\dmpsp$ initially attempts to select its next action using a learned policy $\learneddmps$. If the action proposed by $\learneddmps$ leads to a state that is not recoverable using $\backupp$, an alternative safe action is executed instead, as in standard MPS. However, rather than using the task-oblivious policy $\backupp$, the backup action is selected using a \emph{dynamic backup policy} $\backupp^*$.
More formally,
\begin{align}
    \dmpsp(s) = 
\begin{cases} 
\learneddmps(s) & \text{if } 
\recf(\mathcal{T}(s,\learneddmps(s)), \backupp) 
%\in \mathcal{S}_{{rec}}^{\pi_r} 
\\
\backupp^*(s) & \text{otherwise} 
\end{cases}
\end{align}

The core innovation behind \tool\ is the dynamic backup policy $\backupp^*$, which is realized using a \textit{local planner}~\cite{schrittwieser2021online, walsh2010integrating}, denoted as a function $\planrec()$. At a high level, $\planrec$  performs a finite-horizon lookahead search over a predetermined number of steps ($n \in \mathbb{N}$) and identifies a sequence of backup actions optimizing the expected returns during recovery.
Specifically, the function \planrec{} takes an initial state $s_0$ and a state-action value function $Q : \mathcal{S} \times \mathcal{A} \rightarrow \mathbb{R}$, and returns a sequence of task-optimal actions $a^*_{0:n} \in \mathcal{A}^{n+1}$ defined as follows:
\begin{flalign}
  \begin{split}
& 
%a^*_{0:n} 
\planrec(s_0,Q)
\doteq \argmax_{a_{0:n} \in \mathcal{A}^{n+1}} 
\Bigl[(\sum_{i=0}^{n-1} \gamma^{i} \cdot \mathcal{R}(s_i,a_i)) + \gamma^{n} \cdot Q(s_n,a_n)\Bigr],
\\
& \text{\hspace{28mm} such that, \;}
\forall_{0\leq i< n}[s_{i+1} = \mathcal{T}(s_i, a_i)]
\text{\; and \;\,}
\forall_{0\leq i\leq n}
%[s_i \in \mathcal{S}_{{rec}}^{\pi_r}]
\recf(s_i,\backupp).
  \end{split}
  \label{eq:planrecobj}
\end{flalign}
Crucially, the recovery plan is required to only lead to \emph{recoverable} states within the finite planning horizon (\ie{} $\recf(s_i, \backupp)$). Note that the backup policy $\backupp$ is  used by the planner in deciding the  recoverability of a state. Beyond satisfying the hard safety constraint, the planner is also required to optimize the objective function shown in  \autoref{eq:planrecobj}. Importantly, this objective accounts for both the immediate rewards within the planning horizon, $\mathcal{R}(s_i, a_i)$, \emph{as well as} the estimated long-term value from the terminal state $s_n$ beyond the planning horizon, as defined by $Q$. As a result, the planner benefits from the long-term reward estimates learned by the neural policy $\learneddmps$.

\autoref{fig:exmaples_wrap}~(d)  illustrates an agent planning a recovery path around an obstacle.
The actions considered by $\mathtt{planRec}$ are represented by gray edges.
%, forming a tree rooted at the agent's current position.
%
Two optimal paths on this tree, depicted in green and red, yield similar rewards due to the symmetric nature of the reward function relative to the goal position.
The planner selects the green path on the left as its final choice due to a water puddle on the right side of the goal, which, if traversed, would result in lower returns.
Since the puddle lies outside of the planning horizon, %(indicated by the curved dashed line), 
the decision to opt for the green path is  informed by access to the Q-function.
By executing the first action on the green path and  repeating dynamic recovery, the agent can demonstrate the desired behavior shown in
\autoref{fig:exmaples_wrap}~(c).
%\autoref{subfig:example_around}.

%When attempting recovery from a state $s$, the dynamic backup policy $\backupp^*$ invokes \planrec{} with $s$ as the initial state, and the Q-function of the learned policy, $Q_{\learneddmps}$, as the estimate for state-action value.
%
%The utilization of the learned value function in the planner ---especially as the training of the learned policy progresses and $Q_{\learneddmps}$ becomes more accurate--- enables effective and optimal recovery while maintaining a computationally tractable planning horizon.

Given the  plan returned by \planrec, the dynamic backup policy $\backupp^*$ returns the first action in the plan $a^*_0$ as its  output.  However, \planrec{} could, in theory, fail to find a safe and optimal plan, even though one exists. 
%For example, if \planrec{} is realized through a probabilistic planning algorithm~\cite{svestka1998}, it may  fail to return a safe plan within its computational resources .
%
%However, if \planrec{} implements a probabilistic planning algorithm~\cite{svestka1998}, it may occasionally fail to return a valid plan within the finite time and computational resources allocated. 
In such cases, \planrec{} returns a special symbol $\bot$, and $\backupp^*$ reverts to the task-oblivious backup policy $\backupp$. Thus, we have:
%
% More formally
\begin{align}
\backupp^*(s) = 
\begin{cases} 
a^*_0 & \text{If } \planrec{}(s,Q_{\learneddmps}) = a^*_{0:n} \\
\backupp(s) & \text{If } \planrec{}(s,Q_{\learneddmps}) = \bot %\text{if }\; \planrec{}(s,Q) = \bot
\end{cases}
\label{eq:recovery_policy}
\end{align}

An outline of $\dmpsp$ is shown in \autoref{fig:outline} where the control delegation mechanism is 
represented by 
the red dashed arrows (marked as \red{\ding{204}} and \red{\ding{205}}) instead of the blue solid arrow (marked as \blue{\ding{203}}).

\subsection{Planning Optimal Recovery}

The specific choice of the planning algorithm to solve \autoref{eq:planrecobj} depends on the application domain.
However, \tool\ requires two main properties to be satisfied by the planner: \textit{probabilistic completeness} and \textit{asymptotic optimality}.
The former property states that the planner will eventually find a solution if one exists, while the latter states that the found plan converges to the optimal solution as the allocated resources increase.
There exist several state-of-the-art planners that satisfy both of these requirements, including sampling-based planners such as RRT*~\cite{karaman2011sampling} and MCTS~\cite{coulom2006efficient}, which have been shown to be particularly effective at finding high-quality solutions in high-dimensional continuous search spaces~\cite{hubert2021learning, kujanpaa2022continuous, kim2020monte}. These methods construct probabilistic roadmaps or search trees and deal with the exponential growth in the search space by incrementally exploring and expanding only the most promising nodes based on heuristics or random sampling.
Given an implementation of \planrec{} that satisfies the aforementioned requirements, a significant outcome in \tool\ is that,  as the depth of the planning horizon ($n$) increases, the expected return from %the sequence of recovery actions generated by
$\backupp^*$ approaches the globally optimal value. The following theorem states the optimality of recovery in $\pi_{\text{dmps}}$, in terms of exponentially diminishing recovery regret of $\backupp^*$ as $n$ increases.

\begin{theorem}\footnote{Extended theorem statement and proof are provided in Appendix~\ref{app:proof}.}
\label{th:regret}
\textbf{(Simplified)} Suppose the use of a probabilistically complete and asymptotically optimal planner with planning horizon $n$ and sampling limit $m.$ Under mild assumptions of the MDP, the recovery regret of policy $\pi^*_{\textnormal{backup}}$ used in $\pi_{\textnormal{dmps}}$ is almost surely 
bounded by order $\gamma^n$ as $m$ goes to infinity. In other words, with probability 1,
%where $n$ is the depth of planning horizon, \ie{}
\[
\lim_{m \to \infty} RR(\backupp^*, \pi_{\textnormal{dmps}}) = \mathcal{O}(\gamma^n).
\]

\end{theorem}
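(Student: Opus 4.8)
The plan is to bound the \emph{pointwise} recovery regret $V^*(s) - Q^*(s, \backupp^*(s))$ by a quantity of order $\gamma^n$ at every recoverable state $s$ where the shield triggers, and then push this bound through the expectation defining $RR$. Since $\mathbb{I}_\text{backup}(s) \in \{0,1\}$ and $\rho^{\pi_{\textnormal{dmps}}}$ is a probability distribution, a uniform pointwise bound $V^*(s) - Q^*(s, \backupp^*(s)) \le c\,\gamma^n$ immediately yields $RR(\backupp^*, \pi_{\textnormal{dmps}}) \le c\,\gamma^n$, so the crux is the single-state estimate.

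Fix a recoverable state $s_0$ and let $\epsilon \doteq \|Q_{\learneddmps} - Q^*\|_\infty$, which is finite under the mild assumption of bounded rewards (whence $\|Q^*\|_\infty \le R_{\max}/(1-\gamma)$ and the learned critic is likewise bounded). I would first observe that, for \emph{any} recoverable action sequence $a_{0:n}$, the planner's objective from \autoref{eq:planrecobj} instantiated with $Q_{\learneddmps}$ differs from the idealized objective $J^{Q^*}_n(a_{0:n})$ (the same expression with $Q^*$ in place of $Q_{\learneddmps}$) only through the terminal bootstrap term, so $|J^{Q_{\learneddmps}}_n(a_{0:n}) - J^{Q^*}_n(a_{0:n})| = \gamma^n|Q_{\learneddmps}(s_n,a_n) - Q^*(s_n,a_n)| \le \gamma^n \epsilon$. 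This is precisely the step that injects the factor $\gamma^n$ into the bound, since the horizon-internal rewards are identical under either critic.

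By probabilistic completeness, as $m \to \infty$ the planner almost surely returns a feasible plan whenever one exists; and because $s_0$ is recoverable, following $\backupp$ itself certifies a feasible recoverable sequence, so $\planrec$ does not fall back to $\bot$ in the limit and $\backupp^*(s_0) = a^{\textnormal{plan}}_0$. By asymptotic optimality, the returned plan $a^{\textnormal{plan}}_{0:n}$ almost surely converges to the maximizer of $J^{Q_{\learneddmps}}_n$ over recoverable sequences. Combining this with the uniform $\gamma^n\epsilon$ gap through the standard ``optimize a surrogate objective'' argument (namely $J^{Q^*}_n(a^{\textnormal{opt}}) \le J^{Q_{\learneddmps}}_n(a^{\textnormal{opt}}) + \gamma^n\epsilon \le J^{Q_{\learneddmps}}_n(a^{\textnormal{plan}}) + \gamma^n\epsilon \le J^{Q^*}_n(a^{\textnormal{plan}}) + 2\gamma^n\epsilon$) gives $J^{Q^*}_n(a^{\textnormal{opt}}_{0:n}) - J^{Q^*}_n(a^{\textnormal{plan}}_{0:n}) \le 2\gamma^n \epsilon$ almost surely in the limit, where $a^{\textnormal{opt}}_{0:n}$ maximizes $J^{Q^*}_n$ over recoverable sequences. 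Finally, unrolling $Q^*(s_n, a_n) = \mathcal{R}(s_n,a_n) + \gamma V^*(\mathcal{T}(s_n,a_n))$ shows two facts: first, $J^{Q^*}_n(a^{\textnormal{plan}}_{0:n}) \le Q^*(s_0, a^{\textnormal{plan}}_0)$, since the tail of the planned trajectory is one admissible continuation from $\mathcal{T}(s_0,a^{\textnormal{plan}}_0)$ and is therefore dominated by $\gamma V^*$; and second, $J^{Q^*}_n(a^{\textnormal{opt}}_{0:n}) = V^*(s_0)$ under the assumption that an optimal trajectory from a recoverable state stays recoverable over the horizon. Chaining the inequalities yields the pointwise estimate $V^*(s_0) - Q^*(s_0, \backupp^*(s_0)) \le 2\gamma^n\epsilon$, and taking expectations completes the proof.

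The step I expect to be the main obstacle is establishing $J^{Q^*}_n(a^{\textnormal{opt}}_{0:n}) = V^*(s_0)$, i.e., that restricting the maximization to \emph{recoverable} trajectories does not sacrifice optimality. Without a closure property of the recoverable set under near-optimal behavior, the constrained optimum could sit strictly below $V^*(s_0)$, leaving a horizon-independent regret floor that no planning depth removes; pinning down the precise ``mild assumption'' that rules this out (for instance, that recoverable states are closed under an optimal policy, or that $V^*$ is attained along recoverable trajectories) is the delicate part. A secondary technical point is justifying the interchange of the $m \to \infty$ limit with the expectation over $\rho^{\pi_{\textnormal{dmps}}}$ and handling the almost-sure qualifier uniformly in $s$; bounded rewards supply a dominating constant, so dominated convergence should discharge this, but it must be stated with care.
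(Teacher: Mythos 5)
Your proposal is correct and follows essentially the same route as the paper's proof: a pointwise sandwich argument at each shield-triggering state in which the terminal $Q$-error enters the planning objective with a factor $\gamma^n$, the planner's asymptotic optimality absorbs the residual suboptimality term as $m \to \infty$, and the resulting uniform pointwise bound is pushed through the expectation defining $RR$. The obstacle you flag --- that the truly optimal $n$-step trajectory must itself remain recoverable, so that the constrained optimum attains $V^*(s_0)$ rather than sitting strictly below it --- is assumed implicitly (and without comment) in the paper's own lemma when it compares the planner's objective value against the infinite-horizon-optimal action sequence, so your treatment is, if anything, more explicit about the hypothesis actually needed.
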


\subsection{Training Algorithm}
\label{subsec:train}

\begin{algorithm}[t]
\scriptsize
\caption{\!:\;Reinforcement Learning with Dynamic Recovery Planning}
\label{algo:training}
\begin{algorithmic}[1]
\STATE \textbf{Inputs:} 
($\mathcal{M}$: Markov Decision Process), 
($E$: Episode Count), ($\backupp$: Task-oblivious backup Policy)
\STATE \textbf{Output:} ($\learneddmps$: Optimal Learned Policy) 

\STATE  $\learneddmps := \mathtt{initNeuralPolicy()}$ \algcmt{Initialize a neural network to act as the learned policy.}
%\algcmt{Initialize}
%\STATE $\theta^{t}, \phi_1^{t}, \phi_{2}^{t} := \mathtt{initTargets}()$ \algcmt{Initialize target networks for soft updates.}
\STATE $\mathcal{B} := \emptyset$ \algcmt{Initialize an empty replay buffer.}

\FOR{$e \in [1,E]$}

    \STATE $s := \mathtt{beginEpisode}(e)$ \algcmt{Initialize a new episode and receive the first observed state.}
    \WHILE{$\neg \mathtt{terminated}(e)$} 
        \STATE  $a_{\text{next}} := \learneddmps(s)$  \algcmt{Choose a candidate for the next action using the learned policy.} %+ \mathcal{N}(0, \sigma)$ \algcmt{Choose a candidate action using the base policy with normal noise.}
        \IF{$\neg\recf(\mathcal T(s,a_{\text{next}}),\backupp)$}
        %{$\mathcal{T}(s,a_{\text{next}})\not\in\mathcal{S}_{{rec}}^{\pi_r}$ }
        \STATE $\mathcal{B} := \mathcal{B}\cup \langle s, a_{\text{next}}, \varnothing, r^-\rangle$ \algcmt{Record a large negative penalty for triggering the backup policy.}
        \STATE\textbf{if}  $a^*_{0:n} = \mathtt{planRec}(s,Q_{\learneddmps}) $ \textbf{then} $a_{\text{next}} := a^*_0$ \algcmt{Plan recovery and choose the next action.}
        \STATE \textbf{else}  $a_{\text{next}} := \backupp(s)$ \algcmt{Use the task-oblivious backup policy if planning fails.}
        \ENDIF
        \STATE $s', r := \mathtt{execute}(s,a_{\text{next}})$ \algcmt{Execute the selected next action on the environment.}
        \STATE $\mathcal{B} := \mathcal{B}\cup \langle s, a_{\text{next}}, s', r\rangle$ 
        \algcmt{Update the buffer with the recent transition record.}
        \STATE $s := s'$

    \ENDWHILE
    \STATE $\mathtt{updateNeuralPolicy}(\learneddmps, \mathcal{B}) $  \algcmt{Update the learned policy using buffered records.}
   % \STATE $\mathtt{updateTargets}(\theta^{t}, \phi_1^{t}, \phi_{2}^{t})$ \algcmt{Update target parameters using Polyak averaging.}
\ENDFOR
\STATE \textbf{return} $\learneddmps$
\end{algorithmic}
\end{algorithm}

While our proposed recovery can be used during deployment irrespective of how the neural policy is trained,  our method integrates the planner into the training loop, allowing the neural policy to learn to ``imitate'' the safe actions of the planner while 
making task-specific progress.
 Hence, the training loop converges to a neural policy that is both high-performant \emph{and} {safe}. This is very desirable because \tool\ can avoid expensive shield interventions that require on-line planning during deployment.

Algorithm \ref{algo:training} presents a generic deep RL algorithm for  training a policy $\learneddmps$. %In the algorithm, we use $\mathtt{monospaced}$ font to represent external functions that are not precisely defined here but can be found in the source code of our artifact. 
Lines 3-4 of the algorithm perform initialization of the neural policy $\learneddmps$ as well as the \emph{replay buffer} $\mathcal{B}$, which stores a set of tuples $\langle s, a, s', r \rangle$ that capture transitions and their corresponding reward $r$. Each training episode begins with the agent observing the initial state $s$ (line~6) and terminates 
when the goal state is reached or after a predetermined number of steps are taken (line~7). 
The agent first attempts to choose its next action $a_{\text{next}}$ using the learned policy $\learneddmps$ (Line 8). 
If the execution of $a_{\text{next}}$ leads to a non-recoverable state according to $\backupp$ (line~9), the algorithm first adds a record to the replay buffer where the current state and action are associated with a high negative reward $r^-$ (line 10). It then 
performs dynamic model predictive shielding to ensure safety (lines~11-12) as discussed earlier: If the planner yields a valid policy, the the next action is chosen as the first action in the plan (line 11); otherwise, the backup policy $\backupp$ is used to determing the next action. 
Then, $a_{\text{next}}$ is executed at line 13 to obtain a new state $s'$ and its corresponding reward $r$. This new transition and its corresponding reward are again added to the replay buffer (line 14), which is then used to update the neural policy at line 16, after the termination of the current training episode.
\section{Experiments}
\label{sec:eval}
\vspace{-1mm}
In this section, we present an empirical study of \tool\ on {13 benchmarks} and compare it against {4 baselines}.
%including three ablations and 3 existing methods.
%
%
The details of our implementation and experimental setup are presented in Appendix~\ref{app:eval}.
%
%Our experiments are specifically designed to address the following questions: (1) How effective is \tool\ at learning a policy that minimizes shield invocations? (2) How does our approach compare with baselines in terms of the episodic return of the learned policy?
%(3) What is the role of planning in the efficacy of \tool? \id{revisit this}
%

\mypar{Benchmarks}
We evaluate our approach on five \textit{static benchmarks} (ST), where the agent's environment is static, and eight \textit{dynamic benchmarks}, where the agent's environment includes moving objects. 
Static benchmarks (\texttt{obstacle}, \texttt{obstacle2}, \texttt{mount-car}, 
%\texttt{pendulum}, 
\texttt{road}, and \texttt{road2d}) are drawn from prior work~\cite{anderson2020neurosymbolic,fulton2018safe} and include classic control problems. The more challenging dynamic benchmarks (\texttt{dynamic-obst}, \texttt{single-gate}, \texttt{double-gates}, and \texttt{double-gates+}) require the agent to adapt its policy to accommodate complex obstacle movements. 
Specifically, \texttt{dynamic-obs} features  moving obstacles along the agent's path. 
In \texttt{single-gate}, a rotating circular wall with a small opening surrounds the goal position. \texttt{double-gate} is similar but includes \textit{two} concentric circular walls around the goal, and \texttt{double-gate+} is the most challenging, featuring increased wall thickness to force more efficient navigation through the openings.
For each dynamic benchmark, we consider two different agent dynamics: \textit{differential drive dynamics} (DD), featuring an agent with two independently driven wheels, and \textit{double integrator dynamics} (DI), where the agent's acceleration in any direction can be adjusted by the policy. A detailed description of benchmarks can be found in Appendix~\ref{app:eval}.

\mypar{Baselines}
We compare \tool\ (with a planning horizon of $n=5$) with five baselines.
Our first baseline is \texttt{MPS},  the standard model predictive shielding approach.
The second baseline is \texttt{REVEL}~\cite{anderson2020neurosymbolic}, a recent PSRL approach that learns verified neurosymbolic policies through iterative mirror descent. \texttt{REVEL} requires a white-box model of the environment's worst-case dynamics, which is challenging to develop for dynamic benchmarks; hence, we apply \texttt{REVEL} only to static benchmarks.
We also compare \tool\ against three established RL methods: \texttt{CPO}~\cite{achiam2017constrained}, \texttt{PPO-Lag} (PPO Lagrangian)~\cite{ppo, lagrangian}, and \texttt{TD3}~\cite{fujimoto2018addressing}. All aim to reduce safety violations during training by incorporating safety constraints into the  objective. In \texttt{TD3}, a negative penalty is applied to unsafe steps. In \texttt{CPO}, a fixed number of violations are tolerated. Finally, in \texttt{PPO-Lag}, a Lagrangian multiplier is used to balance safety cost with reward.

\newlength{\colwidth}
\setlength{\colwidth}{5mm} % You can adjust the value as needed

\begin{table}[t]
    \centering
    \scriptsize
    \caption{Safety Results}
    \label{tab:safety}
    \begin{tabular}{|cl|rr|rr|rr|rr|rr|rr|}
        \hline
        \multicolumn{2}{|c|}{\multirow{3}{*}{\bf Benchmark}} & 
        \multicolumn{6}{c|}{\textbf{\# Shield Invocations / Episode}} & 
        \multicolumn{6}{c|}{\textbf{\# Safety Violations / Episode}} \\ \cline{3-14} 
        &           & 
        \multicolumn{2}{|c|}{\tool} & 
        %\multicolumn{2}{|c|}{\texttt{DMPS-H1}}  & 
        \multicolumn{2}{|c|}{\texttt{MPS}} & 
        %\multicolumn{2}{|c|}{\texttt{DMPS-G}}  & 
        \multicolumn{2}{|c|}{\texttt{REVEL}}  & 
        \multicolumn{2}{|c|}{\texttt{CPO}}  &
        \multicolumn{2}{|c|}{\texttt{PPO-lag}}  &
        \multicolumn{2}{|c|}{\texttt{TD3}} \\
        &           & 
        mean & sd & 
        mean & sd &  
        %mean & sd & 
        mean & sd &  
        mean & sd &  
        mean & sd &  
        mean & sd \\
        \hline
        \multirow{5}{*}{\rotatebox{0}{ST}} 
        & \texttt{obstacle} & 
        $\mathbf{0.0}$ & $0.0$ & 
        %$0.0$ & Z & 
        $\mathbf{0.0}$ & $0.0$ & 
        %$0.0$ & X & 
        $4.0$ & $8.0$ & 
        $\mathbf{0.0}$ & $0.0$ & 
        $\mathbf{0.0}$ & $0.0$ &
        $\mathbf{0.0}$ & $0.0$ \\
        & \texttt{obstacle2} & 
        $\mathbf{0.0}$ & $0.0$ & 
        %$2.8$ & 5.6 & 
        $\mathbf{0.0}$ & $0.0$ & 
        %$13.71$ & 27.4 & 
        $33.8$ & $30.3$ & 
        $0.9$ & $0.2$ & 
        $6.42$ & $0.19$ &
        $\mathbf{0.0}$ & $0.0$ \\
        & \texttt{mount-car} & 
        $\mathbf{0.0}$ & $0.0$ & 
        %b & Z & 
        $\mathbf{0.0}$ & $0.0$ & 
        %d & X & 
        $4.0$ & $4.0$ & 
        $2.0$ & $2.1$ &
        $22.2$ & $28.9$ &
        $\mathbf{0.0}$ & $0.0$ \\
        & \texttt{road} & 
        $\mathbf{0.0}$ & $0.0$ & 
        %$0.0$ & Z & 
        $\mathbf{0.0}$ & $0.0$ & 
        %$0.0$ & X & 
        $0.8$ & $0.74$ & 
        $\mathbf{0.0}$ & $0.0$ & 
        $\mathbf{0.0}$ & $0.0$ &
        $\mathbf{0.0}$ & $0.0$ \\
        & \texttt{road2d} & 
        $\mathbf{0.0}$ & $0.0$ & 
        %$0.0$ & Z & 
        $\mathbf{0.0}$ & $0.0$ & 
        %$0.0$ & X & 
        $\mathbf{0.0}$ & $0.0$ & 
        $\mathbf{0.0}$ & $0.0$ &
        $\mathbf{0.0}$ & $0.0$ &
        $\mathbf{0.0}$ & $0.0$ \\
        \hline
        \multirow{4}{*}{\rotatebox{0}{{DI}}}                      
        & \texttt{dynamic-obst} & 
        $\mathbf{9.3}$ & $1.8$ & 
        %b & Z & 
        $144.1$ & $26.2$ & 
        %d & X & 
        \;- & \;- & 
        $1.7$ & $0.8$ &
        $3.6$ & $3.9$ &
        $\mathbf{0.8}$ & $1.2$ \\
        & \texttt{single-gate} & 
        $\mathbf{0.1}$ & $0.0$ & 
        %$0.15$ & 0.1 & 
        $0.2$ & $0.1$ & 
        %$0.12$ & 0.1 & 
        \;- & \;- & 
        $2.0$ & $1.2$ &
        $10.0$ & $5.5$ &
        $\mathbf{0.0}$ & $0.0$ \\
        & \texttt{double-gates} & 
        $\mathbf{4.5}$ & $1.2$ & 
        %$6.38$ & 1.5 & 
        $28.3$ & $6.9$ & 
        %$19.84$ & 4.5 & 
        \;- & \;- & 
        $2.1$ & $1.7$ & 
        $11.8$ & $6.3$ &
        $\mathbf{0.3}$ & $0.5$ \\
        & \texttt{double-gates+} & 
        $\mathbf{24.2}$ & $6.4$ & 
        %$124.38$ & 16.3 & 
        $239.8$ & $16.3$ & 
        %$89.47$ & 36 & 
        \;- & \;- & 
        $2.9$ & $1.0$ & 
        $6.5$ & $4.5$ &
        $\mathbf{0.0}$ & $0.0$ \\
        \hline
        \multirow{4}{*}{\rotatebox{0}{{DD}}}                        
        & \texttt{dynamic-obst} & 
        $\textbf{105.2}$ & $39.9$ & 
        %b & Z & 
        $144.9$ & $39.9$ & 
        %d & X & 
        \;- & \;- & 
        $1.7$ & $1.9$ & 
        $2.9$ & $2.2$ &
        $\mathbf{0.8}$ & $0.16$ \\
        & \texttt{single-gate} & 
        $\mathbf{3.1}$ & $1.6$ & 
        %$4.52$ & 4.5 & 
        $7.4$ & $6.9$ & 
        %$3.44$ & 2.4 & 
        \;- & \;- & 
        $5.2$ & $3.5$ & 
        $6.7$ & $7.2$ &
        $\mathbf{0.1}$ & $0.2$ \\
        & \texttt{double-gates} & 
        $\mathbf{5.5}$ & $1.9$ & 
        %$7.15$ & 2.5 & 
        $52.5$ & $17.6$ & 
        %$34.06$ & 12.9 & 
        \;- & \;- & 
        $3.9$ & $1.7$ & 
        $7.9$ & $9.2$ &
        $\mathbf{3.0}$ & $5.5$ \\
        & \texttt{double-gates+} & 
        $\mathbf{18.4}$ & $13.0$ & 
        %$34.68$ & 29.5 & 
        $106.2$ & $18.5$ & 
        %$89.4$ & 32.7 & 
        \;- & \;- & 
        $12.1$ & $2.9$ &
        $6.9$ & $6.1$ &
        $\mathbf{0.2}$ & $0.4$ \\
        \hline
   %     & \texttt{average} & 
   %     a & b & 
   %     %A & & 
   %     c & Y & 
   %     %d & X & 
   %     \;- & \;- & 
   %     f & Y & 
   %     g & X \\
   %     \hline
    \end{tabular}
\end{table}

\newlength{\figheight}
\setlength{\figheight}{28mm} % Adjust the value as needed

\begin{figure}[t]
    \centering
            \begin{subfigure}[b]{0.24\textwidth}
            \centering
            \makebox[0pt]{\includegraphics[height=\figheight, keepaspectratio]{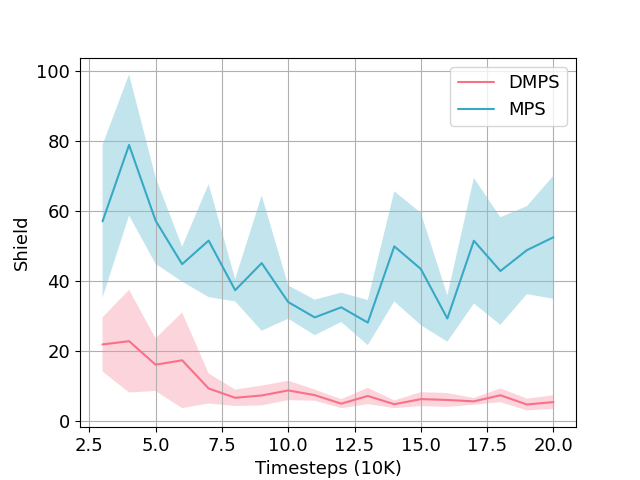}}
            \caption*{\texttt{double-gate} (DD)}
            \label{fig:sub1}
        \end{subfigure}
        \hfill
        \begin{subfigure}[b]{0.24\textwidth}
            \centering
            \makebox[0pt]{\includegraphics[height=\figheight, keepaspectratio]{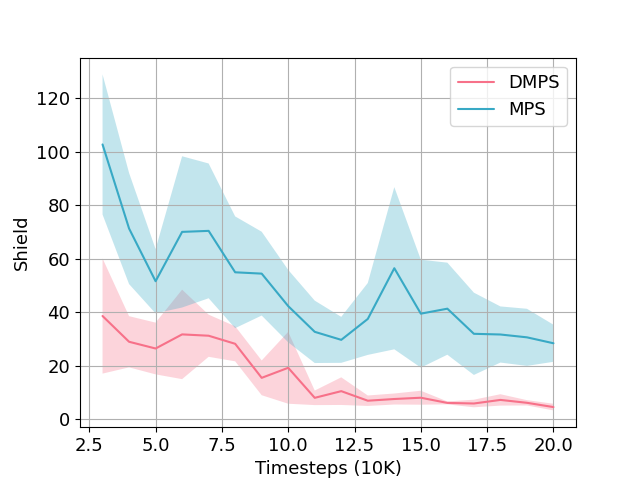}}
            \caption*{\texttt{double-gate} (DI)}
            \label{fig:sub2}
        \end{subfigure}
        \hfill
        \begin{subfigure}[b]{0.24\textwidth}
            \centering
            \makebox[0pt]{\includegraphics[height=\figheight, keepaspectratio]{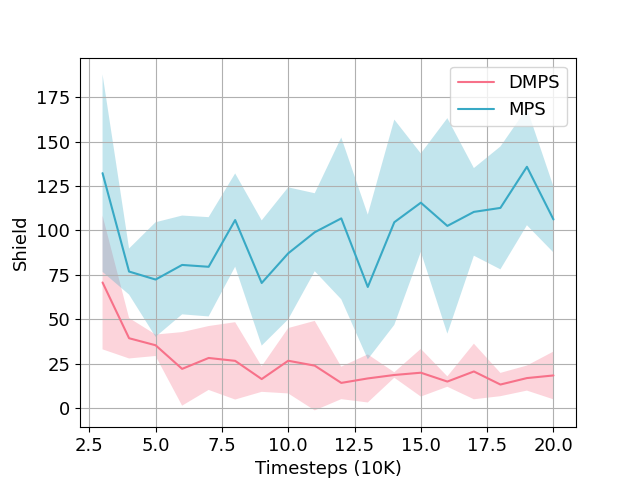}}
            \caption*{\texttt{double-gate+} (DD)}
            \label{fig:sub1}
        \end{subfigure}
        \hfill
        \begin{subfigure}[b]{0.24\textwidth}
            \centering
            \makebox[0pt]{\includegraphics[height=\figheight, keepaspectratio]{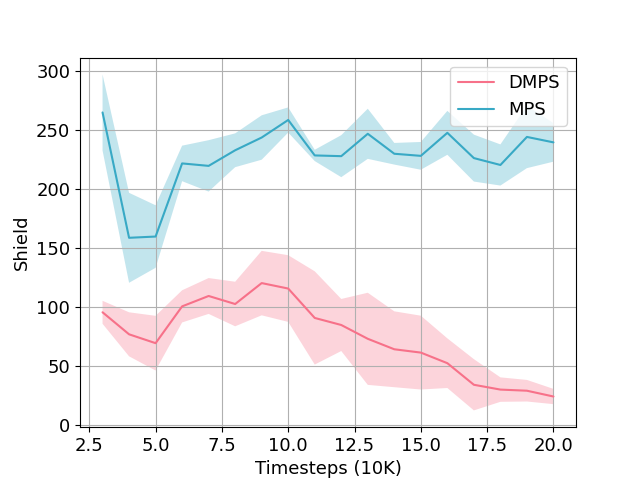}}
            \caption*{\texttt{double-gate+} (DI)}
            \label{fig:sub2}
        \end{subfigure}
        \caption{Shield Invocations in \texttt{double-gate} and \texttt{double-gate+}}
        \label{fig:shield_plots}
\end{figure}

\subsection{Safety Results}
\label{subsec:perf_res}

\autoref{tab:safety} presents our experimental results regarding safety. All results are averaged over 5 random seeds.
For PSRL methods  (\tool,  \texttt{MPS}, and \texttt{REVEL}), which guarantee worst-case safety, we report the average number of shield invocations per episode. Generally, less frequent shield invocation indicates higher performance of the approach. For SRL methods (\texttt{TD3}, \texttt{PPO-Lag} and \texttt{CPO}), we report the average number of safety violations per episode. 
%\red{[Maybe insert a description of how we set up SRL methods to try to avoid safety violations?]}

Due to the relative simplicity of the static benchmarks, the PSRL baselines achieve safety with very infrequent shield invocations. Even the SRL baselines are mostly able to avoid safety violations in these benchmarks. % albeit with extremely poor performance, as we  discuss in \autoref{subsec:perf_res}.\id{I don't think this is true?}
On the other hand, in dynamic benchmarks, PSRL approaches heavily rely on shielding to achieve safety, and SRL approaches violate safety more frequently. 
Notably, the number of \tool\ shield invocations is significantly lower than in other baselines. Across all dynamic benchmarks, \tool\ invokes the shield an average of $24.7$ times per episode, whereas \texttt{MPS} triggers the shield $124.1$ times.
The results also indicate that the standard deviation values for shield invocations in \tool\ are consistently lower than those of \texttt{MPS}, indicating more stable and predictable performance. 
%\blue{The \texttt{dynamic-obst} environment under differential drive dynamics is anomalous in its high shield trigger rate. The differential drive backup policy applied to that environment constrains the space of recoverable actions very significantly, causing higher trigger rates.}

\autoref{fig:shield_plots} shows the number of shield triggers plotted against episodes for the \texttt{double-gate} and \texttt{double-gate+} dynamic benchmarks. The error regions are 1-sigma over random seeds. Under both agent dynamics, \tool\ achieves significantly fewer shield invocations compared to \texttt{MPS}. Moreover, the number of shield invocations in \tool\ consistently decreases as training progresses, whereas  this trend is not present in \texttt{MPS}. 
In fact, in many scenarios, the number of shield invocations in \texttt{MPS} increases with more training because the learned policy reduces its exploratory actions and adheres more strictly to a direct path to the goal. However, this frequently leads to being blocked by obstacles and results in repeated shield invocations. 
%Unlike \tool, repeated shield invocations in \texttt{MPS} does not improve the learned policy's obstacle awareness in subsequent iterations.

%$This increase occurs because the \texttt{MPS} policy reduces its exploratory actions and adheres more strictly to the goal trajectory, leading to frequent obstacles and repeated shield activations. Unlike the DMPS scenario, this behavior does not enhance the development of an obstacle-aware policy in subsequent iterations.

%\red{add explanation if possible?}
%The \texttt{MPS} method has the highest invocations across the episodes. 

%\id{needs to be re-written once we have Revel and CPO}
%
%\red{[Maybe insert a discussion pointing to trajectories in \autoref{fig:trajectories} to show why shielding mechanism in \texttt{DMPS-H1} and \texttt{DMPS-G} is better than \texttt{MPS}]}.

%\red{[todo: add footnote regarding CPO and REVEL's result ]}

%\input{double_gates_plus_plots_separate}

\subsection{Performance Results}
\label{subsec:perf_res}
\newlength{\colwidthperf}
\setlength{\colwidthperf}{4.4mm} % You can adjust the value as needed
   % \centering
\begin{table}[t]
    \centering
    \scriptsize
    \caption{Performance Results}
    \label{tab:perf}
    \begin{tabular}{|cl|rr|rr|rr|rr|rr|rr|}
        \hline
        \multicolumn{2}{|c|}{\multirow{3}{*}{\bf Benchmark}} & 
        \multicolumn{12}{c|}{\textbf{Total Return in Final 10 Episode}}\\ \cline{3-14} 
        &           & 
        \multicolumn{2}{|c|}{\tool} & 
       % \multicolumn{2}{|c|}{\texttt{DMPS-H1}}  & 
        \multicolumn{2}{|c|}{\texttt{MPS}} & 
       % \multicolumn{2}{|c|}{\texttt{DMPS-G}}  & 
        \multicolumn{2}{|c|}{\texttt{REVEL}}  & 
        \multicolumn{2}{|c|}{\texttt{CPO}}  & 
        \multicolumn{2}{|c|}{\texttt{PPO-Lag}}  & 
        \multicolumn{2}{|c|}{\texttt{TD3}} \\
        &           & 
        mean & sd & 
        mean & sd &  
       % mean & sd & 
        mean & sd &  
        mean & sd &  
        mean & sd &  
        mean & sd \\
        \hline
        \multirow{5}{*}{\rotatebox{0}{ST}} 
        & \texttt{obstacle} & 
        $32.7$ & $0.3$ & 
        %$31.9$ & $1.0$ & 
        $8.6$ & $47.9$ & 
        %$32.43$ & $0.73$ & 
        $-41.6$ & $52.7$ & 
        $32.8$ & $0.0$ &
        $32.8$ & $0.4$ &
        $\mathbf{32.9}$ & $0.0$ \\
        & \texttt{obstacle2} & 
        $20.2$ & $15.2$ & 
        %$6.81$ & $13.1$ & 
        $-1.8$ & $3.2$ & 
        %$13.75$ & $15.56$ & 
        $9.3$ & $21.1$ & 
        $33.0$ & $0.2$ & 
        $\mathbf{34.1}$ & $0.1$ &
        $-1.2$ & $3.0$ \\
        & \texttt{mount-car} & 
        $81.2$ & $0.3$ & 
        %$84.42$ & $4.88$ & 
        $\mathbf{85.1}$ & $1.8$ & 
        %$86.23$ & $1.86$ & 
        $11.4$ & $34.1$ & 
        $9.6$ & $35.3$ & 
        $-21.3$ & $2.9$ &
        $-30.0$ & $35.5$ \\
        & \texttt{road} & 
        $\mathbf{22.7}$ & $0.0$ & 
        %$22.74$ & $0.001$ & 
        $\mathbf{22.7}$ & $0.0$ & 
        %$22.74$ & $0.01$ & 
        $9.7$ & $16.4$ & 
        $\mathbf{22.7}$ & $0.0$ & 
        $\mathbf{22.7}$ & $0.05$ &
        $\mathbf{22.7}$ & $0.0$  \\
        & \texttt{road2d} & 
        $\mathbf{24.0}$ & $0.2$ & 
        %$24.18$ & $0.08$ & 
        $\mathbf{24.0}$ & $0.2$ & 
        %$24.08$ & $0.19$ & 
        $11.2$ & $16.5$ & 
        $\mathbf{24.0}$ & $0.0$ & 
        $\mathbf{24.0}$ & $0.1$ &
        $\mathbf{24.0}$ & $0.1$  \\
        \hline
        \multirow{4}{*}{\rotatebox{0}{{DI}}}                      
        & \texttt{dynamic-obs} & 
        $\mathbf{13.2}$ & $0.0$ & 
        %$13.12$ & $0.11$ & 
        $-1.3$ & $1.9$ & 
        %$12.56$ & $0.81$ & 
        \;- & \;- & 
        $-21.4$ & $21.1$ & 
        $-4.2$ & $24.9$ &
        $-5.0$ & $0.3$ \\
        & \texttt{single-gate} & 
        $\mathbf{11.6}$ & $0.0$ & 
        %$11.64$ & $0.01$ & 
        $\mathbf{11.6}$ & $0.0$ & 
        %$11.64$ & $0.02$ & 
        \;- & \;- & 
        $-19.6$ & $17.0$ &
        $-2.0$ & $1.1$ &
        $-2.1$ & $0.2$  \\
        & \texttt{double-gates} & 
        $\mathbf{12.7}$ & $1.0$ & 
        %$12.39$ & $1.48$ & 
        $11.5$ & $1.1$ & 
        %$11.16$ & $1.79$ & 
        \;- & \;- & 
        $-6.7$ & $5.4$ &
        $-3.9$ & $11.1$ &
        $-3.6$ & $1.0$  \\
        & \texttt{double-gates+} & 
        $\mathbf{13.0}$ & $0.6$ & 
        %$3.84$ & $3.91$ & 
        $-0.9$ & $0.1$ & 
        %$8.46$ & $3.26$ & 
        \;- & \;- & 
        $-17.4$ & $12.8$ &
        $-2.8$ & $0.4$ &
        $-4.1$ & $1.1$ \\
        \hline
        \multirow{4}{*}{\rotatebox{0}{{DD}}}                        
        & \texttt{dynamic-obst} & 
        $\textbf{7.4}$ & $3.7$ & 
        %c & d & 
        $6.3$ & $2.1$ & 
        %g & h & 
        \;- & \;- & 
        $-4.5$ & $0.5$ &
        $-3.6$ & $0.8$ &
        $-5.3$ & $0.5$ \\
        & \texttt{single-gate} & 
        $\mathbf{11.5}$ & $0.1$ & 
        %$11.52$ & $0.04$ & 
        $11.4$ & $0.1$ & 
        %$11.58$ & $0.02$ & 
        \;- & \;- & 
        $-2.5$ & $0.3$ &
        $-2.4$ & $0.2$ &
        $-3.1$ & $0.5$  \\
        & \texttt{double-gates} & 
        $\mathbf{13.1}$ & $0.5$ & 
        %$13.3$ & $0.07$ & 
        $10.9$ & $1.8$ & 
        %$12.64$ & $0.62$ & 
        \;- & \;- & 
        $-2.4$ & $0.6$ &
        $-1.9$ & $0.9$ &
        $-3.4$ & $0.5$  \\
        & \texttt{double-gates+} & 
        $\mathbf{13.0}$ & $0.6$ & 
        %$11.54$ & $1.73$ & 
        $8.5$ & $2.3$ & 
        %$10.68$ & $1.58$ & 
        \;- & \;- & 
        $-2.5$ & $0.3$ &
        $-20.7$ & $22.8$ &
        $-3.6$ & $0.3$  \\
      %  \hline
      %  & \texttt{average} & 
      %  a & b & 
      %  A & & 
      %  c & Y & 
      %  d & X & 
      %  e & Z & 
      %  f & Y & 
      %  g & X \\
        \hline
    \end{tabular}
\end{table}

\setlength{\figheight}{28mm} % Adjust the value as needed

\begin{figure}[t]
%\vspace{-0.2in}
    \centering
            \begin{subfigure}[b]{0.24\textwidth}
            \centering
            \makebox[0pt]{\includegraphics[height=\figheight, keepaspectratio]{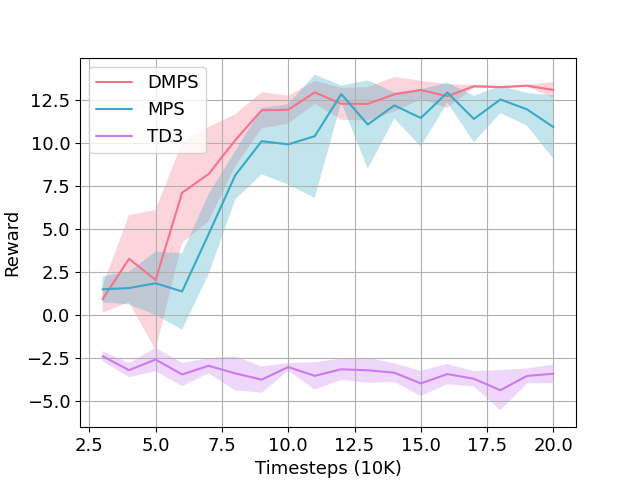}}
            \caption*{\texttt{double-gate} (DD)}
            \label{fig:sub1}
        \end{subfigure}
        \hfill
        \begin{subfigure}[b]{0.24\textwidth}
            \centering
            \makebox[0pt]{\includegraphics[height=\figheight, keepaspectratio]{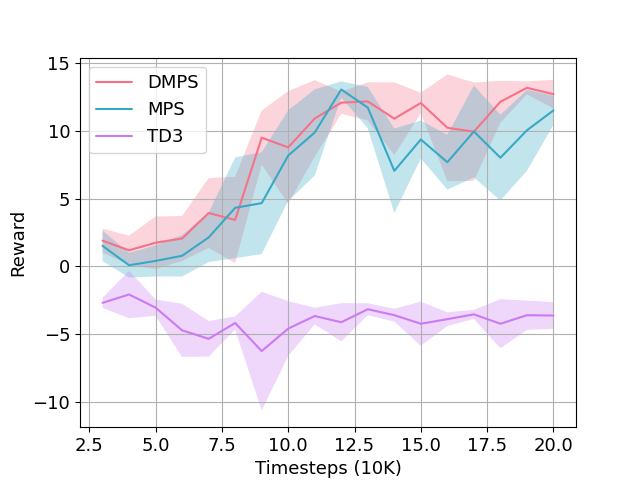}}
            \caption*{\texttt{double-gate} (DI)}
            \label{fig:sub2}
        \end{subfigure}
        \hfill
        \begin{subfigure}[b]{0.24\textwidth}
            \centering
            \makebox[0pt]{\includegraphics[height=\figheight, keepaspectratio]{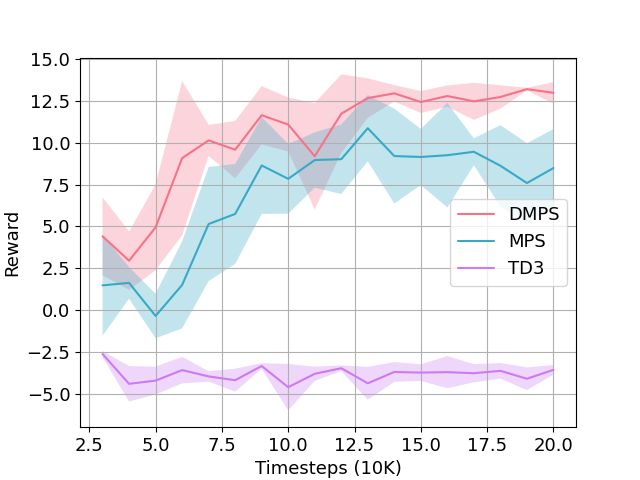}}
            \caption*{\texttt{double-gate+} (DD)}
            \label{fig:sub1}
        \end{subfigure}
        \hfill
        \begin{subfigure}[b]{0.24\textwidth}
            \centering
            \makebox[0pt]{\includegraphics[height=\figheight, keepaspectratio]{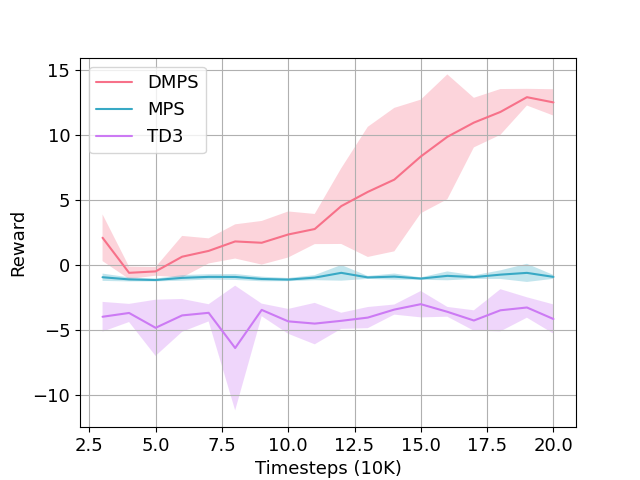}}
            \caption*{\texttt{double-gate+} (DI)}
            \label{fig:sub2}
        \end{subfigure}
        \caption{Episodic Returns in \texttt{double-gate} and \texttt{double-gate+}}
        \label{fig:perf_plots}
\end{figure}

\autoref{tab:perf} presents the per-episode return over the last 10 test episodes of a run. The reported mean and standard deviations are computed over 5 random seeds. The results indicate that \tool\ and \texttt{MPS} exhibit comparable performance across most static benchmarks, with the exception of the more challenging \texttt{obstacle} and \texttt{obstacle2} benchmarks for which \tool\ significantly outperforms \texttt{MPS}.
%which are the most challenging static benchmarks and necessitate the agent to learn the precise frame of the obstacle solely through exploration.
%
In dynamic benchmarks, \tool\ outperforms \texttt{MPS} in all benchmarks except for \texttt{single-gate} (DI), where both methods achieve equivalent results. Both \tool\ and \texttt{MPS} consistently outperform \texttt{REVEL}.
The SRL approaches (\texttt{CPO}, \texttt{PPO-Lag}, and \texttt{TD3}) perform reasonably well in most static benchmarks but their performance significantly deteriorates in dynamic benchmarks, failing to achieve positive returns in any instance. 
This is because the policies in \texttt{CPO}, \texttt{PPO-Lag}, and \texttt{TD3} rapidly learn to avoid both the obstacles and the goal due to harsh penalties for safety violations, thus accruing the negative rewards for each timestep spent away from the goal.

\autoref{fig:perf_plots} presents the total return plotted against the episode number for the \texttt{double-gate} and \texttt{double-gate+} dynamic benchmarks. The error regions are 1-sigma over random seeds. The \texttt{CPO} and \texttt{PPO-Lag} baselines are omitted due to their significantly poor performance, which distorts the scale of the graphs. In all benchmarks, \tool\ demonstrates superior performance compared to the other methods. While \texttt{MPS} performs adequately in three of the benchmarks, it exhibits poor performance in the \texttt{double-gate+} (DI) benchmark. %\red{we don't have an explanation for this, unfurtunately.}

\subsection{Analysis}

\begin{wrapfigure}[13]{r}{0.51\textwidth}
    \centering
    \begin{subfigure}[b]{0.24\textwidth}
        \centering
        \makebox[0pt]{\includegraphics[width=\textwidth, keepaspectratio]{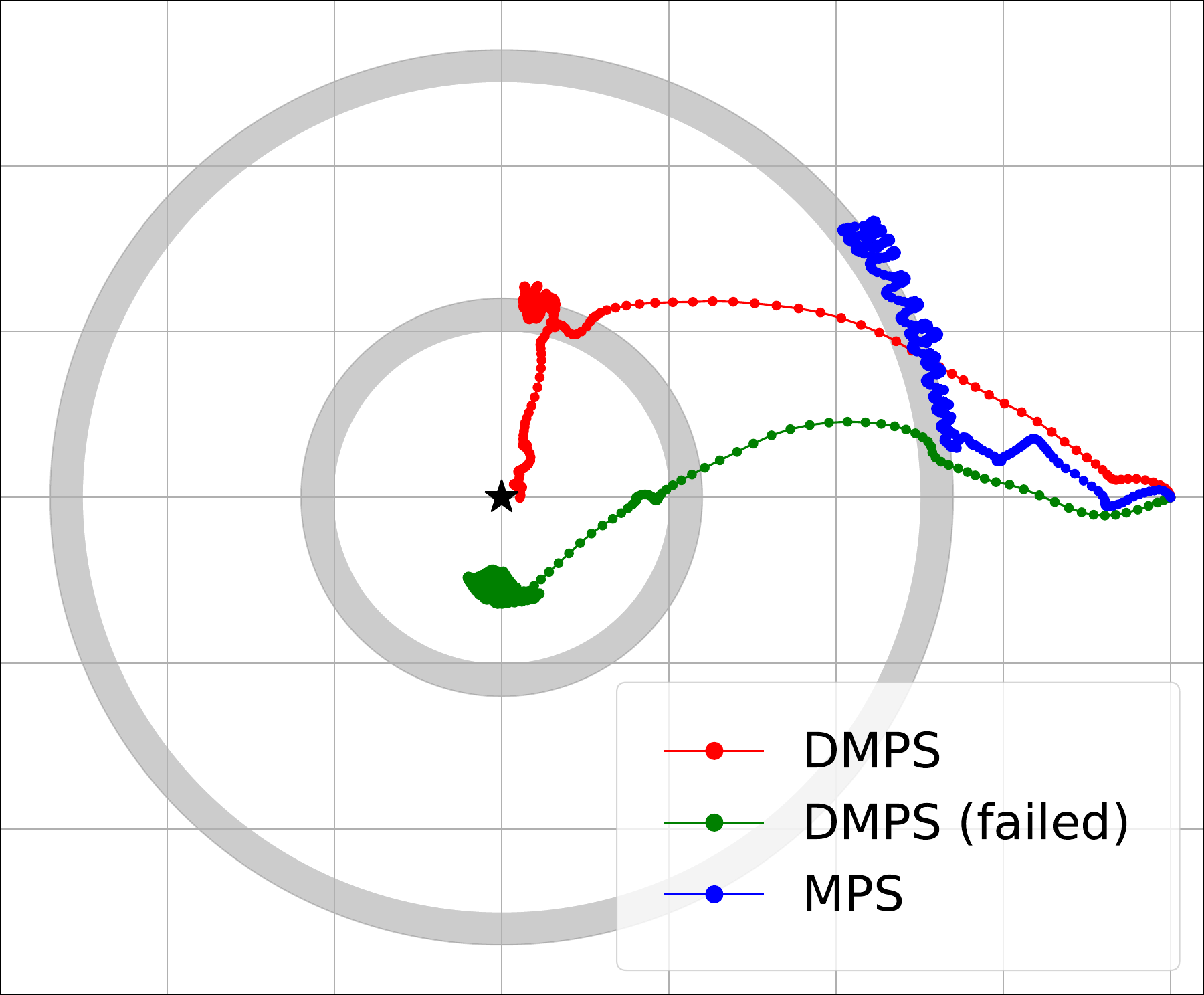}}
        \caption{Early Stage}
        \label{subfig:early}
    \end{subfigure}
    ~
    \begin{subfigure}[b]{0.24\textwidth}
        \centering
        \makebox[0pt]{\includegraphics[width=\textwidth, keepaspectratio]{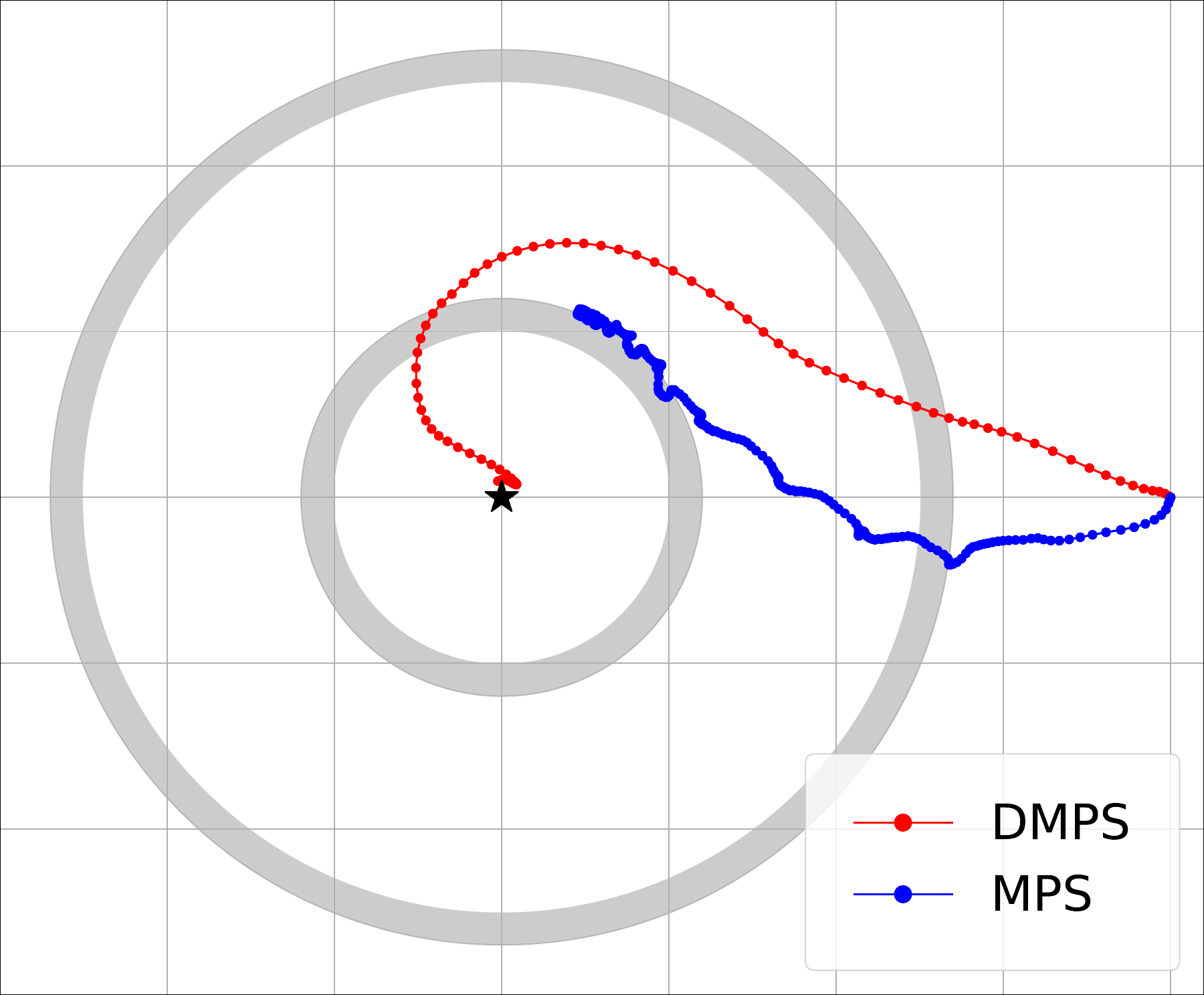}}
        \caption{Late Stage}
        \label{subfig:late}
    \end{subfigure}
    \caption{Example trajectories in \texttt{double-gate+} .}
    \label{fig:trajectories}
\end{wrapfigure}

We analyze the performance of \tool\ and \texttt{MPS} on the \texttt{double-gate+} environment under double-integrator dynamics. %We consider snapshots of trajectories taken during the first half of training and trajectories taken after training completed.
\autoref{subfig:early} shows trajectories from the first half of training when the agent's policy  and $Q$ function are still under-trained. When the agents approach the first gate and attempt to cross it, the shield is triggered.
%the suboptimality of their actions will ensure that the shield would be triggered. 
In the case of \texttt{MPS}, this shield simply pushes the agent back outside the gate (see blue trajectory), and the agent is unable to make any progress.
In contrast, \tool\ plans actions that maximize an $n$-step return target, allowing the agent to initially make progress. However, we can  observe from the green trajectory that the agent's $Q$ function and policy  are under-trained: after having made it through the obstacles, the agent is not  sufficiently goal-aware to reliably make it to the center. This makes the \tool\ planner objective inaccurate with respect to long-term return, causing suboptimal actions to be selected by the shield. In the red trajectory, for instance, the agent spends a large portion of the trajectory trying to cross the second gate, only to retreat and get stuck in that position until eventually getting through.

As training progresses, the \texttt{MPS} backup policy hinders the agent's exploration of the environment, impeding the learning of the neural policy. By the end, most runs of the \texttt{MPS} agent are unable to progress past the first gate. \autoref{subfig:late} shows one of the few \texttt{MPS} runs that successfully make it through the first gate towards the end of training; however, the \texttt{MPS} agent still fails to make it through the second gate.
%Because MPS hinders the agent's learning, we can observe that MPS fails to make it through the second gate, its trajectory resembling its attempt to get past the first gate in Figure 5(a).
In contrast, 
%because DMPS enables the agent to make initial progress towards the goal, it
the \tool\  agent can  learn from the actions suggested by the planner, improving its policy and 
$Q$ function. This improvement makes the \tool\ planner's objective a more accurate estimation of long-term return, further strengthening the planner. Consequently, 
%the agent and planner mutually enhance each other, and, ultimately, 
the \tool\ agent demonstrates significantly better behaviors, as shown by the red trajectory in \autoref{subfig:late}.

\section{Limitations}

\subsection{Determinism}
    
First, our approach requires a perfect-information deterministic model, which could limit its usability in real-world deployment. Much prior work on provably safe locomotion makes the same determinism assumptions that we do~\cite{zhang2019mamps, zhu2019inductive, multifilteredrl}, with some such algorithms even having been deployed on real robots [3]. However, extending our DMPS approach to stochastic environments is a promising direction for future work. In particular, since prior MPS work has been extended to work in stochastic settings~\cite{bastani2021safe-a}, we believe our DMPS approach can be similarly extended to the stochastic setting.

\subsection{Computational Overhead}
    
Another potential limitation of our approach is the computational overhead of the planner. We use MCTS in an anytime planning setting, where we devote a fixed number of node expansions to searching for a plan. The clock time needed is linear in the allocated compute budget. However, the worst-case computational complexity to densely explore the search space, as in general planning problems, would be $O(\exp(H))$ since the planning search space grows exponentially. Our implementation used MCTS with 100 node expansions, a plan horizon of 5, and a branch factor of 10, which amounts to exploring 1000 states in total. On average, we found that when the shield is triggered, planning takes 0.4 seconds per timestep. The code we used is unoptimized, written in Python, and single-threaded. Since MCTS is a CPU-intensive search process, switching to a language C++ would likely yield significant speed improvements, and distributing computation over multiple cores would further slash the compute time by the number of assigned cores. We perform some additional experiments on the \texttt{double-gates+} environment, outlined in ~\autoref{app:comp}, to see how necessary compute budget scales with planner depth, confirming a rough exponential relationship.

\subsection{Sufficiency of Planning Horizon}
    
Finally, it can be asked whether small planning horizons (in our case, we used $H=5$) are sufficient to solve tricky planning problems. Our planner objective is designed to ensure that the planner accounts for both short-term and long-term objectives, preventing overly myopic behavior even with short horizons. However, the length of the horizon still affects how close to the globally optimal solution the result is, with a tradeoff of computational cost. To see this empirically, we conducted an experiment on the \texttt{double-gates+} environment, re-evaluating it under different planner depths and observing performance differences. Despite better initial performance, the low horizon agent converged to the same performance once the policy had fully stabilized. As part of our analysis, we also found that trivial planning ($H = 1$) does not work, reaffirming the necessity of a planner. More analysis and experimental results can be found in ~\autoref{app:plan}.

% Our approach also has limitations. Notably, it requires an environment model for effective planning, and the computational overhead of planning increases exponentially with the horizon. %This can pose a challenge in highly complex environments which require long-horizon planning. The issues concerning the environment model can be addressed via uncertainty-aware reachability analysis methods, a solution we intend to explore in future work.

% comparison between DMPS and DMPS-H1 and DMPS-G
%The inability of DMPS-G to consider long-range returns explains its under-performance on the double gate environments and affirms the importance of including the $Q$-estimate in the planning objective.
%Both DMPS and DMPS-H1 have access the learned Q-function but DMPS-G does not. Because of this, DMPS-G often selects actions that are optimal for the next few steps, but fall short when observed for longer horizons. This is particularly prominent in \texttt{double-gate+}.

% comparison between DMPS and DMPS-H1 

% Reason for failure in MPS
%Notably, MPS performs worse than other baselines on double-gate and double-gate+. The reason is that the task-oblivious backup policy greatly hinders MPS's ability to make progress towards the goal, specifically, whenever the agent is on the orbit of one of the obstacles, the task-oblivious policy tells the agent to evacuate the orbit and then come to a halt. In many cases, this pushes the agent out further from the goal.  

%\input{performance_plots}

%\setlength{\figheight}{28mm} % Adjust the value as needed

\vspace{-1mm}
\section{Conclusions}
\vspace{-1mm}
\label{sec:conc}
In this paper, we proposed Dynamic Model Predictive Shielding (\tool), which is a variant of Model Predictive Shielding (MPS) that performs dynamic recovery by leveraging a local planner. The proposed approach takes less conservative actions compared to MPS while still guaranteeing safety, and it learns neural policies that are both effective and safe in practice. Our evaluation on several challenging tasks shows that \tool\ improves upon the state-of-the-art methods in Safe Reinforcement Learning. 

%a novel method for provably safe reinforcement learning, based on the Model Predictive Shielding framework. This approach enhances recovery from unsafe situations by utilizing prior knowledge about action optimality for the given learning task. We compared our method against several baselines and report improved performance across challenging benchmark tasks. 
%
%In future work, we aim to improve the greedy search for recovery actions by integrating a planning algorithm that can analyze longer-horizon trajectories. Additionally, we will explore methods to acquire knowledge of optimality using the task's MDP, without needing modifications.
\vspace{-1mm}
\mypar{Impact Statement}
Over the past decade, reinforcement learning has experienced significant advancements and is increasingly finding applications in critical safety environments, such as autonomous driving. The stakes in such settings are high, with potential failures risking considerable property damage or loss of life. This study seeks to take a meaningful step for mitigating these risks by developing reinforcement learning agents that are rigorously aligned with real-world safety requirements.

\vspace{-1mm}
\mypar{Acknowledgements}
This work is partially supported by the National Science Foundation (CCF-2319471 and CCF-1901376). Any opinions, findings, and conclusions expressed in this material are those of the authors and do not necessarily reflect the views of the sponsors.

 %\newpage

\bibliography{main}

\begin{thebibliography}{10}

\bibitem{garcia2015comprehensive}
Garc{\i}a, J., F.~Fern{\'a}ndez.
\newblock A comprehensive survey on safe reinforcement learning.
\newblock \emph{Journal of Machine Learning Research}, 16(1):1437--1480, 2015.

\bibitem{gu2022review}
Gu, S., L.~Yang, Y.~Du, et~al.
\newblock A review of safe reinforcement learning: Methods, theory and applications.
\newblock \emph{arXiv preprint arXiv:2205.10330}, 2022.

\bibitem{achiam2017constrained}
Achiam, J., D.~Held, A.~Tamar, et~al.
\newblock Constrained policy optimization.
\newblock In \emph{International conference on machine learning}, pages 22--31. PMLR, 2017.

\bibitem{qin2021density}
Qin, Z., Y.~Chen, C.~Fan.
\newblock Density constrained reinforcement learning.
\newblock In \emph{International Conference on Machine Learning}, pages 8682--8692. PMLR, 2021.

\bibitem{miryoosefi2022simple}
Miryoosefi, S., C.~Jin.
\newblock A simple reward-free approach to constrained reinforcement learning.
\newblock In \emph{International Conference on Machine Learning}, pages 15666--15698. PMLR, 2022.

\bibitem{wagener2021safe}
Wagener, N.~C., B.~Boots, C.-A. Cheng.
\newblock Safe reinforcement learning using advantage-based intervention.
\newblock In \emph{International Conference on Machine Learning}, pages 10630--10640. PMLR, 2021.

\bibitem{krasowski2023provably}
Krasowski, H., J.~Thumm, M.~M{\"u}ller, et~al.
\newblock Provably safe reinforcement learning: Conceptual analysis, survey, and benchmarking.
\newblock \emph{Transactions on Machine Learning Research}, 2023.

\bibitem{alshiekh2018safe}
Alshiekh, M., R.~Bloem, R.~Ehlers, et~al.
\newblock Safe reinforcement learning via shielding.
\newblock In \emph{Proceedings of the AAAI conference on artificial intelligence}, vol.~32. 2018.

\bibitem{bastani2021safe-b}
Bastani, O.
\newblock Safe reinforcement learning with nonlinear dynamics via model predictive shielding.
\newblock In \emph{2021 American control conference (ACC)}, pages 3488--3494. IEEE, 2021.

\bibitem{zhang2019mamps}
Zhang, W., O.~Bastani, V.~Kumar.
\newblock Mamps: Safe multi-agent reinforcement learning via model predictive shielding.
\newblock \emph{arXiv preprint arXiv:1910.12639}, 2019.

\bibitem{moerland2020framework}
Moerland, T.~M., J.~Broekens, C.~M. Jonker.
\newblock A framework for reinforcement learning and planning.
\newblock \emph{arXiv preprint arXiv:2006.15009}, 127, 2020.

\bibitem{den2022planning}
Den~Hengst, F., V.~Fran{\c{c}}ois-Lavet, M.~Hoogendoorn, et~al.
\newblock Planning for potential: efficient safe reinforcement learning.
\newblock \emph{Machine Learning}, 111(6):2255--2274, 2022.

\bibitem{ppo}
Schulman, J., F.~Wolski, P.~Dhariwal, et~al.
\newblock Proximal policy optimization algorithms.
\newblock \emph{CoRR}, abs/1707.06347, 2017.

\bibitem{lagrangian}
Altman, E.
\newblock Constrained markov decision processes with total cost criteria: Lagrangian approach and dual linear program.
\newblock \emph{Math. Methods Oper. Res.}, 48(3):387--417, 1998.

\bibitem{fujimoto2018addressing}
Fujimoto, S., H.~Hoof, D.~Meger.
\newblock Addressing function approximation error in actor-critic methods.
\newblock In \emph{International conference on machine learning}, pages 1587--1596. PMLR, 2018.

\bibitem{anderson2020neurosymbolic}
Anderson, G., A.~Verma, I.~Dillig, et~al.
\newblock Neurosymbolic reinforcement learning with formally verified exploration.
\newblock In \emph{Proceedings of the 34th International Conference on Neural Information Processing Systems}, pages 6172--6183. 2020.

\bibitem{zhao2023state}
Zhao, W., T.~He, R.~Chen, et~al.
\newblock State-wise safe reinforcement learning: A survey.
\newblock \emph{arXiv preprint arXiv:2302.03122}, 2023.

\bibitem{xiong2024provably}
Xiong, N., Y.~Du, L.~Huang.
\newblock Provably safe reinforcement learning with step-wise violation constraints.
\newblock \emph{Advances in Neural Information Processing Systems}, 36, 2024.

\bibitem{odriozola2023shielded}
Odriozola-Olalde, H., M.~Zamalloa, N.~Arana-Arexolaleiba.
\newblock Shielded reinforcement learning: A review of reactive methods for safe learning.
\newblock In \emph{2023 IEEE/SICE International Symposium on System Integration (SII)}, pages 1--8. IEEE, 2023.

\bibitem{shperberg2022rule}
Shperberg, S.~S., B.~Liu.
\newblock A rule-based shield: Accumulating safety rules from catastrophic action effects.
\newblock In \emph{Proceedings of The 1st Conference on Lifelong Learning Agents}. 2022.

\bibitem{thomas2021safe}
Thomas, G., Y.~Luo, T.~Ma.
\newblock Safe reinforcement learning by imagining the near future.
\newblock \emph{Advances in Neural Information Processing Systems}, 34:13859--13869, 2021.

\bibitem{wen2018constrained}
Wen, M., U.~Topcu.
\newblock Constrained cross-entropy method for safe reinforcement learning.
\newblock \emph{Advances in Neural Information Processing Systems}, 31, 2018.

\bibitem{alur2023policy}
Alur, R., O.~Bastani, K.~Jothimurugan, et~al.
\newblock Policy synthesis and reinforcement learning for discounted ltl.
\newblock \emph{arXiv preprint arXiv:2305.17115}, 2023.

\bibitem{liu2020constrained}
Liu, Z., H.~Zhou, B.~Chen, et~al.
\newblock Constrained model-based reinforcement learning with robust cross-entropy method.
\newblock \emph{arXiv preprint arXiv:2010.07968}, 2020.

\bibitem{satija2020constrained}
Satija, H., P.~Amortila, J.~Pineau.
\newblock Constrained markov decision processes via backward value functions.
\newblock In \emph{International Conference on Machine Learning}, pages 8502--8511. PMLR, 2020.

\bibitem{bacci2021verifying}
Bacci, E., M.~Giacobbe, D.~Parker.
\newblock Verifying reinforcement learning up to infinity.
\newblock In \emph{Proceedings of the International Joint Conference on Artificial Intelligence}. International Joint Conferences on Artificial Intelligence Organization, 2021.

\bibitem{fulton2018safe}
Fulton, N., A.~Platzer.
\newblock Safe reinforcement learning via formal methods: Toward safe control through proof and learning.
\newblock In \emph{Proceedings of the AAAI Conference on Artificial Intelligence}, vol.~32. 2018.

\bibitem{anderson2023guiding}
Anderson, G., S.~Chaudhuri, I.~Dillig.
\newblock Guiding safe exploration with weakest preconditions.
\newblock In \emph{The Eleventh International Conference on Learning Representations}. 2023.

\bibitem{bastani2018verifiable}
Bastani, O., Y.~Pu, A.~Solar-Lezama.
\newblock Verifiable reinforcement learning via policy extraction.
\newblock \emph{Advances in neural information processing systems}, 31, 2018.

\bibitem{schmidt2021can}
Schmidt, L.~M., G.~Kontes, A.~Plinge, et~al.
\newblock Can you trust your autonomous car? interpretable and verifiably safe reinforcement learning.
\newblock In \emph{2021 IEEE Intelligent Vehicles Symposium (IV)}, pages 171--178. IEEE, 2021.

\bibitem{zhu2019inductive}
Zhu, H., Z.~Xiong, S.~Magill, et~al.
\newblock An inductive synthesis framework for verifiable reinforcement learning.
\newblock In \emph{Proceedings of the 40th ACM SIGPLAN conference on programming language design and implementation}, pages 686--701. 2019.

\bibitem{verma2018programmatically}
Verma, A., V.~Murali, R.~Singh, et~al.
\newblock Programmatically interpretable reinforcement learning.
\newblock In \emph{International Conference on Machine Learning}, pages 5045--5054. PMLR, 2018.

\bibitem{ivanov2019verisig}
Ivanov, R., J.~Weimer, R.~Alur, et~al.
\newblock Verisig: verifying safety properties of hybrid systems with neural network controllers.
\newblock In \emph{Proceedings of the 22nd ACM International Conference on Hybrid Systems: Computation and Control}, pages 169--178. 2019.

\bibitem{carr2023safe}
Carr, S., N.~Jansen, S.~Junges, et~al.
\newblock Safe reinforcement learning via shielding under partial observability.
\newblock In \emph{Proceedings of the AAAI Conference on Artificial Intelligence}, vol.~37, pages 14748--14756. 2023.

\bibitem{konighofer2023online}
K{\"o}nighofer, B., J.~Rudolf, A.~Palmisano, et~al.
\newblock Online shielding for reinforcement learning.
\newblock \emph{Innovations in Systems and Software Engineering}, 19(4):379--394, 2023.

\bibitem{jansen2018shielded}
Jansen, N., B.~K{\"o}nighofer, S.~Junges, et~al.
\newblock Shielded decision-making in mdps.
\newblock \emph{arXiv preprint arXiv:1807.06096}, 2018.

\bibitem{10.1007/978-3-030-61362-4_16}
K{\"o}nighofer, B., F.~Lorber, N.~Jansen, et~al.
\newblock Shield synthesis for reinforcement learning.
\newblock In T.~Margaria, B.~Steffen, eds., \emph{Leveraging Applications of Formal Methods, Verification and Validation: Verification Principles}, pages 290--306. Springer International Publishing, Cham, 2020.

\bibitem{yang2023safe}
Yang, W.-C., G.~Marra, G.~Rens, et~al.
\newblock Safe reinforcement learning via probabilistic logic shields.
\newblock \emph{arXiv preprint arXiv:2303.03226}, 2023.

\bibitem{thumm2023reducing}
Thumm, J., G.~Pelat, M.~Althoff.
\newblock Reducing safety interventions in provably safe reinforcement learning.
\newblock In \emph{2023 IEEE/RSJ International Conference on Intelligent Robots and Systems (IROS)}, pages 7515--7522. IEEE, 2023.

\bibitem{bastani2021safe-a}
Bastani, O., S.~Li, A.~Xu.
\newblock Safe reinforcement learning via statistical model predictive shielding.
\newblock In \emph{Robotics: Science and Systems}, pages 1--13. 2021.

\bibitem{li2020robust}
Li, S., O.~Bastani.
\newblock Robust model predictive shielding for safe reinforcement learning with stochastic dynamics.
\newblock In \emph{2020 IEEE International Conference on Robotics and Automation (ICRA)}, pages 7166--7172. IEEE, 2020.

\bibitem{sutton1990dyna}
Sutton, R.~S.
\newblock Integrated architectures for learning, planning, and reacting based on approximating dynamic programming.
\newblock In B.~Porter, R.~Mooney, eds., \emph{Machine Learning Proceedings 1990}, pages 216--224. Morgan Kaufmann, San Francisco (CA), 1990.

\bibitem{lin1992self}
Lin, L.-J.
\newblock Self-improving reactive agents based on reinforcement learning, planning and teaching.
\newblock \emph{Machine learning}, 8:293--321, 1992.

\bibitem{moerland2023model}
Moerland, T.~M., J.~Broekens, A.~Plaat, et~al.
\newblock Model-based reinforcement learning: A survey.
\newblock \emph{Foundations and Trends{\textregistered} in Machine Learning}, 16(1):1--118, 2023.

\bibitem{schrittwieser2021online}
Schrittwieser, J., T.~Hubert, A.~Mandhane, et~al.
\newblock Online and offline reinforcement learning by planning with a learned model.
\newblock \emph{Advances in Neural Information Processing Systems}, 34:27580--27591, 2021.

\bibitem{srouji2023safer}
Srouji, M., H.~Thomas, Y.-H.~H. Tsai, et~al.
\newblock Safer: Safe collision avoidance using focused and efficient trajectory search with reinforcement learning.
\newblock In \emph{2023 IEEE 19th International Conference on Automation Science and Engineering (CASE)}, pages 1--8. IEEE, 2023.

\bibitem{efroni2019tight}
Efroni, Y., N.~Merlis, M.~Ghavamzadeh, et~al.
\newblock Tight regret bounds for model-based reinforcement learning with greedy policies.
\newblock \emph{Advances in Neural Information Processing Systems}, 32, 2019.

\bibitem{hamrick2020role}
Hamrick, J.~B., A.~L. Friesen, F.~Behbahani, et~al.
\newblock On the role of planning in model-based deep reinforcement learning.
\newblock \emph{arXiv preprint arXiv:2011.04021}, 2020.

\bibitem{zhao2021consciousness}
Zhao, M., Z.~Liu, S.~Luan, et~al.
\newblock A consciousness-inspired planning agent for model-based reinforcement learning.
\newblock \emph{Advances in neural information processing systems}, 34:1569--1581, 2021.

\bibitem{Wang2020Exploring}
Wang, T., J.~Ba.
\newblock Exploring model-based planning with policy networks.
\newblock In \emph{International Conference on Learning Representations}. 2020.

\bibitem{silver2016mastering}
Silver, D., A.~Huang, C.~J. Maddison, et~al.
\newblock Mastering the game of go with deep neural networks and tree search.
\newblock \emph{nature}, 529(7587):484--489, 2016.

\bibitem{schrittwieser2020mastering}
Schrittwieser, J., I.~Antonoglou, T.~Hubert, et~al.
\newblock Mastering atari, go, chess and shogi by planning with a learned model.
\newblock \emph{Nature}, 588(7839):604--609, 2020.

\bibitem{hubert2021learning}
Hubert, T., J.~Schrittwieser, I.~Antonoglou, et~al.
\newblock Learning and planning in complex action spaces.
\newblock In \emph{International Conference on Machine Learning}, pages 4476--4486. PMLR, 2021.

\bibitem{kujanpaa2022continuous}
Kujanp{\"a}{\"a}, K., A.~Babadi, Y.~Zhao, et~al.
\newblock Continuous monte carlo graph search.
\newblock \emph{arXiv preprint arXiv:2210.01426}, 2022.

\bibitem{kim2020monte}
Kim, B., K.~Lee, S.~Lim, et~al.
\newblock Monte carlo tree search in continuous spaces using voronoi optimistic optimization with regret bounds.
\newblock In \emph{Proceedings of the AAAI Conference on Artificial Intelligence}, vol.~34, pages 9916--9924. 2020.

\bibitem{yee2016monte}
Yee, T., V.~Lis{\`y}, M.~H. Bowling, et~al.
\newblock Monte carlo tree search in continuous action spaces with execution uncertainty.
\newblock In \emph{IJCAI}, pages 690--697. 2016.

\bibitem{rajamaki2018continuous}
Rajam{\"a}ki, J., P.~H{\"a}m{\"a}l{\"a}inen.
\newblock Continuous control monte carlo tree search informed by multiple experts.
\newblock \emph{IEEE transactions on visualization and computer graphics}, 25(8):2540--2553, 2018.

\bibitem{leurent:tel-03035705}
Leurent, E.
\newblock \emph{{Safe and Efficient Reinforcement Learning for Behavioural Planning in Autonomous Driving}}.
\newblock Theses, {Universit{\'e} de Lille}, 2020.

\bibitem{huang2023safe}
Huang, W., J.~Ji, B.~Zhang, et~al.
\newblock Safe dreamerv3: Safe reinforcement learning with world models.
\newblock \emph{arXiv preprint arXiv:2307.07176}, 2023.

\bibitem{9233522}
Rong, J., N.~Luan.
\newblock Safe reinforcement learning with policy-guided planning for autonomous driving.
\newblock In \emph{2020 IEEE International Conference on Mechatronics and Automation (ICMA)}, pages 320--326. 2020.

\bibitem{wang2024shielded}
Wang, H., J.~Qin, Z.~Kan.
\newblock Shielded planning guided data-efficient and safe reinforcement learning.
\newblock \emph{IEEE Transactions on Neural Networks and Learning Systems}, 2024.

\bibitem{leung2020infusing}
Leung, K., E.~Schmerling, M.~Zhang, et~al.
\newblock On infusing reachability-based safety assurance within planning frameworks for human--robot vehicle interactions.
\newblock \emph{The International Journal of Robotics Research}, 39(10-11):1326--1345, 2020.

\bibitem{9833266}
Selim, M., A.~Alanwar, S.~Kousik, et~al.
\newblock Safe reinforcement learning using black-box reachability analysis.
\newblock \emph{IEEE Robotics and Automation Letters}, 7(4):10665--10672, 2022.

\bibitem{rong2020safe}
Rong, J., N.~Luan.
\newblock Safe reinforcement learning with policy-guided planning for autonomous driving.
\newblock In \emph{2020 IEEE International Conference on Mechatronics and Automation (ICMA)}, pages 320--326. IEEE, 2020.

\bibitem{cbf1}
Breeden, J., D.~Panagou.
\newblock Predictive control barrier functions for online safety critical control.
\newblock In \emph{61st {IEEE} Conference on Decision and Control, {CDC} 2022, Cancun, Mexico, December 6-9, 2022}, pages 924--931. {IEEE}, 2022.

\bibitem{cbf2}
Wabersich, K.~P., M.~N. Zeilinger.
\newblock Predictive control barrier functions: Enhanced safety mechanisms for learning-based control.
\newblock \emph{{IEEE} Trans. Autom. Control.}, 68(5):2638--2651, 2023.

\bibitem{cbfjac3}
Wabersich, K.~P., A.~J. Taylor, J.~J. Choi, et~al.
\newblock Data-driven safety filters: Hamilton-jacobi reachability, control barrier functions, and predictive methods for uncertain systems.
\newblock \emph{IEEE Control Systems Magazine}, 43(5):137--177, 2023.

\bibitem{cbf4}
Qin, Z., K.~Zhang, Y.~Chen, et~al.
\newblock Learning safe multi-agent control with decentralized neural barrier certificates.
\newblock In \emph{9th International Conference on Learning Representations, {ICLR} 2021, Virtual Event, Austria, May 3-7, 2021}. OpenReview.net, 2021.

\bibitem{cbf5}
Qin, Z., D.~Sun, C.~Fan.
\newblock Sablas: Learning safe control for black-box dynamical systems.
\newblock \emph{{IEEE} Robotics Autom. Lett.}, 7(2):1928--1935, 2022.

\bibitem{jac6}
Bui, M., M.~Lu, R.~Hojabr, et~al.
\newblock Real-time hamilton-jacobi reachability analysis of autonomous system with an {FPGA}.
\newblock In \emph{{IEEE/RSJ} International Conference on Intelligent Robots and Systems, {IROS} 2021, Prague, Czech Republic, September 27 - Oct. 1, 2021}, pages 1666--1673. {IEEE}, 2021.

\bibitem{cbfintro1}
Ames, A.~D., J.~W. Grizzle, P.~Tabuada.
\newblock Control barrier function based quadratic programs with application to adaptive cruise control.
\newblock In \emph{53rd {IEEE} Conference on Decision and Control, {CDC} 2014, Los Angeles, CA, USA, December 15-17, 2014}, pages 6271--6278. {IEEE}, 2014.

\bibitem{cbfintro2}
Ames, A.~D., X.~Xu, J.~W. Grizzle, et~al.
\newblock Control barrier function based quadratic programs for safety critical systems.
\newblock \emph{{IEEE} Trans. Autom. Control.}, 62(8):3861--3876, 2017.

\bibitem{cbfapp1}
Xu, X., P.~Tabuada, J.~W. Grizzle, et~al.
\newblock Robustness of control barrier functions for safety critical control.
\newblock \emph{CoRR}, abs/1612.01554, 2016.

\bibitem{cbfapp2}
Xu, X., T.~Waters, D.~Pickem, et~al.
\newblock Realizing simultaneous lane keeping and adaptive speed regulation on accessible mobile robot testbeds.
\newblock In \emph{{IEEE} Conference on Control Technology and Applications, {CCTA} 2017, Mauna Lani Resort, HI, USA, August 27-30, 2017}, pages 1769--1775. {IEEE}, 2017.

\bibitem{cbfapp3}
Borrmann, U., L.~Wang, A.~D. Ames, et~al.
\newblock Control barrier certificates for safe swarm behavior.
\newblock In M.~Egerstedt, Y.~Wardi, eds., \emph{5th {IFAC} Conference on Analysis and Design of Hybrid Systems, {ADHS} 2015, Atlanta, GA, USA, October 14-16, 2015}, vol.~48 of \emph{IFAC-PapersOnLine}, pages 68--73. Elsevier, 2015.

\bibitem{cbfapp4}
Wu, G., K.~Sreenath.
\newblock Safety-critical control of a planar quadrotor.
\newblock In \emph{2016 American Control Conference, {ACC} 2016, Boston, MA, USA, July 6-8, 2016}, pages 2252--2258. {IEEE}, 2016.

\bibitem{walsh2010integrating}
Walsh, T., S.~Goschin, M.~Littman.
\newblock Integrating sample-based planning and model-based reinforcement learning.
\newblock In \emph{Proceedings of the aaai conference on artificial intelligence}, vol.~24, pages 612--617. 2010.

\bibitem{karaman2011sampling}
Karaman, S., E.~Frazzoli.
\newblock Sampling-based algorithms for optimal motion planning.
\newblock \emph{The international journal of robotics research}, 30(7):846--894, 2011.

\bibitem{coulom2006efficient}
Coulom, R.
\newblock Efficient selectivity and backup operators in monte-carlo tree search.
\newblock In \emph{International conference on computers and games}, pages 72--83. Springer, 2006.

\bibitem{multifilteredrl}
Vinod, A.~P., S.~Safaoui, A.~Chakrabarty, et~al.
\newblock Safe multi-agent motion planning via filtered reinforcement learning.
\newblock In \emph{2022 International Conference on Robotics and Automation, {ICRA} 2022, Philadelphia, PA, USA, May 23-27, 2022}, pages 7270--7276. {IEEE}, 2022.

\bibitem{silver2018general}
Silver, D., T.~Hubert, J.~Schrittwieser, et~al.
\newblock A general reinforcement learning algorithm that masters chess, shogi, and go through self-play.
\newblock \emph{Science}, 362(6419):1140--1144, 2018.

\bibitem{couetoux2011continuous}
Cou{\"e}toux, A., J.-B. Hoock, N.~Sokolovska, et~al.
\newblock Continuous upper confidence trees.
\newblock In \emph{Learning and Intelligent Optimization: 5th International Conference, LION 5, Rome, Italy, January 17-21, 2011. Selected Papers 5}, pages 433--445. Springer, 2011.

\bibitem{auer2002using}
Auer, P.
\newblock Using confidence bounds for exploitation-exploration trade-offs.
\newblock \emph{Journal of Machine Learning Research}, 3(Nov):397--422, 2002.

\bibitem{brockman2016openai}
Brockman, G., V.~Cheung, L.~Pettersson, et~al.
\newblock Openai gym.
\newblock \emph{arXiv preprint arXiv:1606.01540}, 2016.

\bibitem{openai_safety_gym}
Achiam, J., A.~Ray, D.~Amodei.
\newblock Safety gym.
\newblock \url{https://openai.com/research/safety-gym}, 2019.
\newblock Accessed: Nov 27, 2023.

\end{thebibliography}
\bibliographystyle{bib}

\newpage

\appendix

\section{Supplemental Materials}
In this section, we provide supplemental material complementary to the main content of the paper. This includes a complete proof for \autoref{th:regret} and additional details omitted from \autoref{sec:eval} due to space constraints.

\subsection{Proof of \autoref{th:regret}}
\label{app:proof}

Throughout this section, we denote $Q, Q^\star,$ and $V^\star$ to be the agent $Q$ function, the optimal $Q$ function, and the optimal $V$ function respectively.

Our planner is parameterized by a fixed plan depth $n$ and a sampling limit $m.$ The planner is tasked with finding actions $a_{0:n}$ to optimize the objective $J_Q(s_0, a_{0:n}) = \left(\sum_{i=0}^{n-1} \gamma^i R_i\right) + \gamma^n Q(s_n,a_n).$ Let $\epsilon_m$ be the supremum of $J_Q(s, a_{0:n}^\star) - J_Q(s, a_{0:n})$ over $s \in \mathcal S,$ where $a_{0:n} = \planrec(s)$ and $a_{0:n}^\star$ is the optimal sequence of actions with respect to the $J_Q$ objective.

We assume that our planner is probabilistically complete and asymptotically optimal. In context, this means that $\lim_{m \to \infty} \epsilon_m = 0$ with probability 1. With this grounding established, we state the extended version of Theorem 5.1.

\begin{theorem}
    With probability 1, we have:

    \begin{enumerate}

    \item Under the assumption that $Q$ is $\epsilon-$close to $Q^\star,$ we have $\lim_{m \to \infty} RR(\backupp^\star, \pi_{dmps}) = \mathcal O(\epsilon \gamma^n).$
    \item Under the assumptions that the maximum reward per timestep is bounded, that $\mathcal S \times \mathcal A$ is compact, and that $Q$ is a continuous function, we have $\lim_{m \to \infty} RR(\backupp^\star, \pi_{dmps}) = \mathcal O(\gamma^n).$
    \item Under the assumptions that $Q$ and $Q^\star$ are both Lipschitz continuous, that $\|\mathcal T(s,a) - s\|_2$ is bounded over all $(s,a) \in \mathcal S \times \mathcal A,$ and that the initial state space $\mathcal S_0$ and the action space $\mathcal A$ are both bounded, we have $\lim_{m \to \infty} RR(\backupp^\star, \pi_{\text{dmps}}) = \mathcal O(n \gamma^n).$

    \end{enumerate}
\end{theorem}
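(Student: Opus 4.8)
The plan is to reduce the bound on the \emph{expected} recovery regret to a pointwise bound on $V^\star(s) - Q^\star(s, \backupp^\star(s))$ holding at every state $s$ where the shield fires, and then integrate this bound against $\rho^{\dmpsp}$ using $\mathbb{I}_\text{backup}(s) \le 1$. Since probabilistic completeness guarantees that, as $m \to \infty$, $\planrec$ almost surely returns a genuine plan (so the $\bot$ fallback branch of $\backupp^\star$ disappears in the limit), I may take $\backupp^\star(s) = a_0$, the first action of the returned plan $a_{0:n}$. To control the pointwise gap I would first establish two elementary identities. A telescoping argument using $V^\star(s_i) \ge Q^\star(s_i,a_i) = \mathcal{R}(s_i,a_i) + \gamma V^\star(s_{i+1})$ along the plan states $s_0=s,\dots,s_n$ gives $Q^\star(s,a_0) \ge J_{Q^\star}(s,a_{0:n})$, where $J_{Q^\star}$ denotes the planner objective with the terminal $Q$ replaced by $Q^\star$. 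Separately, the $n$-step Bellman optimality equation yields $\max_{b_{0:n}} J_{Q^\star}(s,b_{0:n}) = V^\star(s)$ (the inner maximization over $b_n$ converts $Q^\star(s_n,b_n)$ into $V^\star(s_n)$, and the rest telescopes). Together these reduce the pointwise regret to $V^\star(s) - Q^\star(s,a_0) \le V^\star(s) - J_{Q^\star}(s,a_{0:n})$, a gap living entirely inside the planner objective.

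Next I would swap $Q^\star$ for the learned $Q$ at both endpoints. Because $J_{Q^\star}$ and $J_Q$ differ only in their terminal term, each swap costs at most $\gamma^n \Delta_s$, where $\Delta_s$ bounds $|Q - Q^\star|$ over all $(s',a')$ with $s'$ reachable within $n$ steps of $s$. Applying the planner's near-optimality for $J_Q$, namely $J_Q(s,a_{0:n}) \ge \max_{\text{recoverable } b} J_Q(s,b) - \epsilon_m$, and the feasibility of the $J_{Q^\star}$-optimal plan, I obtain the key pointwise estimate $V^\star(s) - Q^\star(s,a_0) \le 2\gamma^n \Delta_s + \epsilon_m$. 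Letting $m \to \infty$ and invoking asymptotic optimality ($\epsilon_m \to 0$ almost surely) leaves $2\gamma^n \Delta_s$, after which the three parts differ only in how $\Delta_s$ is controlled.

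Under hypothesis (1) we have $\Delta_s \le \epsilon$ directly, so integrating gives $\mathcal{O}(\epsilon\gamma^n)$. Under (2), continuity of $Q$ on the compact $\mathcal S \times \mathcal A$ together with boundedness of $Q^\star$ (from bounded reward) makes $\sup_s \Delta_s$ a finite constant, yielding $\mathcal{O}(\gamma^n)$. Under (3) there is no global bound, so I would instead combine boundedness of the per-step displacement $\|\mathcal T(s,a)-s\|_2$ with Lipschitzness of $Q$ and $Q^\star$: a state visited at time $t$ lies within distance $\mathcal{O}(t+n)$ of $\mathcal S_0$, on which Lipschitz functions grow at most linearly, so $\Delta_s = \mathcal{O}(t+n)$. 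Integrating $\gamma^n\,\mathcal{O}(t+n)$ against the discounted visitation weights $(1-\gamma)\gamma^t$ and using the convergence of $\sum_t t\gamma^t$ then collapses the $t$-dependence and produces $\mathcal{O}(n\gamma^n)$.

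The hard part will be the two feasibility/growth issues rather than the algebra. First, the $n$-step Bellman identity and the near-optimality step implicitly require that the $J_{Q^\star}$-optimal $n$-step plan from a recoverable state remains within recoverable states, so that $\max_{\text{recoverable}} J_{Q^\star}(s,\cdot) = V^\star(s)$; this is precisely where the ``mild MDP assumption'' enters, and I would isolate it as a standing hypothesis (e.g.\ that the optimal trajectory from a recoverable state stays recoverable), since otherwise the regret is measured against an unattainable optimum. Second, in case (3) the subtlety is that the support of $\rho^{\dmpsp}$ need not lie near $\mathcal S_0$, so the linear-in-$n$ bound on $\Delta_s$ must be tracked jointly with the visitation time $t$ and the geometric discount; obtaining $\mathcal{O}(n\gamma^n)$ rather than a worse power of $n$ hinges on this interaction and on the finiteness of $\sum_t t\gamma^t$.
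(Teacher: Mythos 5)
Your proposal follows essentially the same route as the paper's proof: the same pointwise lemma bounding $V^\star(s) - Q^\star(s,\backupp^\star(s)) \le 2\gamma^n B + \epsilon_m$ via two terminal-value swaps ($Q^\star \leftrightarrow Q$ at the plan's end state) plus planner near-optimality, the same three specializations of the bound $B$, and for part 3 the same central-point Lipschitz argument integrated against the discounted visitation weights using $\sum_k k\gamma^k < \infty$. The one point where you go beyond the paper is in explicitly isolating the requirement that the infinite-horizon-optimal plan remain within recoverable states (so that it is feasible for the constrained planner objective and the comparison $J_Q(s,a_{0:n}) \ge J_Q(s,a^\star_{0:n}) - \epsilon_m$ is legitimate); the paper leaves this buried in its ``mild assumptions of the MDP'' phrasing, and your making it a standing hypothesis is a fair and worthwhile clarification rather than a departure in method.
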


We first show the following general lemma.

\begin{lemma}
    Fix a state $s \in \mathcal S,$ and let $a = \backupp^\star(s)$ be the first action returned by \planrec. Let $\mathcal S_n$ be the set of all states reachable from $s$ in $n$ steps, and let $B = \sup_{s' \in \mathcal S_n, a' \in \mathcal A} |Q^\star(s',a') - Q(s',a')|.$ Then, $0 \leq V^\star(s) - Q^\star(s,a) \leq 2B \gamma^n + \epsilon_m.$
\end{lemma}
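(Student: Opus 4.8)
The plan is to prove the two inequalities separately: the lower bound is immediate, and the upper bound follows from a Bellman-unrolling (telescoping) argument that converts the planner's suboptimality in the $J_Q$ objective into a bound on the optimal-value gap. For the lower bound, observe that $V^\star(s) = \max_{a' \in \mathcal{A}} Q^\star(s, a') \geq Q^\star(s, a)$ for the particular action $a = \backupp^\star(s)$, so $V^\star(s) - Q^\star(s, a) \geq 0$.

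For the upper bound, the key device I would introduce is the \emph{oracle} objective $\tilde{J}(s, b_{0:n}) = \bigl(\sum_{i=0}^{n-1} \gamma^i \mathcal{R}(s_i, b_i)\bigr) + \gamma^n Q^\star(s_n, b_n)$, identical to $J_Q$ except that the terminal estimate $Q$ is replaced by the true $Q^\star$ (with $s_0 = s$ and $s_{i+1} = \mathcal{T}(s_i, b_i)$). Two facts about $\tilde J$ drive the argument. First, unrolling the Bellman optimality equation $V^\star(s) = \max_a[\mathcal{R}(s,a) + \gamma V^\star(\mathcal{T}(s,a))]$ exactly $n$ times and applying $V^\star(s_n) = \max_{b_n} Q^\star(s_n, b_n)$ at the horizon yields the identity $V^\star(s) = \max_{b_{0:n}} \tilde{J}(s, b_{0:n})$. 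Second, applying the same recursion along the \emph{particular} plan $a_{0:n}$ returned by $\planrec$ --- using $V^\star(s_i) \geq Q^\star(s_i, a_i) = \mathcal{R}(s_i, a_i) + \gamma V^\star(s_{i+1})$ at each step and $V^\star(s_n) \geq Q^\star(s_n, a_n)$ at the end --- gives $Q^\star(s, a) \geq \tilde{J}(s, a_{0:n})$, where $a = a_0$. Subtracting, $V^\star(s) - Q^\star(s, a) \leq \max_{b_{0:n}} \tilde{J}(s, b_{0:n}) - \tilde{J}(s, a_{0:n})$.

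It then remains to pass from the oracle objective $\tilde{J}$ back to the objective $J_Q$ that the planner actually optimizes. Since the two differ only in their terminal term and every reachable terminal state $s_n$ lies in $\mathcal{S}_n$, I would bound $|\tilde{J}(s, b_{0:n}) - J_Q(s, b_{0:n})| = \gamma^n |Q^\star(s_n, b_n) - Q(s_n, b_n)| \leq \gamma^n B$ uniformly over all plans. Applying this to the $\tilde J$-maximizer and to the planner's plan $a_{0:n}$, and then using the definition of $\epsilon_m$ to control $J_Q(s, a_{0:n}^\star) - J_Q(s, a_{0:n})$, gives $V^\star(s) - Q^\star(s, a) \leq 2\gamma^n B + \epsilon_m$, completing the proof.

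I expect the main obstacle to be the identity $V^\star(s) = \max_{b_{0:n}} \tilde{J}(s, b_{0:n})$: this requires justifying the exact $n$-fold unrolling of Bellman optimality in the deterministic-transition setting and verifying that jointly maximizing over the horizon actions $b_{0:n-1}$ and the terminal action $b_n$ reproduces $V^\star$ without mishandling the reward at step $n$. A second delicate point is reconciling the recoverability constraint $\recf(s_i, \backupp)$ that $\planrec$ enforces with the maximizations defining $V^\star$ and $a_{0:n}^\star$; I would keep the argument consistent by letting all three range over the same recoverable plan set, so that the unrolling identity, the definition of $\epsilon_m$, and the planner's asymptotic optimality all refer to the same feasible region.
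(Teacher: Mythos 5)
Your proof is correct and follows essentially the same route as the paper's: the paper likewise unrolls Bellman optimality along the planner's plan to get $Q^\star(s,a) \geq \sum_{i=0}^{n-1}\gamma^i R_i + \gamma^n Q^\star(s_n,a_n)$, pays $B\gamma^n$ twice to swap $Q^\star$ and $Q$ at the two terminal states, and pays $\epsilon_m$ for the planner's $J_Q$-suboptimality; your oracle objective $\tilde J$ and the identity $V^\star(s) = \max_{b_{0:n}} \tilde J(s,b_{0:n})$ are just a repackaging of the paper's direct comparison against the infinite-horizon-optimal sequence $a^\star_{0:n}$. The recoverability-constraint subtlety you flag is real but is silently glossed over in the paper, which implicitly treats $V^\star$, $\epsilon_m$, and the planner's feasible set as consistent --- precisely the resolution you propose.
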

\begin{proof}
    The bound $V^\star(s) - Q^\star(s,a) \geq 0$ follows trivially from the definitions of $V^\star$ and $Q^\star.$ We now prove the upper bound.

Denote $a_{0:n}$ as the sequence of actions returned by the
planner. Note that $a = a_0.$ Denote $a_{0:n}^\star$ as the sequence of actions that are optimal with respect to true infinite-horizon return from $s$ (not necessarily with respect to $J_Q$).

We allow $R_{0:n-1}$ and $R_{0:n-1}^\star$ to be the reward sequences attained by rolling out $a_{0:n-1}$ and $a_{0:n-1}^\star$ respectively. Similarly, we allow $s_n$ and $s_n^\star$ to be the states reached at the end of executing $a_{0:n-1}, a_{0:n-1}',$ and $a_{0:n-1}^\star$ respectively.

First, we have

\[Q^\star(s,a) \geq \left(\sum_{i=0}^{n-1} \gamma^i R_i\right) + \gamma^n Q^\star(s_n,a_n).\]

This follows immediately from the definition of $Q^\star$ and the fact that $a = a_0.$

Since $|Q^\star(s_n,a_n) - Q(s_n, a_n)| \leq B,$ we have

\[\left(\sum_{i=0}^{n-1} \gamma^i R_i\right) + \gamma^n Q^\star(s_n,a_n) \geq \left(\sum_{i=0}^{n-1} \gamma^i R_i\right) + \gamma^n Q(s_n,a_n) - B\gamma^n\]
\[= J_Q(s, a_{0:n}) - B\gamma^n.\]

We know that $a_{0:n}$ optimizes $J_Q$ to within tolerance of $\epsilon_m,$ so we can write $J_Q(s,a_{0:n}) \geq J_Q(s,a_{0:n}^\star) - \epsilon_m.$ We get

\[J_Q(s, a_{0:n}) - B\gamma^n \geq J_Q(s, a_{0:n}^\star) - \epsilon_m - B\gamma^n\]
\[= \left(\sum_{i=0}^{n-1} \gamma^i R_i^\star\right) + \gamma^n Q(s_n^\star,a_n^\star) - \epsilon_m - B\gamma^n\]

Invoking again $|Q^\star(s_n^\star,a_n^\star) - Q(s_n^\star, a_n^\star)| \leq B,$ we get

\[\left(\sum_{i=0}^{n-1} \gamma^i R_i^\star\right) + \gamma^n Q(s_n^\star,a_n^\star) - \epsilon_m - B\gamma^n \geq \left(\sum_{i=0}^{n-1} \gamma^i R_i^\star\right) + \gamma^n Q^\star(s_n^\star,a_n^\star) - \epsilon_m - 2B\gamma^n\]
\[= V^\star(s) - \epsilon - 2B\gamma^n.\]

Putting everything together gives us $Q^\star(s,a) \geq V^\star(s) - \epsilon - 2B\gamma^n.$ Rearranging this gives us the desired claim.
\end{proof}

With this, we can begin proving the main theorem.

\begin{lemma}
\textbf{(Part 1 of Thm A.1)} Under the assumption that $Q$ is $\epsilon$-close to $Q^\star,$ we have $lim_{m \to \infty} RR(\backupp^\star, \pi_{\text{dmps}}) = \mathcal O(\epsilon \gamma^n)$ with probability 1.
\end{lemma}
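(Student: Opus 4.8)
The plan is to derive the bound almost immediately from the general lemma, by specializing its constant $B$ to the global approximation error. First I would observe that the hypothesis ``$Q$ is $\epsilon$-close to $Q^\star$'' means $\sup_{s',a'}|Q^\star(s',a') - Q(s',a')| \le \epsilon$, so in particular the quantity $B = \sup_{s'\in\mathcal S_n,\, a'\in\mathcal A}|Q^\star(s',a')-Q(s',a')|$ appearing in the lemma satisfies $B \le \epsilon$ for \emph{every} choice of starting state $s$ (a supremum over the subset $\mathcal S_n$ is bounded by the global supremum). Plugging this into the lemma gives the \emph{uniform} pointwise bound $0 \le V^\star(s) - Q^\star(s,\backupp^\star(s)) \le 2\epsilon\gamma^n + \epsilon_m$ for all $s \in \mathcal S$, where $\epsilon_m$ is the planner's worst-case suboptimality with respect to the $J_Q$ objective.

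Next I would push this through the expectation defining the recovery regret. Because the integrand $V^\star(s) - Q^\star(s,\backupp^\star(s))$ is nonnegative and bounded above by $2\epsilon\gamma^n + \epsilon_m$ uniformly in $s$, and because $\mathbb{I}_\text{backup}(s) \in \{0,1\}$ while $\rho^{\pi_{\text{dmps}}}$ is a probability distribution, monotonicity of expectation yields
\[
0 \;\le\; RR(\backupp^\star, \pi_{\text{dmps}}) \;\le\; 2\epsilon\gamma^n + \epsilon_m .
\]
The crucial point here is that the pointwise bound is uniform and does \emph{not} depend on the visitation distribution $\rho^{\pi_{\text{dmps}}}$ (which itself varies with $m$ through $\backupp^\star$); this is exactly what lets a single $\epsilon_m$ control the regret regardless of how the delegation indicator and visitation measure behave.

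Finally I would take $m \to \infty$. Working on the probability-$1$ event on which the planner is asymptotically optimal, we have $\epsilon_m \to 0$, so taking limits in the displayed inequality gives $\lim_{m\to\infty} RR(\backupp^\star, \pi_{\text{dmps}}) \le 2\epsilon\gamma^n$, that is, $\mathcal O(\epsilon\gamma^n)$, almost surely. I do not anticipate a substantive obstacle, since the heavy lifting is done by the general lemma; the only things to handle carefully are the order of quantifiers --- ensuring the lemma's bound holds \emph{uniformly} over states so that it survives integration against the $m$-dependent distribution $\rho^{\pi_{\text{dmps}}}$ --- and restricting attention to the almost-sure event where $\epsilon_m \to 0$, which is what justifies the ``with probability $1$'' qualifier.
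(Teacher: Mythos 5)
Your proposal is correct and follows essentially the same route as the paper's proof: bound $B \leq \epsilon$ in the general lemma, obtain the uniform pointwise bound $V^\star(s) - Q^\star(s,\backupp^\star(s)) \leq 2\epsilon\gamma^n + \epsilon_m$, pass it through the expectation defining $RR$, and let $\epsilon_m \to 0$ on the almost-sure event given by asymptotic optimality. Your added care about the uniformity of the bound surviving integration against the $m$-dependent visitation distribution is a point the paper glosses over, but it is a refinement of the same argument, not a different one.
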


\begin{proof}    

In the context of Lemma A.1., we see that $B \leq \epsilon.$ Thus, we conclude that for all states $s \in \mathcal S,$ we have $V^\star(s) - Q^\star(s, \pi_{\text{dmps}}(s)) \leq 2\epsilon \gamma^n + \epsilon_m.$ Since $RR$ is simply an expectation of $V^\star(s) - Q^\star(s, \backupp^\star(s))$ over some state distribution over $\mathcal S,$ we can conclude $RR(\backupp^\star, \pi_{\text{dmps}}) \leq 2 \epsilon \gamma^n + \epsilon_m.$ We know from asymptotic optimality assumption that $\epsilon_m$ decays to 0 as $m$ goes to infinity, so we get $\lim_{m \to \infty} RR(\backupp^\star, \pi_{dmps}) = \mathcal O(\epsilon\gamma^n).$
\end{proof}

\begin{lemma}
\textbf{(Part 2 of Thm A.1)} Under the assumption that the maximum reward per timestep is bounded, that $\mathcal S \times \mathcal A$ is compact, and that $Q$ is a continuous function, we have $\lim_{m \to \infty} RR(\backupp^\star, \pi_{dmps}) = \mathcal O(\gamma^n)$ with probability 1.
\end{lemma}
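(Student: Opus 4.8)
The plan is to reuse the pointwise estimate from Lemma~A.1 verbatim, but to replace the state-dependent quantity $B$ by a single global constant that is independent of both $n$ and $m$. Recall that Lemma~A.1 gives, for every state $s$ with $a = \backupp^\star(s)$,
\[
0 \leq V^\star(s) - Q^\star(s,a) \leq 2B\gamma^n + \epsilon_m,
\]
where $B = \sup_{s' \in \mathcal{S}_n,\, a' \in \mathcal{A}} |Q^\star(s',a') - Q(s',a')|$ ranges over states reachable in $n$ steps from $s$. In Part~1 we controlled $B$ directly via the $\epsilon$-closeness hypothesis; here that hypothesis is unavailable, so the key step is to show that the \emph{global} supremum
\[
B_{\max} \doteq \sup_{(s',a') \in \mathcal{S} \times \mathcal{A}} |Q^\star(s',a') - Q(s',a')|
\]
is finite, after which $B \leq B_{\max}$ holds uniformly in $s$.

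Next I would bound $Q^\star$ and $Q$ separately. For $Q^\star$: since the per-timestep reward is bounded, say $|\mathcal{R}| \leq R_{\max}$, the discounted infinite-horizon return obeys $|Q^\star(s',a')| \leq \sum_{t=0}^{\infty} \gamma^t R_{\max} = R_{\max}/(1-\gamma)$, so $Q^\star$ is bounded on all of $\mathcal{S}\times\mathcal{A}$ using only the reward bound. For $Q$: a continuous function on the compact set $\mathcal{S}\times\mathcal{A}$ attains its extrema by the extreme value theorem, so $\|Q\|_\infty < \infty$. Combining the two gives $B_{\max} \leq R_{\max}/(1-\gamma) + \|Q\|_\infty < \infty$, a finite constant depending on neither $n$ nor $m$.

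With $B_{\max}$ in hand, I would apply Lemma~A.1 at every state, using $B \leq B_{\max}$, to obtain the uniform bound $V^\star(s) - Q^\star(s,\backupp^\star(s)) \leq 2B_{\max}\gamma^n + \epsilon_m$ for all $s$. Because $RR(\backupp^\star, \pi_{\textnormal{dmps}})$ is an expectation of the nonnegative quantity $V^\star(s) - Q^\star(s,\backupp^\star(s))$ over the visitation distribution, weighted by the indicator $\mathbb{I}_\text{backup}(s) \in \{0,1\}$ (which only removes mass), the same bound survives the expectation: $RR(\backupp^\star, \pi_{\textnormal{dmps}}) \leq 2B_{\max}\gamma^n + \epsilon_m$. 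Invoking asymptotic optimality — $\epsilon_m \to 0$ with probability $1$ — and letting $m \to \infty$ then yields $\lim_{m\to\infty} RR(\backupp^\star,\pi_{\textnormal{dmps}}) \leq 2B_{\max}\gamma^n = \mathcal{O}(\gamma^n)$.

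The only genuinely substantive step is the finiteness of $B_{\max}$; everything downstream is inherited from Lemma~A.1 and mirrors Part~1. The subtlety to watch is the asymmetric use of the hypotheses: compactness and continuity are needed only to bound $Q$, whereas the boundedness of $Q^\star$ comes purely from the reward bound. In particular I would be careful never to assume $Q^\star$ is continuous — only bounded — since the optimal value function may well be discontinuous in the presence of unsafe regions, and such continuity is not required anywhere in the argument.
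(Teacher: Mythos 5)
Your proposal is correct and follows essentially the same route as the paper's own proof: both bound $Q$ via the extreme value theorem (continuity on a compact domain), bound $Q^\star$ by $R_{\max}/(1-\gamma)$ from the reward bound, combine via the triangle inequality into a single global constant, and feed that constant into the earlier pointwise lemma before letting $\epsilon_m \to 0$. The only cosmetic difference is that you invoke the general pointwise lemma directly, whereas the paper routes through Part~1 with $\epsilon = C_Q + C_{Q^\star}$; the substance is identical.
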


\begin{proof}
It suffices to show that $|Q^\star(s,a) - Q(s,a)|$ is bounded over all $(s,a) \in \mathcal S \times \mathcal A.$ We can then simply apply Lemma A.3 to get the $\mathcal O(\gamma^n)$ bound.

It is known that a continuous function with compact domain has a maximum and minimum value. Thus, there exists some $C_Q$ such that $|Q(s,a)| < C_Q$ for all $(s,a) \in \mathcal S \times \mathcal A.$

Pick a constant $R$ such that the absolute value of the reward at any timestep is less than $R.$ We see that $Q^\star(s,a)$ is less than $\sum_{t=0}^\infty R \gamma^t = \frac{R}{1-\gamma}$ and greater than $\sum_{t=0}^\infty (-R) \gamma^t = \frac{-R}{1-\gamma}.$ Consequently, letting $C_{Q^\star} = \frac{R}{1-\gamma},$ we get $|Q^\star(s,a)| < C_{Q^\star}$ over all $(s,a) \in \mathcal S \times \mathcal A.$

With this, $|Q^\star(s,a) - Q(s,a)| \leq |Q^\star(s,a)| + |Q(s,a)| < C_Q + C_{Q^\star}$ over all $(s,a) \in \mathcal S \times \mathcal A.$ This shows the boundedness we needed to attain the asymptotic bound.
\end{proof}

\begin{lemma}
\textbf{(Part 3 of Thm A.1)} Under the assumptions that $Q$ and $Q^\star$ are both Lipschitz continuous, that $\|\mathcal T(s,a) - s\|_2$ is bounded over all $(s,a) \in \mathcal S \times \mathcal A,$ and that the initial state space $\mathcal S_0$ and the action space $\mathcal A$ are both bounded, we have $\lim_{m \to \infty} RR(\backupp^\star, \pi_{\text{dmps}}) = \mathcal O(n \gamma^n)$ with probability 1.
\end{lemma}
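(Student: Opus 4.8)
The plan is to reduce the statement to a state-dependent application of Lemma A.1 and then control the resulting per-state error using the discounting built into the visitation distribution $\rho^{\pi_{\text{dmps}}}$. As in Parts 1 and 2, the only quantity left to bound is $B(s) = \sup_{s' \in \mathcal S_n,\, a' \in \mathcal A}|Q^\star(s',a') - Q(s',a')|$, where $\mathcal S_n$ is the set of states reachable from $s$ in $n$ steps. Lemma A.1 gives $V^\star(s) - Q^\star(s, \backupp^\star(s)) \le 2B(s)\gamma^n + \epsilon_m$, and since $\mathbb I_{\text{backup}}(s) \le 1$ and the integrand is nonnegative,
\[
RR(\backupp^\star, \pi_{\text{dmps}}) \le 2\gamma^n\, \mathbb E_{s \sim \rho^{\pi_{\text{dmps}}}}[B(s)] + \epsilon_m .
\]
Unlike Part 2, I cannot bound $B(s)$ by a single global constant, since $\mathcal S \times \mathcal A$ need not be compact; instead I track how $B(s)$ grows with the ``age'' of the state along a trajectory.

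First I would set $\Delta = \sup_{(s,a)}\|\mathcal T(s,a) - s\|_2 < \infty$ and observe that each environment step displaces the state by at most $\Delta$. Hence a state $s_t$ visited at time step $t$ of a $\pi_{\text{dmps}}$ rollout lies within distance $t\Delta$ of its initial state in $\mathcal S_0$, and every state of $\mathcal S_n$ reached during the $n$-step recovery plan from $s_t$ lies within $(t+n)\Delta$ of $\mathcal S_0$. Next, since $Q$ and $Q^\star$ are Lipschitz, so is $Q - Q^\star$, with constant $L = L_Q + L_{Q^\star}$; and because $\mathcal S_0 \times \mathcal A$ is bounded, the Lipschitz function $|Q - Q^\star|$ is bounded there by some finite $C_0$. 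Comparing any recovery point $(s',a')$ to the point $(s_0, a') \in \mathcal S_0 \times \mathcal A$ that differs only in the state coordinate yields $|Q(s',a') - Q^\star(s',a')| \le C_0 + L\|s'-s_0\|_2$, and therefore $B(s_t) \le C_0 + L(t+n)\Delta$.

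The final step substitutes this into the discounted expectation. Writing $\rho^{\pi_{\text{dmps}}} = (1-\gamma)\sum_{t \ge 0}\gamma^t \Pr(s_t = \cdot)$, I would compute
\[
\mathbb E_{s \sim \rho}[B(s)] \le (1-\gamma)\sum_{t=0}^\infty \gamma^t\bigl(C_0 + L(t+n)\Delta\bigr) = C_0 + Ln\Delta + \frac{L\Delta\,\gamma}{1-\gamma},
\]
using $\sum_t \gamma^t = (1-\gamma)^{-1}$ and $\sum_t t\gamma^t = \gamma(1-\gamma)^{-2}$. Plugging back gives $RR(\backupp^\star, \pi_{\text{dmps}}) \le 2\gamma^n\bigl(C_0 + Ln\Delta + L\Delta\gamma/(1-\gamma)\bigr) + \epsilon_m$; letting $m \to \infty$, so that $\epsilon_m \to 0$ almost surely by asymptotic optimality, leaves a bound of order $n\gamma^n$, since $C_0, L, \Delta, \gamma$ are constants and the dominant $n$-dependence is the $2Ln\Delta\,\gamma^n$ term.

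The main obstacle, and the reason the bound is $\mathcal O(n\gamma^n)$ rather than $\mathcal O(\gamma^n)$, is precisely the unboundedness of the reachable region: without compactness, $|Q - Q^\star|$ can grow without limit as the agent drifts away from $\mathcal S_0$, so $B(s)$ is no longer uniformly bounded. The crucial insight is that discounting tames this growth — the $\sum_t t\gamma^t$ tail converges — while the surviving linear-in-$n$ term is the unavoidable cost of the Lipschitz error accumulated over the $n$ recovery steps. I would still need to check the minor measure-theoretic bookkeeping (that $B(s_t) \le C_0 + L(t+n)\Delta$ holds along every realizable trajectory, which is immediate since the displacement bound is deterministic), but I expect no difficulty there.
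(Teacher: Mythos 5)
Your proof is correct and takes essentially the same route as the paper's: both apply Lemma A.1 with a state-dependent bound $B$, control $B$ at a state visited at timestep $t$ by a constant plus a term linear in $(t+n)$ using the bounded per-step displacement together with Lipschitz continuity, and then integrate against the decomposition $\rho^{\pi_{\text{dmps}}} = (1-\gamma)\sum_{t\ge 0}\gamma^t \rho^{\pi_{\text{dmps}}}_t$, evaluating the same arithmetic-geometric series to obtain $\mathcal O(n\gamma^n)$ after $\epsilon_m \to 0$. The only cosmetic difference is that you bound the $L$-Lipschitz difference $Q - Q^\star$ directly by comparing to a point sharing the same action (so the diameter of $\mathcal A$ enters only through the constant $C_0$), whereas the paper routes $Q$ and $Q^\star$ separately through a fixed central state-action pair and carries an explicit $\sqrt{d_{\mathcal A}^2 + (\cdot)^2}$ term, cleaned up with a $(k+n)\ge 1$ factoring trick.
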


\begin{proof}
    Suppose that both $Q$ and $Q^\star$ are $K$-Lipschitz continuous. Let $d_{\mathcal A}$ be the diameter of $\mathcal A.$ Let $d_{\mathcal T}$ be the supremum of $\|\mathcal T(s,a) - s\|_2$ over $(s,a) \in \mathcal S \times \mathcal A.$ We fix some arbitrary $s \in \mathcal S_0$ and $a \in \mathcal A.$ Intuitively, this will act as a ``central'' point upon which we can estimate values for other states. Let $D = |Q(s,a) - Q^\star(s,a)|.$ Let $d_0$ be the radius of $\mathcal S_0$ when centered at $s.$

    We first prove a subclaim. Take an arbitrary $s' \in \mathcal S,$ and let $d$ be the $\ell_2$ distance between $s$ and $s'.$ Then,

    \[V^\star(s') - Q^\star(s', \backupp^\star(s')) \leq 2\gamma^n\left(D + 2K\sqrt{d_{\mathcal A}^2 + (d + nd_{\mathcal T})^2}\right) + \epsilon_m.\]

    The idea is to invoke Lemma A.1. To do so, we need to attain a bound on $B$ as defined by the Lemma. Consider a state $s''$ that is $n$ steps away from $s'.$ The Triangle Inequality lets us bound the distance between $s'$ and $s''$ to $d_{\mathcal T}.$ A second application of the Triangle Inequality lets us bound the distance between $s$ and $s''$ to $d + d_{\mathcal T}.$ We can further use the boundedness of the action space to note that the maximum distance between the state-action pair $(s,a)$ and $(s'',a'')$ for an arbitrary $a'' \in \mathcal A$ is $d'' = \sqrt{d_{\mathcal A}^2 + (d + nd_{\mathcal T})^2}.$ Finally, we use Lischpitz continuity to get $|Q^\star(s,a) - Q^\star(s'', a'')| \leq Kd''$ and $|Q(s,a) - Q(s'', a'')| \leq Kd''.$ With this, 
    
    \[|Q^\star(s'',a'') - Q(s'',a'')|\]
    \[\leq |Q^\star(s'',a'') - Q^\star(s,a)| + |Q(s'',a'') - Q(s,a)| + |Q^\star(s,a) - Q(s,a)|\]
    \[\leq 2Kd'' + D.\]

    Using this as a bound on $B$ and invoking Lemma A.1. demonstrates the subclaim.

    Now, consider some arbitrary trajectory $s_0, a_0, s_1, a_2, \dots, s_{k-1}, a_{k-1}, s_k,$ with $s_0 \in \mathcal S_0$ and some fixed non-negative integer $k.$ Via repeated application of the Triangle Inequality, one can bound the distance between $s$ and $s_k$ to at most $d_0 + kd_{\mathcal T}.$ Invoking the subclaim, we see that

    \[V^\star(s_k) - Q^\star(s_k, \backupp^\star(s_k)) \leq 2\gamma^n\left(D + 2K\sqrt{d_{\mathcal A}^2 + (d_0 + kd_{\mathcal T} + nd_{\mathcal T})^2}\right) + \epsilon_m.\]

    We can repeatedly use the fact that $k + n \geq 1$ to clean up the bound here. Namely, we write

    \[D + 2K\sqrt{d_{\mathcal A}^2 + (d_0 + kd_{\mathcal T} + nd_{\mathcal T})^2}\]
    \[\leq D(k+n) + 2K\sqrt{(k+n)^2d_{\mathcal A}^2 + ((k+n)d_0 + kd_{\mathcal T} + nd_{\mathcal T})^2}\]
    \[= (k+n)\left[D + 2K\sqrt{d_{\mathcal A}^2 + (d_0 + d_{\mathcal T})^2}\right].\]

    Setting $C = D + 2K\sqrt{d_{\mathcal A}^2 + (d_0 + d_{\mathcal T})^2},$ we can sum up the equations above by writing

    \[V^\star(s_k) - Q^\star(s_k, \backupp^\star(s_k)) \leq 2 C \gamma^n(k+n) + \epsilon_m.\]

    Now, we can finally bound $RR.$ Let $\rho^{\pi_{\text{dmps}}}$ be the discounted state visitation distribution. We define $\rho^{\pi_{\text{dmps}}}_t$ to be the state visitation distribution at timestep $t.$ We get

    \[RR(\backupp^\star, \pi_{\text{dmps}}) = \mathbb E_{s \sim \rho^{\pi_{\text{dmps}}}} [\mathbb I_{backup} \cdot (V^\star(s) - Q^\star(s,\backupp^\star(s)))]\]
    \[\leq \mathbb E_{s \sim \rho^{\pi_{\text{dmps}}}} [V^\star(s) - Q^\star(s,\backupp^\star(s))]\]
    \[= \int_{\mathcal S} [V^\star(s) - Q^\star(s,\backupp^\star(s))] \rho^{\pi_{\text{dmps}}}(s) ds\]
    \[= \int_{\mathcal S} [V^\star(s) - Q^\star(s,\backupp^\star(s))] \left[(1-\gamma)\sum_{k=0}^\infty \gamma^k\rho^{\pi_{\text{dmps}}}_k(s)\right] ds\]
    \[= \sum_{k=0}^\infty (1-\gamma)\gamma^k\int_{\mathcal S} [V^\star(s) - Q^\star(s,\backupp^\star)] \rho^{\pi_{\text{dmps}}}_k(s) ds\]
    \[= \sum_{k=0}^\infty (1-\gamma)\gamma^k\mathbb E_{s \sim \rho^{\pi_{\text{dmps}}}_k}[V^\star(s) - Q^\star(s,\backupp^\star(s))] \]
    \[\leq \sum_{k=0}^\infty (1-\gamma)\gamma^k [2 C \gamma^n(k+n) + \epsilon_m]\]
    \[= \left[2C(1-\gamma)\gamma^n\sum_{k=0}^\infty \gamma^k(k+n)\right] + \sum_{k=0}^\infty (1-\gamma)\gamma^k \epsilon_m\]
    \[= 2C(1-\gamma)\gamma^n\left[\left(\sum_{k=0}^\infty k\gamma^k\right) + n\sum_{k=0}^\infty \gamma^k \right] + \epsilon_m.\]

    We can evaluate the arithmetic-geometric series $\sum_{k=0}^\infty k\gamma^k$ to equal $\frac{\gamma}{(1-\gamma)^2}.$ The geometric sequence $\sum_{k=0}^\infty \gamma^k$ evaluates to $\frac{1}{1-\gamma}.$ Thus, our sum becomes

    \[2C(1-\gamma)\gamma^n\left[\frac{\gamma}{(1-\gamma)^2} + \frac{n}{1-\gamma} \right] + \epsilon_m = \mathcal O(n \gamma^n) + \epsilon_m.\]

    Due to asymptotic optimality, taking the limit of this as $m$ goes to infinity gives a bound of $\mathcal O(n \gamma^n).$
\end{proof}

\subsection{Monte Carlo Tree Search}
\label{app:mcts}

%In this section, we present the details of our implementation of Monte Carlo Tree Search (MCTS) algorithm used for recovery in \tool. Algorithms \autoref{algo:mcts} and \autoref{algo:expand} present the pseudo-algorithms for MCTS.

\begin{algorithm}[t]
\scriptsize
\caption{Monte Carlo Tree Search
(\texttt{planRec})}
\label{algo:mcts}
\begin{algorithmic}[1]

\STATE \textbf{Inputs:} ($s_1$: Start State), ($Q$: state-action function)
\STATE \textbf{Hyper-parameters:} ($K$: Branching Factor), ($I$: Iteration Count), ($L$: Maximum Path Length)

\STATE $\hat{Q} := \mathtt{initHashmap}(\mathcal{S} \times \mathcal{A} \rightarrow \mathbb{R}) $ \algcmt{Initialize empty data structure for state-action value estimates.}
\STATE $N := \mathtt{initHashmap}(\mathcal{S} \times \mathcal{A} \rightarrow \mathbb{N})$ \algcmt{Initialize empty data structure for state-action visit counts.}
\STATE $T := \mathtt{initTree}(s_1)$ \algcmt{Initialize a tree with start state $s_0$ as the root.}
\STATE $res := \texttt{expand}(s_1, T, \hat{Q}, N)$ \algcmt{Attempt expanding the tree root with actions leading to recoverable states.}
\STATE \textbf{if} $\neg res$ \textbf{then} \textbf{return} $\bot$ \algcmt{Return $\bot$ if the root state cannot be expanded.}
\FOR {$t \in [1,I]$}
    \STATE \!\!\!\algcmt{Select a path of length less than $L$ from the root using the Upper Confidence Bound (UCB) formula.}
    \STATE $(s_1, a_1, R_1), \dots, (s_{r-1}, a_{r-1}, R_{r-1}), s_r := \mathtt{selectPathUCB}(\hat{Q},L)$  
    \IF {$\mathtt{isLeaf}(s_r, T)$}
        \STATE $\texttt{expand}(s_r, T, \hat{Q}, Q, N)$ \algcmt{Expand the leaf state at the end of the path with actions leading to recoverable states.}
    \ENDIF
    \STATE $a_r := \mathtt{selectAction}(s_r,\hat{Q})$ \algcmt{Choose the best outgoing action from $s_r$ with respect to $\hat Q$.}
    %\FOR {$i \in [1, r-1]$}
    \FOR {$i \in [2,r)$}
        \STATE \!\!\!\algcmt{Update the state-action estimates with rewards collected on the selected path.}
        \STATE $\hat{Q}(s_i, a_i) := \frac{1}{N(s_i,a_i) + 1} \left[ N(s_i, a_i) \cdot \hat{Q}(s_i,a_i) + \left( \left(\sum_{j=i}^{r-1} \gamma^{j-i} \cdot R_j\right) + \gamma^{r-i} \cdot \hat{Q}(s_r, a_r) \right) \right]$
    \ENDFOR
    \FOR {$i \in [2,r)$}
        \STATE $N(s_i,a_i) := N(s_i,a_i) + 1$ \algcmt{Increment state-action visit count for the selected path.}
    \ENDFOR
\ENDFOR

\STATE $(s_1, a_1), (s_2, a_2), \dots, (s_n, a_n) := \mathtt{selectPathGreedy}(\hat{Q})$ \algcmt{Use $\hat Q$ to greedily select the optimal path from root to a leaf.}

\RETURN $(a_1, \dots, a_n)$ \algcmt{Return the actions on the greedily selected path.}

\end{algorithmic}
\end{algorithm}

\begin{algorithm}[t]
\scriptsize
\caption{Sampling-Based Tree Expansion (\texttt{expand})}
\label{algo:expand}
\begin{algorithmic}[1]
\STATE \textbf{Inputs:} ($s$: Expanding State), ($T$: State-Action Tree), ($\hat{Q}$: Local Value Estimate), (${Q}$: Long-term Value Function), ($N$: Visit Count)
\STATE $a_1, \dots, a_K \sim \mathcal A$ \algcmt{Sample $K$ actions.}
\STATE $success := False$
\FOR{$i\in[1,K]$}
    \IF{$\mathcal T(s,a_i)\in \mathcal S_{rec}$}
    \STATE $\hat{Q}(s, a_i) := Q(s, a_i)$ \algcmt{Initialize value estimate for $(s,a_i)$.}
    \STATE $N(s, a_i) := 1$ \algcmt{Record first visit for $(s,a_i)$ pair.}
    \STATE $\texttt{updateTree}(T, s, a_i)$ \algcmt{Add $(s,a_i)$ to the tree.}
    \STATE $success := True$ \algcmt{Record at least one recoverable action was found.}
    \ENDIF
\ENDFOR
\RETURN{$success$}
\end{algorithmic}
\end{algorithm}

The planning function  \texttt{planRec} used in \tool\ is realized via  Monte Carlo Tree Search (MCTS)~\cite{coulom2006efficient}, which facilitates efficient online planning by  balancing exploration and exploitation.
MCTS has been shown to be highly effective for planning in large search spaces with either discrete~\cite{schrittwieser2020mastering, silver2018general, silver2016mastering} or continuous~\cite{kujanpaa2022continuous, rajamaki2018continuous, couetoux2011continuous} states and actions. 
\tool\ utilizes a continuous variant of MCTS that employs sampling-based methods to manage the exponential increase in search space size as the planning horizon expands~\cite{hubert2021learning}. 
A high-level overview of this approach follows.  Algorithms \autoref{algo:mcts} and \autoref{algo:expand} present our implementation of MCTS.

%MCTS is shown to be highly effective for planning in large discrete~\cite{} or continuous~\cite{} search spaces.
%

%
%MCTS provides a principled approach for balancing exploration and exploitation for efficient online planning, and has demonstrated superior performance in various decision-making domains~\cite{}. 
%
%Specifically, we use a recently emerged variant of MCTS adapted for continuous state and action spaces~\cite{hubert2021learning, kujanpaa2022continuous, kim2020monte, lee2020monte, rajamaki2018continuous, couetoux2011continuous} which employs sample-based methods to deal with 
%exponential incrase in the search space with the planning horizon and action dimensionality.
%
%Following is a high-level description of this approach. A detailed description of our implementation can be found in Appendix~\ref{app:mcts}.

Starting from a given state $s_0$
  as the root node, the function \texttt{planRec} maintains and iteratively expands a tree structure, with states as nodes and actions as edges.
During each iteration, the algorithm selects a path from the root to a leaf node. Upon reaching a leaf node, it extends the tree by sampling a number ($k$) of actions from the leaf, thereby adding new nodes and edges to the tree.
The path selection process is based on the Upper Confidence Bound (UCB) formula \cite{auer2002using}, which utilizes  an internal representation of the state-action values at each tree node (denoted by $\hat{Q}:\mathcal S\times\mathcal A\rightarrow \reall$), to balance exploration of less frequently visited paths with exploitation of paths known to yield high returns. 
Once a path is selected, the algorithm updates the value of $\hat Q$ for each node on the selected path by backpropagating the accumulated rewards from subsequent nodes. These updates are averaged by each state-action's visit count to achieve more precise estimates. 

After a specified number of iterations ($I$), the algorithm uses $\hat{Q}$ to greedily select an optimal path from the root to a leaf node and returns the corresponding sequence of actions as its final result.

%The function $\mathtt{planRec}$ defined in \autoref{algo:mcts} begins by initializing data structures to maintain local estimate of the state-action value function and a visitation count for each state-action pair. During each iteration, the algorithm selects a path on the current tree using Upper Confidence Bound formula allowing it to effectively balance exploration and exploitation of paths known to yield high return. Once a leaf node is found, the algorithm invokes $\mathtt{expand}$ function, defined in \autoref{algo:expand}, to sample $k$ extra actions from the selected leaf node and add them to the tree. Then in Lines~14-16 the $\mathttt{planRec}$ function updates its internal representation of the state-action values ($\hat Q$) using the actual reward 

\subsection{Additional Experimental Results}
\label{app:eval}

\subsubsection{Benchmarks}

A detailed description of benchmarks used in our evaluation is presented below:
\setlength{\figheight}{33mm} % You can adjust the value as needed
\begin{figure}[t]
    \begin{subfigure}[b]{0.2\textwidth}
    \centering
        \includegraphics[height=0.42\figheight, keepaspectratio]{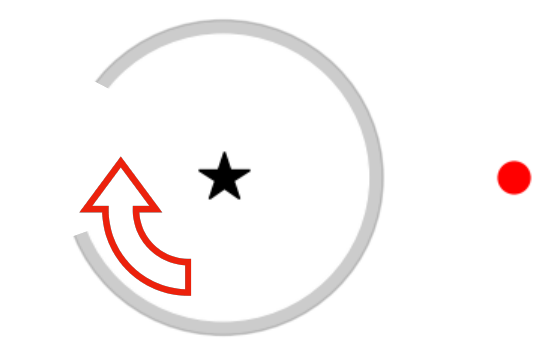}
        \caption{\texttt{gate}}
        \label{subfig:gates}
    \end{subfigure}
         \hfill
    \begin{subfigure}[b]{0.38\textwidth}
    \centering
        \includegraphics[height=\figheight, keepaspectratio]{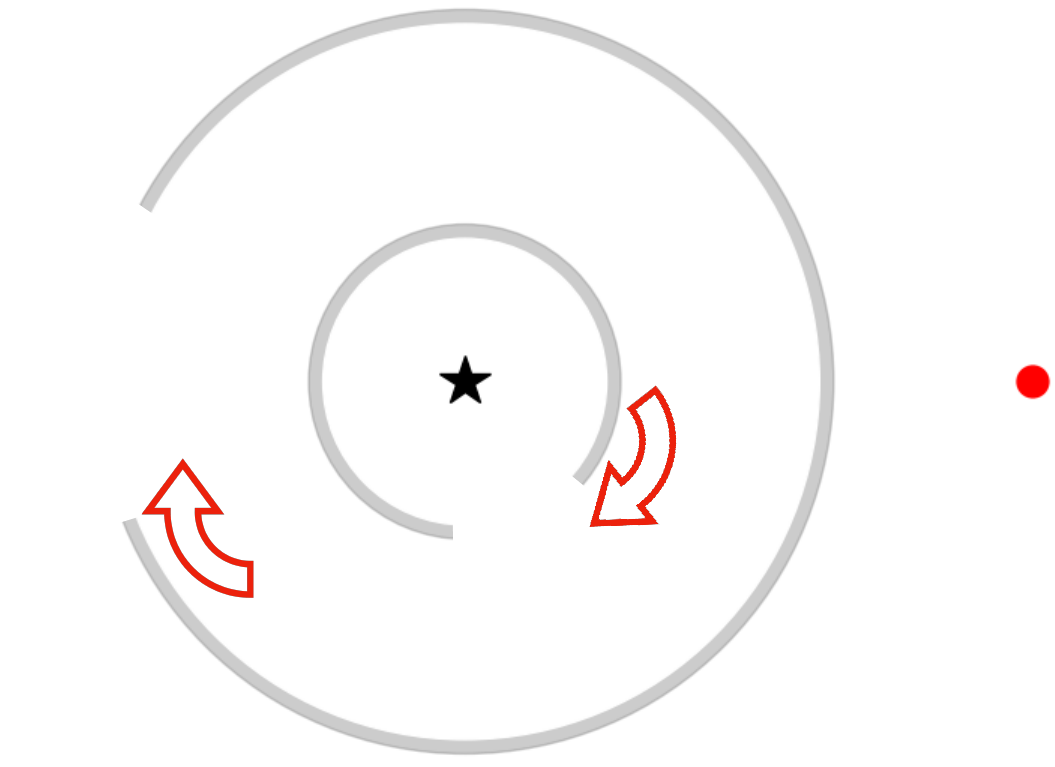}
        \caption{\texttt{double-gates}}
        \label{subfig:double_gates}
    \end{subfigure}
     \hfill
    \begin{subfigure}[b]{0.38\textwidth}
    \centering
        \includegraphics[height=\figheight, keepaspectratio]{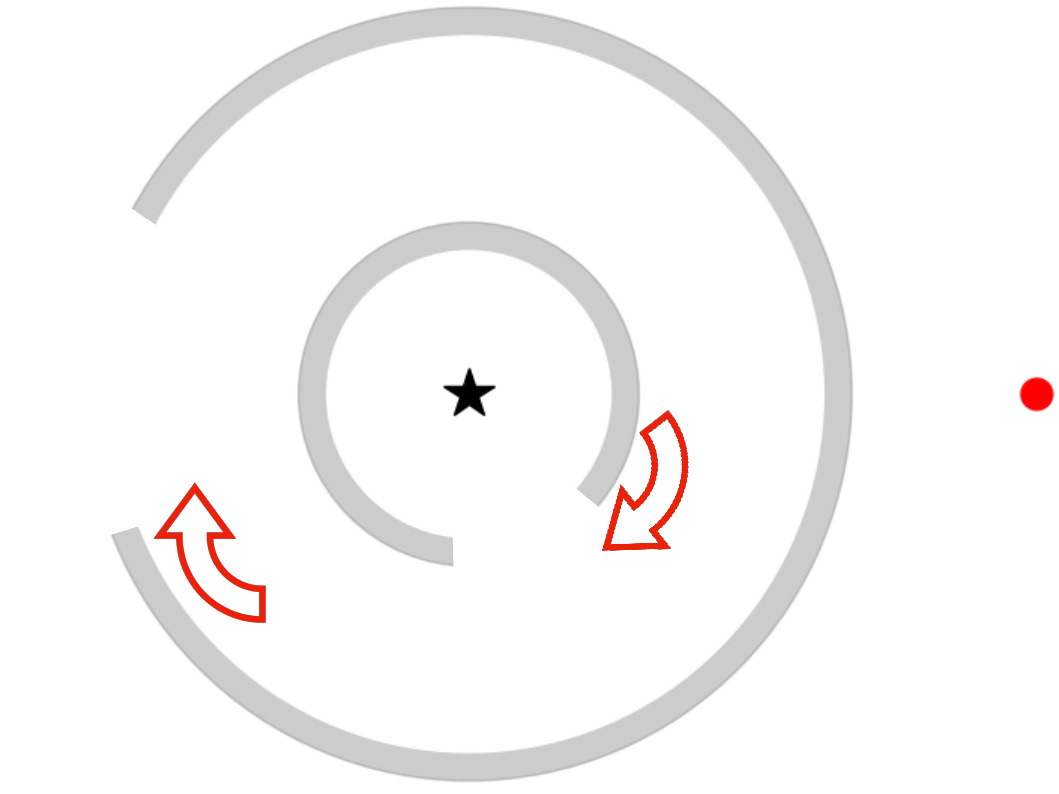}
        \caption{\texttt{double-gates+}}
        \label{subfig:thick_double_gates}
    \end{subfigure}
\caption{Visualization of the dynamic environments. The agent is depicted as a red circle. The direction of rotation of the walls is shown with red arrows. The goal position is shown with $\star$.}
    \label{fig:gates}
\end{figure}
\begin{itemize}
    \item \texttt{obstacle}: This benchmark involves a robot moving on a 2D plane, aiming to reach a goal position without colliding with a stationary obstacle. This obstacle is positioned to the side, impacting the robot only during its exploration phase and not on the direct, shortest route to the goal.
    \item \texttt{obstacle2}: This benchmark involves a robot moving on a 2D plane, aiming to reach a goal position without colliding with a stationary obstacle. This obstacle is positioned between the starting point and the goal, requiring the robot to learn how to maneuver around it.
    \item \texttt{mount-car}: This benchmark requires moving an underpowered vehicle up a hill without falling off the other side.
%    \item \texttt{pendulum}: This benchmark requires swinging a pendulum up until it is vertical, with a safety bound on the angular velocity.
    \item \texttt{road}: This benchmark requires controlling an autonomous car to move in one dimension to a specified end position while obeying a given speed limit.
    \item \texttt{road2d}: This benchmark requires controlling an autonomous car to move in two dimensions to a specified end position while obeying a given speed limit.
    \item \texttt{dynamic-obs}: This benchmark features multiple non-stationary obstacles that move deterministically in small circles on the path from the agent to the goal.
    \item \texttt{single-gate}: In this benchmark, the goal position is surrounded by a circular wall with a small opening, allowing the agent to pass through. The position of the opening continuously rotates around the goal position. This is visualized in Fig.~\ref{subfig:gates}.
    \item \texttt{double-gates}: This benchmark is similar to \texttt{single-gate}; however, the goal position is surrounded by two concentric rotating walls. This is visualized in Fig.~\ref{subfig:double_gates}.
    \item \texttt{double-gates+}: This benchmark is similar to \texttt{double-gates}; however, the thickness of the rotating walls is increased. This poses a greater challenge, as the agent has a shorter time window to cross through the opening without colliding with the rotating wall. This is visualized in Fig.~\ref{subfig:thick_double_gates}.
\end{itemize}

For the dynamic benchmarks above (\texttt{dynamic-obs}, \texttt{single-gate}, \texttt{double-gate}, and \texttt{double-gate+}), the action space of the double integrator agent consists of acceleration in the $x$ and $y$ directions. The action space for the differential drive agent consists of the torque value on the left wheel and the torque value on the right wheel.

Additionally, the observation space of benchmarks with a double integrator agent consists of the position and velocity vectors of the agent, while for a differential drive agent, the observation space includes position, velocity, and pose angle. The observation space for \texttt{single-gate}, \texttt{double-gate}, and \texttt{double-gate+} also contains an angle term for each wall, corresponding to the current rotation of each wall.
The observation space in \texttt{dynamic-obs} additionally includes an angle term for each obstacle, indicating how far each obstacle has moved along its circular trajectory.

\subsubsection{Implementation}
\label{subsubsec:impl}
We implemented \tool\ by modifying the Twin Delayed Deep Deterministic Policy Gradient (TD3) algorithm~\cite{fujimoto2018addressing}, following the description presented in Algorithm~\autoref{algo:training}.

Each of the dynamic benchmarks were trained for 200,000 timesteps with a maximum episode length of 500. 
The static environments were trained for the number of timesteps prescribed by the sources of the environments. Namely, \texttt{mount-car} was trained for 200,000 timesteps with a maximum episode length of 999~\cite{brockman2016openai}, \texttt{obstacle} and \texttt{obstacle2} were trained for 400,000 timesteps with a maximum episode length of 200~\cite{anderson2020neurosymbolic}, and \texttt{road} and \texttt{road2d} were trained for 100,000 timesteps with a maximum episode length of 100~\cite{anderson2020neurosymbolic}. Each experiment was run on five independent seeds. We used prior implementations of \texttt{REVEL}~\cite{anderson2020neurosymbolic}, \texttt{PPO-Lag}~\cite{openai_safety_gym}, and \texttt{CPO}~\cite{openai_safety_gym} to run our experiments. Trained models were test evaluated every 10K timesteps, independently from the training loop. Each test evaluation consisted of 10 runs. The final test evaluation was used to determine the average per-episode return and per-episode shield invocation rate for that random seed. These are the values displayed in the tables from \autoref{sec:eval}.

The reward functions used in \cite{anderson2020neurosymbolic} for the static environments are non-standard, and based upon a cost-based framework of reward. In light of this, we used the canonical reward functions for our evaluations. For \texttt{mount-car}, we use the original reward function defined in~\cite{brockman2016openai}.
For \texttt{obstacle}, \texttt{obstacle2}, \texttt{road}, and \texttt{road2d}, we use the canonical goal environment shaped reward function.

The negative penalty for safety violations in \texttt{TD3}  was taken to be large enough so that the agent could not move through obstacles and still maintain positive reward. In most cases, the penalty was simply the negation of the positive reward incurred upon successfully completing the environment. The episode did not terminate upon the first unsafe action. For \texttt{CPO}, we reduced tolerance for safety violations by reducing the parameter for number of acceptable violations to 1.

Our experiments were conducted on a server with 64 available Intel Xeon
Gold 5218 CPUs @2.30GHz, 264GB of available memory, and eight NVIDIA GeForce RTX 2080 Ti GPUs. Each benchmark ran on one CPU thread and on one GPU.

%
%For each dynamic benchmark, training consists of 200,000 environment interactions with a maximum episode length of 500. The static environments were trained for the number of timesteps prescribed by their sources.
%

%
%More details on the implementation and the libraries utilized is presented in Appendix~\ref{app:eval}.

\subsection{Analysis of Computational Overhead}
\label{app:comp}

\begin{figure}[t]
    \centering
        \includegraphics[height=70mm, keepaspectratio]{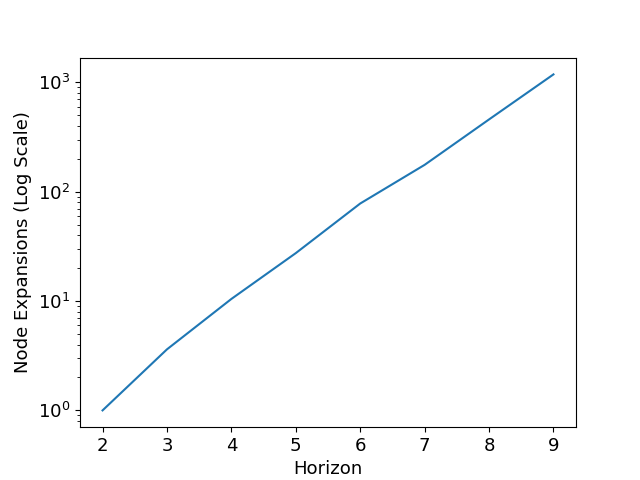}
        \caption{Planner Computation Scaling 
        }
        \label{fig:compute}
\end{figure}

As mentioned in the main text, our implementation uses MCTS in an anytime planning setting, where we devote a fixed number of node expansions to searching for a plan, and we choose the best plan found at the end of the procedure. This bounds the clock time needed for planning linearly in the compute budget allocated. However, in the worst case, one may need a compute budget exponential in size with respect to the planning horizon $H$ if one wishes to explore the planning state densely. This is common to all general planners.

Here, we try to experimentally evaluate how the required compute budget scales as a function of the planning horizon. We re-evaluate the \texttt{double-gates+} environment (under double integrator dynamics). For each planning horizon $H$ in range $[2,9],$ we count the average number of node expansions that MCTS needs before it successfully explores 10 states at depth $H.$ We use this as a proxy for successful exploration of the horizon $H$ search space. The results of this experiment can be found in \autoref{fig:compute}. As expected, an exponential relationship emerges.

\subsection{Effect of Planning Horizon}
\label{app:plan}

\setlength{\figheight}{50mm} % You can adjust the value as needed
\begin{figure}[t]
    \begin{subfigure}[b]{0.5\textwidth}
    \centering
        \includegraphics[height=\figheight, keepaspectratio]{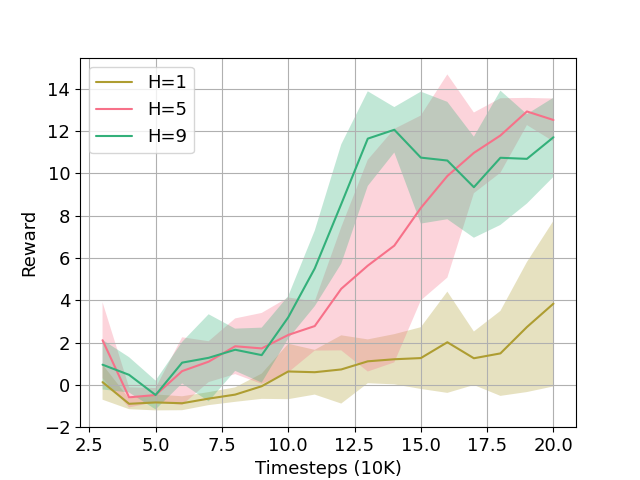}
        \label{subfig:hrew}
    \end{subfigure}
         \hfill
    \begin{subfigure}[b]{0.5\textwidth}
    \centering
        \includegraphics[height=\figheight, keepaspectratio]{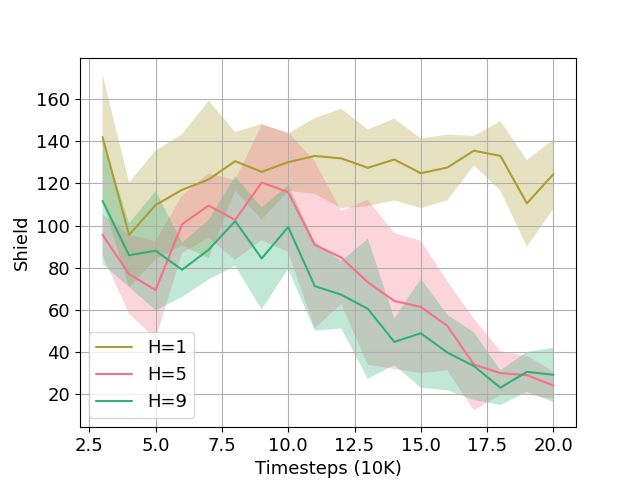}
        \label{subfig:hshield}
    \end{subfigure}
\caption{Multiple Horizons Experiments}
    \label{fig:hexperiments}
\end{figure}

We expand on the question of whether small planning horizons are sufficient to solve tricky planning problems. Our algorithm is designed to ensure that the planner accounts for both short-term and long-term objectives. As detailed in \autoref{sec:drp}, the objective of the planner consists of two terms: 1) the discounted sum of the first $n$ rewards, and 2) the estimated long-term value of the penultimate step in the plan horizon, as determined by the agent’s $Q$ function. The second part of the objective function is specifically included to ensure that the planner does not return myopic plans, and accounts for progress towards the long-horizon goal. Since the planner optimization objective includes this second term, even a small-horizon planner can output actions with proper awareness of long-horizon recovery events. The length of the horizon affects how close to the globally optimal solution the result is, with a tradeoff of computational cost, as was established in \autoref{th:regret}.

To see this empirically, we reevaluated the \texttt{double-gates+} environment (double integrator dynamics) with horizons of 1, 5, and 9. The graphs of attained reward and shield triggers from this experiment are shown in \autoref{fig:hexperiments}. 

Comparing the $H=1$ and $H=5$ agents, the $H=5$ agent substantially outperforms the $H=1$ agent in both shield triggers and reward. Comparing $H=5$ and $H=9$, the $H=9$ agent reached high performance and low shield triggers faster than the $H=5$ agent. However, both the $H=5$ and $H=9$ agents converge to the same performance eventually.

\newpage
\section*{NeurIPS Paper Checklist}

\begin{enumerate}

\item {\bf Claims}
    \item[] Question: Do the main claims made in the abstract and introduction accurately reflect the paper's contributions and scope?
    \item[] Answer: \answerYes{} % Replace by \answerYes{}, \answerNo{}, or \answerNA{}.
    \item[] Justification: 
    Our main claim in this paper is that MPS, a state-of-the-art framework for Provably Safe RL, has a significant shortcoming due to its inability to recover from potentially unsafe situations while making task progress. % 
    We introduce a solution for this problem and claim that it adequately resolves the shortcoming of MPS. % 
    Throughout the paper, we extensively elaborate on the nature of MPS's shortcoming. We present our solution based on a local planner and provide a theoretical analysis explaining the asymptotic optimality of our solution. % 
    Our experiments validate our claims that our approach achieves superior performance compared to MPS. We also show that our approach is superior to several other well-known SRL and PSRL methods.

    \item[] Guidelines:
    \begin{itemize}
        \item The answer NA means that the abstract and introduction do not include the claims made in the paper.
        \item The abstract and/or introduction should clearly state the claims made, including the contributions made in the paper and important assumptions and limitations. A No or NA answer to this question will not be perceived well by the reviewers. 
        \item The claims made should match theoretical and experimental results, and reflect how much the results can be expected to generalize to other settings. 
        \item It is fine to include aspirational goals as motivation as long as it is clear that these goals are not attained by the paper. 
    \end{itemize}

\item {\bf Limitations}
    \item[] Question: Does the paper discuss the limitations of the work performed by the authors?
    \item[] Answer: \answerYes{}
    \item[] Justification: We present a discussion of the limitations of our approach in \autoref{sec:conc}. In summary, our approach requires an accurate world model, and the computational overhead of planning increases exponentially with the planning horizon.
    \item[] Guidelines:
    \begin{itemize}
        \item The answer NA means that the paper has no limitation while the answer No means that the paper has limitations, but those are not discussed in the paper. 
        \item The authors are encouraged to create a separate "Limitations" section in their paper.
        \item The paper should point out any strong assumptions and how robust the results are to violations of these assumptions (e.g., independence assumptions, noiseless settings, model well-specification, asymptotic approximations only holding locally). The authors should reflect on how these assumptions might be violated in practice and what the implications would be.
        \item The authors should reflect on the scope of the claims made, e.g., if the approach was only tested on a few datasets or with a few runs. In general, empirical results often depend on implicit assumptions, which should be articulated.
        \item The authors should reflect on the factors that influence the performance of the approach. For example, a facial recognition algorithm may perform poorly when image resolution is low or images are taken in low lighting. Or a speech-to-text system might not be used reliably to provide closed captions for online lectures because it fails to handle technical jargon.
        \item The authors should discuss the computational efficiency of the proposed algorithms and how they scale with dataset size.
        \item If applicable, the authors should discuss possible limitations of their approach to address problems of privacy and fairness.
        \item While the authors might fear that complete honesty about limitations might be used by reviewers as grounds for rejection, a worse outcome might be that reviewers discover limitations that aren't acknowledged in the paper. The authors should use their best judgment and recognize that individual actions in favor of transparency play an important role in developing norms that preserve the integrity of the community. Reviewers will be specifically instructed to not penalize honesty concerning limitations.
    \end{itemize}

\item {\bf Theory Assumptions and Proofs}
    \item[] Question: For each theoretical result, does the paper provide the full set of assumptions and a complete (and correct) proof?
    \item[] Answer: \answerYes{}
    \item[] Justification: In this paper, we present make theoretical contribution, which is presented in \autoref{th:regret}. We have included a complete, correct and detailed proof for this theorem in Appendix~\ref{app:proof}. 
    \item[] Guidelines:
    \begin{itemize}
        \item The answer NA means that the paper does not include theoretical results. 
        \item All the theorems, formulas, and proofs in the paper should be numbered and cross-referenced.
        \item All assumptions should be clearly stated or referenced in the statement of any theorems.
        \item The proofs can either appear in the main paper or the supplemental material, but if they appear in the supplemental material, the authors are encouraged to provide a short proof sketch to provide intuition. 
        \item Inversely, any informal proof provided in the core of the paper should be complemented by formal proofs provided in appendix or supplemental material.
        \item Theorems and Lemmas that the proof relies upon should be properly referenced. 
    \end{itemize}

    \item {\bf Experimental Result Reproducibility}
    \item[] Question: Does the paper fully disclose all the information needed to reproduce the main experimental results of the paper to the extent that it affects the main claims and/or conclusions of the paper (regardless of whether the code and data are provided or not)?
    \item[] Answer: \answerYes{}
    \item[] Justification: 
    We formally define all objectives solved by our solution and present detailed algorithms implementing the main approach in Algorithms \autoref{algo:training}, \autoref{algo:mcts}, and \autoref{algo:expand}. Our submission also includes the source code of our implementations and scripts for reproducing the results. We also plan to create an open-source repository and make our implementation publicly available.

    \item[] Guidelines:
    \begin{itemize}
        \item The answer NA means that the paper does not include experiments.
        \item If the paper includes experiments, a No answer to this question will not be perceived well by the reviewers: Making the paper reproducible is important, regardless of whether the code and data are provided or not.
        \item If the contribution is a dataset and/or model, the authors should describe the steps taken to make their results reproducible or verifiable. 
        \item Depending on the contribution, reproducibility can be accomplished in various ways. For example, if the contribution is a novel architecture, describing the architecture fully might suffice, or if the contribution is a specific model and empirical evaluation, it may be necessary to either make it possible for others to replicate the model with the same dataset, or provide access to the model. In general. releasing code and data is often one good way to accomplish this, but reproducibility can also be provided via detailed instructions for how to replicate the results, access to a hosted model (e.g., in the case of a large language model), releasing of a model checkpoint, or other means that are appropriate to the research performed.
        \item While NeurIPS does not require releasing code, the conference does require all submissions to provide some reasonable avenue for reproducibility, which may depend on the nature of the contribution. For example
        \begin{enumerate}
            \item If the contribution is primarily a new algorithm, the paper should make it clear how to reproduce that algorithm.
            \item If the contribution is primarily a new model architecture, the paper should describe the architecture clearly and fully.
            \item If the contribution is a new model (e.g., a large language model), then there should either be a way to access this model for reproducing the results or a way to reproduce the model (e.g., with an open-source dataset or instructions for how to construct the dataset).
            \item We recognize that reproducibility may be tricky in some cases, in which case authors are welcome to describe the particular way they provide for reproducibility. In the case of closed-source models, it may be that access to the model is limited in some way (e.g., to registered users), but it should be possible for other researchers to have some path to reproducing or verifying the results.
        \end{enumerate}
    \end{itemize}

\item {\bf Open access to data and code}
    \item[] Question: Does the paper provide open access to the data and code, with sufficient instructions to faithfully reproduce the main experimental results, as described in supplemental material?
    \item[] Answer: \answerYes{}
    \item[] Justification: We provide the code necessary to run our own method (DMPS) and MPS, as well as implementations of the environments on which the algorithms were run. All other baselines are publically available.
    \item[] Guidelines:
    \begin{itemize}
        \item The answer NA means that paper does not include experiments requiring code.
        \item Please see the NeurIPS code and data submission guidelines (\url{https://nips.cc/public/guides/CodeSubmissionPolicy}) for more details.
        \item While we encourage the release of code and data, we understand that this might not be possible, so “No” is an acceptable answer. Papers cannot be rejected simply for not including code, unless this is central to the contribution (e.g., for a new open-source benchmark).
        \item The instructions should contain the exact command and environment needed to run to reproduce the results. See the NeurIPS code and data submission guidelines (\url{https://nips.cc/public/guides/CodeSubmissionPolicy}) for more details.
        \item The authors should provide instructions on data access and preparation, including how to access the raw data, preprocessed data, intermediate data, and generated data, etc.
        \item The authors should provide scripts to reproduce all experimental results for the new proposed method and baselines. If only a subset of experiments are reproducible, they should state which ones are omitted from the script and why.
        \item At submission time, to preserve anonymity, the authors should release anonymized versions (if applicable).
        \item Providing as much information as possible in supplemental material (appended to the paper) is recommended, but including URLs to data and code is permitted.
    \end{itemize}

\item {\bf Experimental Setting/Details}
    \item[] Question: Does the paper specify all the training and test details (e.g., data splits, hyperparameters, how they were chosen, type of optimizer, etc.) necessary to understand the results?
    \item[] Answer: \answerYes{} % Replace by \answerYes{}, \answerNo{}, or \answerNA{}.
    \item[] Justification: In Appendix~\ref{subsubsec:impl} we present the specifics of our implementation and experimental setup in detail. This includes the hyperparameters, the training and test set details, the number of runs, and the hardware used for experimentation.
    \item[] Guidelines:
    \begin{itemize}
        \item The answer NA means that the paper does not include experiments.
        \item The experimental setting should be presented in the core of the paper to a level of detail that is necessary to appreciate the results and make sense of them.
        \item The full details can be provided either with the code, in appendix, or as supplemental material.
    \end{itemize}

\item {\bf Experiment Statistical Significance}
    \item[] Question: Does the paper report error bars suitably and correctly defined or other appropriate information about the statistical significance of the experiments?
    \item[] Answer: \answerYes{}% Replace by \answerYes{}, \answerNo{}, or \answerNA{}.
    \item[] Justification: The statistical significance of all presented empirical results is justified. In particular, we include standard deviation values for all results presented in \autoref{tab:safety} and \autoref{tab:perf}. Additionally, we include confidence belts in all plots presented in \autoref{fig:shield_plots} and \autoref{fig:perf_plots}.
    \item[] Guidelines:
    \begin{itemize}
        \item The answer NA means that the paper does not include experiments.
        \item The authors should answer "Yes" if the results are accompanied by error bars, confidence intervals, or statistical significance tests, at least for the experiments that support the main claims of the paper.
        \item The factors of variability that the error bars are capturing should be clearly stated (for example, train/test split, initialization, random drawing of some parameter, or overall run with given experimental conditions).
        \item The method for calculating the error bars should be explained (closed form formula, call to a library function, bootstrap, etc.)
        \item The assumptions made should be given (e.g., Normally distributed errors).
        \item It should be clear whether the error bar is the standard deviation or the standard error of the mean.
        \item It is OK to report 1-sigma error bars, but one should state it. The authors should preferably report a 2-sigma error bar than state that they have a 96\% CI, if the hypothesis of Normality of errors is not verified.
        \item For asymmetric distributions, the authors should be careful not to show in tables or figures symmetric error bars that would yield results that are out of range (e.g. negative error rates).
        \item If error bars are reported in tables or plots, The authors should explain in the text how they were calculated and reference the corresponding figures or tables in the text.
    \end{itemize}

\item {\bf Experiments Compute Resources}
    \item[] Question: For each experiment, does the paper provide sufficient information on the computer resources (type of compute workers, memory, time of execution) needed to reproduce the experiments?
    \item[] Answer: \answerYes{} % Replace by \answerYes{}, \answerNo{}, or \answerNA{}.
    \item[] Justification: We present the details of the hardware and other resources used for our experiments in Appendix~\ref{subsubsec:impl}.
    \item[] Guidelines:
    \begin{itemize}
        \item The answer NA means that the paper does not include experiments.
        \item The paper should indicate the type of compute workers CPU or GPU, internal cluster, or cloud provider, including relevant memory and storage.
        \item The paper should provide the amount of compute required for each of the individual experimental runs as well as estimate the total compute. 
        \item The paper should disclose whether the full research project required more compute than the experiments reported in the paper (e.g., preliminary or failed experiments that didn't make it into the paper). 
    \end{itemize}
    
\item {\bf Code Of Ethics}
    \item[] Question: Does the research conducted in the paper conform, in every respect, with the NeurIPS Code of Ethics \url{https://neurips.cc/public/EthicsGuidelines}?
    \item[] Answer: \answerYes{} % Replace by \answerYes{}, \answerNo{}, or \answerNA{}.
    \item[] Justification: We have reviewed the NeurIPS Code of Ethics carefully and affirm that our work adheres to all listed requirements.
    \item[] Guidelines:
    \begin{itemize}
        \item The answer NA means that the authors have not reviewed the NeurIPS Code of Ethics.
        \item If the authors answer No, they should explain the special circumstances that require a deviation from the Code of Ethics.
        \item The authors should make sure to preserve anonymity (e.g., if there is a special consideration due to laws or regulations in their jurisdiction).
    \end{itemize}

\item {\bf Broader Impacts}
    \item[] Question: Does the paper discuss both potential positive societal impacts and negative societal impacts of the work performed?
    \item[] Answer: \answerYes{} % Replace by \answerYes{}, \answerNo{}, or \answerNA{}.
    \item[] Justification: To the best of our knowledge, our work cannot be misused in any way to cause any form of negative social impact. We have included a discussion of the potentially positive social impacts of our work in \autoref{sec:conc}.
    \item[] Guidelines:
    \begin{itemize}
        \item The answer NA means that there is no societal impact of the work performed.
        \item If the authors answer NA or No, they should explain why their work has no societal impact or why the paper does not address societal impact.
        \item Examples of negative societal impacts include potential malicious or unintended uses (e.g., disinformation, generating fake profiles, surveillance), fairness considerations (e.g., deployment of technologies that could make decisions that unfairly impact specific groups), privacy considerations, and security considerations.
        \item The conference expects that many papers will be foundational research and not tied to particular applications, let alone deployments. However, if there is a direct path to any negative applications, the authors should point it out. For example, it is legitimate to point out that an improvement in the quality of generative models could be used to generate deepfakes for disinformation. On the other hand, it is not needed to point out that a generic algorithm for optimizing neural networks could enable people to train models that generate Deepfakes faster.
        \item The authors should consider possible harms that could arise when the technology is being used as intended and functioning correctly, harms that could arise when the technology is being used as intended but gives incorrect results, and harms following from (intentional or unintentional) misuse of the technology.
        \item If there are negative societal impacts, the authors could also discuss possible mitigation strategies (e.g., gated release of models, providing defenses in addition to attacks, mechanisms for monitoring misuse, mechanisms to monitor how a system learns from feedback over time, improving the efficiency and accessibility of ML).
    \end{itemize}
    
\item {\bf Safeguards}
    \item[] Question: Does the paper describe safeguards that have been put in place for responsible release of data or models that have a high risk for misuse (e.g., pretrained language models, image generators, or scraped datasets)?
    \item[] Answer: \answerNA{} % Replace by \answerYes{}, \answerNo{}, or \answerNA{}.
    \item[] Justification: To the best of our knowledge, our work does not pose a risk for misuse.
    \item[] Guidelines: 
    \begin{itemize}
        \item The answer NA means that the paper poses no such risks.
        \item Released models that have a high risk for misuse or dual-use should be released with necessary safeguards to allow for controlled use of the model, for example by requiring that users adhere to usage guidelines or restrictions to access the model or implementing safety filters. 
        \item Datasets that have been scraped from the Internet could pose safety risks. The authors should describe how they avoided releasing unsafe images.
        \item We recognize that providing effective safeguards is challenging, and many papers do not require this, but we encourage authors to take this into account and make a best faith effort.
    \end{itemize}

\item {\bf Licenses for existing assets}
    \item[] Question: Are the creators or original owners of assets (e.g., code, data, models), used in the paper, properly credited and are the license and terms of use explicitly mentioned and properly respected?
    \item[] Answer: \answerYes{} % Replace by \answerYes{}, \answerNo{}, or \answerNA{}.
    \item[] Justification: We have only used publicly available open-source libraries in our implementation and credited all third-party sources accordingly.
    \item[] Guidelines:
    \begin{itemize}
        \item The answer NA means that the paper does not use existing assets.
        \item The authors should cite the original paper that produced the code package or dataset.
        \item The authors should state which version of the asset is used and, if possible, include a URL.
        \item The name of the license (e.g., CC-BY 4.0) should be included for each asset.
        \item For scraped data from a particular source (e.g., website), the copyright and terms of service of that source should be provided.
        \item If assets are released, the license, copyright information, and terms of use in the package should be provided. For popular datasets, \url{paperswithcode.com/datasets} has curated licenses for some datasets. Their licensing guide can help determine the license of a dataset.
        \item For existing datasets that are re-packaged, both the original license and the license of the derived asset (if it has changed) should be provided.
        \item If this information is not available online, the authors are encouraged to reach out to the asset's creators.
    \end{itemize}

\item {\bf New Assets}
    \item[] Question: Are new assets introduced in the paper well documented and is the documentation provided alongside the assets?
    \item[] Answer: \answerNA{} % Replace by \answerYes{}, \answerNo{}, or \answerNA{}.
    \item[] Justification: We do not release any new assets with this paper. 
    \item[] Guidelines:
    \begin{itemize}
        \item The answer NA means that the paper does not release new assets.
        \item Researchers should communicate the details of the dataset/code/model as part of their submissions via structured templates. This includes details about training, license, limitations, etc. 
        \item The paper should discuss whether and how consent was obtained from people whose asset is used.
        \item At submission time, remember to anonymize your assets (if applicable). You can either create an anonymized URL or include an anonymized zip file.
    \end{itemize}

\item {\bf Crowdsourcing and Research with Human Subjects}
    \item[] Question: For crowdsourcing experiments and research with human subjects, does the paper include the full text of instructions given to participants and screenshots, if applicable, as well as details about compensation (if any)? 
    \item[] Answer: \answerNA{} % Replace by \answerYes{}, \answerNo{}, or \answerNA{}.
    \item[] Justification: This paper does not involve crowdsourcing nor research with human subjects.
    \item[] Guidelines:
    \begin{itemize}
        \item The answer NA means that the paper does not involve crowdsourcing nor research with human subjects.
        \item Including this information in the supplemental material is fine, but if the main contribution of the paper involves human subjects, then as much detail as possible should be included in the main paper. 
        \item According to the NeurIPS Code of Ethics, workers involved in data collection, curation, or other labor should be paid at least the minimum wage in the country of the data collector. 
    \end{itemize}

\item {\bf Institutional Review Board (IRB) Approvals or Equivalent for Research with Human Subjects}
    \item[] Question: Does the paper describe potential risks incurred by study participants, whether such risks were disclosed to the subjects, and whether Institutional Review Board (IRB) approvals (or an equivalent approval/review based on the requirements of your country or institution) were obtained?
    \item[] Answer: \answerNA{} % Replace by \answerYes{}, \answerNo{}, or \answerNA{}.
    \item[] Justification: This paper does not involve crowdsourcing nor research with human subjects.
    \item[] Guidelines:
    \begin{itemize}
        \item The answer NA means that the paper does not involve crowdsourcing nor research with human subjects.
        \item Depending on the country in which research is conducted, IRB approval (or equivalent) may be required for any human subjects research. If you obtained IRB approval, you should clearly state this in the paper. 
        \item We recognize that the procedures for this may vary significantly between institutions and locations, and we expect authors to adhere to the NeurIPS Code of Ethics and the guidelines for their institution. 
        \item For initial submissions, do not include any information that would break anonymity (if applicable), such as the institution conducting the review.
    \end{itemize}

\end{enumerate}

% \setlength{\figheight}{36mm} % Adjust the value as needed

% \centering
% \begin{figure}[h]
% %\vspace{-0.2in}
%             \begin{subfigure}[b]{0.31\textwidth}
%             \centering
%             \makebox[0pt]{\includegraphics[height=\figheight, keepaspectratio]{files/Compute_Log_rebuttal.png}  }
%             \caption*{Figure 1: Planner computation scaling.}
%             \label{rebut:compute}
%         \end{subfigure}
%         \hfill
%         \begin{subfigure}[b]{0.31\textwidth}
%             \centering
%             \makebox[0pt]{\includegraphics[height=\figheight, keepaspectratio]{files/ThickDoubleGate_DoubleIntegrator (1)_rebuttal.png}}
%             \caption*{Figure 2(a): \texttt{double-gates+} (double integrator)  shield invocations on multiple horizons.}
%             \label{rebut:shield}
%         \end{subfigure}
%         \hfill
%         \begin{subfigure}[b]{0.31\textwidth}
%             \centering
%             \makebox[0pt]{\includegraphics[height=\figheight, keepaspectratio]{files/ThickDoubleGate_DoubleIntegrator_rebuttal.png}}
%             \caption*{Figure 2(b): \texttt{double-gates+} (double integrator) rewards on multiple horizons.}
%             \label{rebut:reward}
%         \end{subfigure}
%         %\caption*{Figure }
% \end{figure}
% \vspace{10mm}
% \input{files/safety_table_rebuttal}
% \vspace{10mm}
% \input{files/performance_table_rebuttal}

\end{document}